\documentclass[twoside]{article}

\usepackage[accepted]{aistats2020}
%
%


\setlength{\pdfpageheight}{11in}
\setlength{\pdfpagewidth}{8.5in}

\usepackage[round]{natbib}


\usepackage{amsmath,amssymb,amsthm,mathtools}
\usepackage{algpseudocode,algorithm}

\newtheorem{theorem}{Theorem}[section]
\newtheorem{definition}[theorem]{Definition}
\newtheorem{assumption}[theorem]{Assumption}
\newtheorem{lemma}[theorem]{Lemma}
\newtheorem{remark}[theorem]{Remark}

\newcommand{\X}{\mathcal{X}}

\begin{document}

%
\runningtitle{Sample Complexity of Estimating the Policy Gradient}

%

\twocolumn[

\aistatstitle{Sample Complexity of Estimating the Policy Gradient \\ for Nearly Deterministic Dynamical Systems}

\aistatsauthor{ Osbert Bastani }

\aistatsaddress{ University of Pennsylvania, USA } ]

\begin{abstract}
Reinforcement learning is a promising approach to learning robotics controllers. It has recently been shown that algorithms based on finite-difference estimates of the policy gradient are competitive with algorithms based on the policy gradient theorem. We propose a theoretical framework for understanding this phenomenon. Our key insight is that many dynamical systems (especially those of interest in robotics control tasks) are \emph{nearly deterministic}---i.e., they can be modeled as a deterministic system with a small stochastic perturbation. We show that for such systems, finite-difference estimates of the policy gradient can have substantially lower variance than estimates based on the policy gradient theorem. Finally, we empirically evaluate our insights in an experiment on the inverted pendulum.

\end{abstract}

\section{Introduction}

The policy gradient is the workhorse of modern reinforcement learning. In particular, most state-of-the-art reinforcement learning algorithms aim to learn a control policy $\pi_{\theta}$ by estimating the policy gradient---i.e., the gradient $\nabla_{\theta}J(\theta)$ of the expected cumulative reward $J(\theta)$ with respect to the parameters $\theta$ of the control policy---in one of two ways: (i) numerically, e.g., using a finite-difference approximation~\citep{kober2013reinforcement,mania2018simple}, or (ii) by using the policy gradient theorem~\citep{sutton2000policy} to construct estimates~\citep{silver2014deterministic,schulman2015trust,schulman2015high,schulman2017proximal}. However, there has been little work on theoretically understanding the tradeoffs between these two approaches, and our work aims to help fill this gap.

We are interested in applications to robotics control, which typically have continuous state and action spaces~\citep{collins2005efficient,abbeel2007application,levine2016end}. For example, reinforcement learning can be used to learn controllers when the dynamics are unknown~\citep{abbeel2007application,ross2012agnostic,akametalu2014reachability,berkenkamp2017safe,johannink2018residual}. Understanding sample complexity is especially important in this application, since the goal is for robots to be able to learn based on real world experience, which can be very costly to obtain. Furthermore, having a theoretical understanding of sample complexity is important for developing safe reinforcement learning algorithms~\citep{akametalu2014reachability,berkenkamp2017safe,dean2018safely}.

We argue that \emph{near determinism} is an important characteristic of dynamical systems relevant to robotics. More precisely, we study settings where the noise in the dynamics is ``small'' (i.e., sub-Gaussian with small constant). This setting captures robotics tasks such as grasping~\citep{andrychowicz2018learning}, quadcopters~\citep{akametalu2014reachability}, walking~\citep{collins2005efficient}, and driving~\citep{montemerlo2008junior}, where the dynamics are primarily deterministic but include small perturbations such as wind, friction, or slippage. We discuss this claim in detail below.

\textbf{Main results.}
In the context of near determinism, we analyze the sample complexity of various algorithms for estimating the policy gradient $\nabla_{\theta}J(\theta)$. We study three algorithms: (i) an algorithm based on finite-differences, (ii) an algorithm based on the policy gradient theorem, and (iii) a model-based algorithm (i.e., it knows the system dynamics) that uses backpropagation to estimate the policy gradient. The model-based algorithm represents the best convergence rate we can hope to achieve using only random samples of the noise. We give details on these algorithms in Section~\ref{sec:alg}.

Our key parameter of interest is the sub-Gaussian parameter $\sigma_{\zeta}$ of the system noise $\zeta$, which is small for nearly deterministic systems. Here, we also consider dependences on the estimation error $\epsilon$ and the dimension $d_{\Theta}$ of the parameter space; we state theorems giving dependences on all parameters in Section~\ref{sec:mainresults}. We prove the following bounds on the sample complexity $n$ (i.e., the number of samples needed to get at most $\epsilon$ error with probability at least $1-\delta$):
{\setlength{\parskip}{0pt}
\begin{itemize}
\setlength{\itemsep}{0pt}
\item For the model-based estimate, $n=\tilde{\Theta}(\sigma_{\zeta}^2/\epsilon^2)$.
\item For the finite-differences estimate, $n=\tilde{\Theta}(\sigma_{\zeta}^2d_{\Theta}/\epsilon^4)$.
\item For the estimate based on the policy gradient theorem, $n=\tilde{O}(1/\epsilon^2)$ and $n=\tilde{\Omega}(1/\epsilon)$.
\end{itemize}
Our key finding is that while both the model-based and finite-difference estimates become small as $\sigma_{\zeta}$ becomes small, the estimate based on the policy gradient theorem does not. Thus, for nearly deterministic dynamical systems, finite-difference algorithms perform significantly better. However, this improvement comes at a price---$n$ depends on $d_{\Theta}$, and furthermore quadratically more samples are needed to get to the same estimation error.}

Finally, we focus on how many samples are needed to estimate the policy gradient on a single step. This understanding is already useful for applications such as safe reinforcement learning. Nevertheless, we discuss how our results connect to the problem of optimizing $J(\theta)$ in Section~\ref{sec:mainresults}.

\textbf{Motivation for near determinism.}
A common approach in robotics is to model the robot dynamics as deterministic~\citep{levinson2011towards,kuindersma2016optimization}. To account for stochasticity, either a stabilizing controller such as a PID controller is used~\citep{levinson2011towards}, or the robot's trajectory is replanned at every step~\citep{kwon1983stabilizing,kuindersma2016optimization}. An alternative approach is to assume that the dynamics are deterministic plus a bounded perturbation at each step, and then use robust control~\citep{akametalu2014reachability}. Both approaches implicitly assume that the deterministic portion of the dynamics are a good approximation of the full dynamics.
In general, most systems that have been successfully studied in reinforcement learning are nearly deterministic, including Atari games~\citep{mnih2015human}, MuJoCo benchmarks~\citep{todorov2012mujoco,levine2013guided}, and simulated grasping tasks~\citep{andrychowicz2018learning}.

More importantly, we believe that it will be challenging to increase the sample efficiency of reinforcement learning in systems where the noise is high. Indeed, our analysis shows that noise can be greatly amplified by the dynamics, so if the noise is large, we believe there is very little hope for sample-efficient reinforcement learning. In these settings, we may need to rely on techniques such as transfer learning~\citep{taylor2009transfer}, meta-learning~\citep{finn2017model}, or learning to plan~\citep{tamar2016value} to achieve low sample complexity.

\textbf{Related work.}
The theoretical work in reinforcement learning algorithms has primarily focused on $Q$-learning~\citep{watkins1992q,kearns2002near,kakade2003sample,jin2018q}, especially for Markov decision processes (MDPs) with finite state and action spaces. There has been some work on understanding the sample complexity of reinforcement learning with function approximation---e.g., for fitted value iteration~\citep{munos2008finite}, for fitted policy iteration~\citep{antos2008learning,lazaric2012finite,farahmand2015classification,farahmand2016regularized}, fitted $Q$-iteration~\citep{tosatto2017boosted}, and the $\text{TD}(0)$ algorithm~\citep{dalal2018finite}. For robotics tasks, where state and action spaces are typically continuous, the most successful approaches are predominantly based on policy gradient estimation~\citep{collins2005efficient,kober2013reinforcement}, for which there has been relatively little work. In this direction, ~\citep{kakade2003sample} has analyzed the sample complexity of algorithms based on the policy gradient theorem, but they do not study the dependence of the sample complexity on the magnitude of the system noise. Furthermore, their work assumes finite state and action spaces and bounded rewards, and they do not consider finite-difference algorithms.

There has been work characterizing a key design choice of finite-difference algorithms---i.e., the distribution of perturbations used to numerically estimate the policy gradient~\citep{roberts2009signal}. They measure the performance of different choices using the signal-to-noise ratio. In contrast, our goal is to understand the sample complexity of different algorithms for nearly deterministic systems.

There has recently been work on understanding the sample complexity of learning controllers; however, they focus on linear dynamical systems, and on different algorithms---e.g., temporal difference learning~\citep{tu2018least} or model-based algorithms~\citep{dean2018regret,tu2018gap}. There has also been work in this setting studying the possibility of reducing variance by controlling the noise in the dynamics~\citep{malik2018derivative}; in the setting we study, we cannot control the noise.

There has been recent work comparing approaches based on exploration in the action space (based on the policy gradient theorem) to exploration in the state space (based on finite difference methods)~\citep{vemula2019contrasting}. Our focus on nearly deterministic systems enables us to obtain qualitatively different insights compared to theirs. In particular, they find that approaches based on finite differences perform better for problems with a long time horizon. However, we analyze a more realistic model, and find that this insight no longer holds. Instead, approaches based on finite differences outperform approaches based on the policy gradient theorem for nearly deterministic systems.

Our analysis differs in three key ways. First, they assume an upper bound $J(\theta)\le J_{\text{max}}$, which is a very strong assumption. Second, their analysis does not model stochastic dynamics. Instead, they assume that $J(\theta)$ is deterministic, but they can only obtain observations $J(\theta)+\zeta$, where $\zeta$ is i.i.d. noise. In contrast, our analysis considers both stochastic dynamics, as well as how noise is propagated through the dynamics. This distinction substantially complicates our analysis, but is necessary for us to understand the implications of near determinism (since we need to understand how the dynamics can amplify noise). Finally, unlike their work, we provide lower bounds for our main results.

\textbf{Connection to optimizing $J(\theta)$.}
Estimating the policy gradient can be used in conjunction with stochastic gradient descent to optimize $J(\theta)$. There is a large body of work on understanding the convergence rate of stochastic gradient descent~\citep{robbins1985stochastic,spall1992multivariate,bottou2008tradeoffs,moulines2011non}, of which policy gradient algorithms are a special case. Indeed, \citep{vemula2019contrasting} uses these techniques to bound the complexity of optimizing $J(\theta)$.

There are several reasons why we focus on understanding the sample complexity of a single gradient step rather than the sample complexity of optimization. First, they rely on the strong assumption that $J(\theta)$ is bounded---i.e., $J(\theta)\le J_{\text{max}}$ for some $J_{\text{max}}\in\mathbb{R}_+$. Second, it would be much more difficult to derive lower bounds on optimization---existing lower bounds are for the setting where the objective $f$ coming from a very general function family, and these bounds may not apply when $f$ is restricted to be the objective of a reinforcement learning problem. In contrast, for sample complexity, we derive matching (or almost matching) upper and lower bounds. Third, the sample complexity of estimating $\nabla_{\theta}J(\theta)$ is of intrinsic interest---for example, it is an important prerequisite for safe reinforcement learning algorithms~\citep{akametalu2014reachability,berkenkamp2017safe,dean2018safely}. Finally, focusing on sample complexity simplifies our key insight. In particular, consider the the completely deterministic setting---optimizing a deterministic function using gradient descent may still take many steps, but ``estimating'' the gradient only requires a single sample.

Additionally, we note that sample complexity is directly related to the complexity of optimizing $J(\theta)$. In particular, the bounds in~\cite{vemula2019contrasting} all depend directly on the variance $\sigma^2$ of the observations $J(\theta)+\zeta$. Our proof bounds the sample complexity of estimating $\nabla J(\theta)$ by bounding the sub-Gaussian parameter of $J(\theta)$, which is an upper bound on the variance of $J(\theta)$. Thus, smaller sample complexity translates to smaller complexity of optimizing $J(\theta)$.

Finally, our focus on estimating the gradient does not address the problem of exploration. In terms of optimization, gradient estimates can be used in conjunction with gradient descent to efficiently find local minima, whereas exploration is needed to find global minima. Understanding the sample complexity of exploration is an important but orthogonal problem that we leave to future work.

\section{Preliminaries}
\label{sec:prelim}

We consider a dynamical system with states $S\subseteq\mathbb{R}^{d_S}$, actions $A\subseteq\mathbb{R}^{d_A}$, and transitions
\begin{align*}
s_{t+1}=f(s_t,a_t)+\zeta_t \hspace{0.2in} \text{where} \hspace{0.1in} \zeta_t\sim p(\zeta),
\end{align*}
where $f:S\times A\to S$ is deterministic and $\zeta\in\mathbb{R}^{d_S}$ is a random perturbation. We consider deterministic control policies $\pi_{\theta}:S\to A$ with parameters $\theta\in\Theta\subseteq\mathbb{R}^{d_{\Theta}}$. Except in the case of the model-based policy gradient algorithm, we assume that both $f$ and $p$ are unknown. We separate $f$ from $p$ since we are interested in settings where $\zeta$ is small. Also, we that assume $\zeta_t$ is independent of $s_t$ and $a_t$. This assumption enables us to substantially simplify the model-based policy gradient (since we avoid taking a derivatives of $p$), and it also simplifies our analyses of other algorithms.

We are interested in controlling the system over a finite horizon $T\in\mathbb{G}$---given a reward function $R:S\times A\to\mathbb{R}$, the goal is to find the policy $\pi_{\theta}$ that maximizes the expected cumulative reward
\begin{align*}
J(\theta)=\mathbb{E}_{p_{\theta}(\alpha)}\left[\sum_{t=0}^{T-1}R(s_t,a_t)\right],
\end{align*}
where $p_{\theta}(\alpha)$ is the distribution over rollouts $\alpha=((s_0,a_0),...,(s_{T-1},a_{T-1}))$ when using $\pi_{\theta}$, and where we assume the initial state $s_0\in S$ is deterministic and known. Note that $\alpha$ is determined by $\theta$ and $\vec{\zeta}=(\zeta_0,...,\zeta_{T-1})$, so an expectation over $p_{\theta}(\alpha)$ is equivalent to one over $p(\vec{\zeta})$. We are interested in estimating the policy gradient
\begin{align*}
D(\theta)=\nabla_{\theta}J(\theta)
\end{align*}
so we can perform gradient ascent on $\theta$. As usual, let
\begin{align*}
Q_{\theta}^{(t)}(s,a)&=\mathbb{E}_{p(\zeta)}\left[R(s,a)+V_{\theta}^{(t+1)}(f(s,a)+\zeta)\right] \\
V_{\theta}^{(t)}(s)&=Q_{\theta}^{(t)}(s,\pi_{\theta}(s)),
\end{align*}
for $t\in\{0,1,...,T-1\}$, where $V_{\theta}^{(T)}(s)=0$, denote the $Q$ function and value function, respectively~\citep{sutton2018reinforcement}. In particular, $J(\theta)=V_{\theta}^{(0)}(s_0)$.

\begin{remark}
\rm
Our results straightforwardly extend to dynamical systems with time varying dynamics and rewards. Also, we can relax our assumption that the initial state $s_0$ is deterministic---i.e., to handle an initial state distribution $p_0$, we can modify the dynamics on the first step to be $s_1=s_0+\zeta_0$, where $s_0=0$ and $\zeta_0\sim p_0$. Furthermore, our results can be extended to the case where the noise $\zeta$ appears nonlinearly in the transitions, as long as it can be reparameterized~\citep{kingma2014auto}---i.e., the transitions can be written in the form $s'=f(s,\zeta,a)$, where $\zeta\sim p(\zeta)$ i.i.d. for some $p(\zeta)$. We require that $f$ is Lipschitz in $\zeta$. Most kinds of noise considered in practice can be expressed in this form, though it may not satisfy the Lipschitz condition. Finally, our results can be extended to handle Martingale difference noise sequences by using the Azuma-Hoeffding inequality in place of the Hoeffding inequality.
\end{remark}

\section{Policy Gradient Algorithms}
\label{sec:alg}

We now describe the policy gradient estimation algorithms that we consider.

\textbf{Model-based policy gradient.}
When $f$ is known, we can estimate the policy gradient as
\begin{align*}
\nabla_{\theta}J(\theta)&=\mathbb{E}_{p(\vec{\zeta})}[\nabla_{\theta}\hat{J}(\theta;\vec{\zeta})] \\
\hat{J}(\theta;\vec{\zeta})&=\sum_{t=0}^{T-1}R(s_t,a_t).
\end{align*}
since a rollout $\alpha$ is determined by $\vec{\zeta}$. In particular, we have estimator $\nabla_{\theta}J(\theta)\approx\hat{D}_{\text{MB}}(\theta)$, where
\begin{align*}
\hat{D}_{\text{MB}}(\theta)=\frac{1}{n}\sum_{i=1}^n\hat{J}(\theta;\vec{\zeta}^{(i)})
\end{align*}
where $\vec{\zeta}^{(i)}\sim p(\vec{\zeta})$ i.i.d. for $i\in[n]$.

\textbf{Policy gradient theorem.}
The policy gradient theorem is formulated for stochastic policies---i.e., $\pi_{\theta}(a\mid s)$ is the probability of taking action $a$ in state $s$. We assume a distribution $p_{\xi}(\xi)$ of action perturbations that does not depend on $\theta$---i.e., $a_t=\pi_{\theta}(s_t)+\xi_t$, where $\xi_t\sim p_{\xi}(\xi)$. Then, we have $\tilde{\pi}_{\theta}(a\mid s)=p_{\xi}(a-\pi_{\theta}(s))$. The following are the modified $Q$ and value functions:
\begin{align*}
\tilde{Q}_{\theta}^{(t)}(s,a)&=R(s,a)+\mathbb{E}_{p(\zeta)}\left[\tilde{V}_{\theta}^{(t+1)}(f(s,a)+\zeta)\right] \\
\tilde{V}_{\theta}^{(t)}(s)&=\mathbb{E}_{\tilde{\pi}_{\theta}(a\mid s)}\left[\tilde{Q}_{\theta}^{(t)}(s,a)\right],
\end{align*}
where $\tilde{V}_{\theta}^{(T)}(s)=0$ as before. Then, the following is the policy gradient theorem~\citep{sutton2000policy}:
\begin{theorem}
\label{thm:pg}
Letting $\tilde{p}_{\theta}(\alpha)$ be the distribution over rollouts when using $\tilde{\pi}_{\theta}$, we have
\begin{align*}
\nabla_{\theta}J(\theta)=\mathbb{E}_{\tilde{p}_{\theta}(\alpha)}\left[\sum_{t=0}^{T-1}\tilde{Q}_{\theta}^{(t)}(s_t,a_t)\nabla_{\theta}\log\tilde{\pi}_{\theta}(a_t\mid s_t)\right].
\end{align*}
\end{theorem}
The key challenge to using Theorem~\ref{thm:pg} to estimate $\nabla_{\theta}J(\theta)$ is to estimate $\tilde{Q}^{(t)}(s_t,a_t)$. The simplest approach is to estimate it using a single rollout~\citep{williams1992simple}:
\begin{align*}
\tilde{Q}_{\theta}^{(t)}(s,a)&=\mathbb{E}_{\tilde{p}_{\theta}(\alpha)}\left[\hat{Q}_{\theta}^{(t)}(\alpha)\right] \\
\hat{Q}_{\theta}^{(t)}(\alpha)&=\sum_{i=t}^{T-1}R(s_i,a_i).
\end{align*}
A common technique to reduce variance is to normalize $\tilde{Q}_{\theta}^{(t)}(s,a)$ by subtracting the value function~\citep{schulman2015high}. In particular, the advantage function $\tilde{A}_{\theta}^{(t)}(s,a)=\tilde{Q}_{\theta}^{(t)}(s,a)-\tilde{V}_{\theta}^{(t)}(s)$ measures the advantage of using action $a$ instead of using $\tilde{\pi}_{\theta}$ in state $s$ at time $t$. Then, we have
\begin{align*}
\nabla_{\theta}J(\theta)=\mathbb{E}_{\tilde{p}_{\theta}(\alpha)}\left[\sum_{t=0}^{T-1}\tilde{A}_{\theta}^{(t)}(\alpha)\nabla_{\theta}\log\tilde{\pi}_{\theta}(a_t\mid s_t)\right].
\end{align*}
Unlike $\tilde{Q}_{\theta}^{(t)}$, we cannot estimate $\tilde{A}_{\theta}^{(t)}$ using a single rollout. One approach is to estimate $f_{\phi}^{(t)}(s)\approx\tilde{V}_{\theta}^{(t)}(s)$, and then estimate $\tilde{A}_{\theta}^{(t)}$ using $f_{\phi}^{(t)}$. We assume that our estimate of $\tilde{V}_{\theta}^{(t)}$ is exact---in particular, we consider the following estimator $\nabla_{\theta}J(\theta)\approx\hat{D}_{\text{PG}}(\theta)$:
\begin{align*}
\hat{D}_{\text{PG}}(\theta)&=\frac{1}{n}\sum_{i=1}^n\sum_{t=0}^{T-1}\hat{A}_{\theta}^{(t)}(\alpha^{(i)})\nabla_{\theta}\log\tilde{\pi}_{\theta}(a_t^{(i)}\mid s_t^{(i)}) \\
\hat{A}_{\theta}^{(t)}(\alpha)&=\hat{Q}_{\theta}^{(t)}(\alpha)-\tilde{V}_{\theta}^{(t)}(s_t),
\end{align*}
where $\alpha^{(i)}\sim p_{\theta}(\alpha)$ i.i.d. for each $i\in[n]$.

\begin{remark}
\rm
A common approach is to use an estimate $\hat{Q}_{\theta}^{(t)}(s,a)$ of the $Q$ function in place of $\hat{Q}_{\theta}^{(t)}(\alpha)$. This approach reduces variance, but may introduce bias. For instance, for dynamical systems with continuous actions, the deterministic policy gradient (DPG) algorithm uses this approach~\cite{silver2014deterministic}. We consider the algorithm described above for two reasons. First, our focus is on estimating the policy gradient, rather than understanding the sample complexity of $Q$-learning, which is required to analyze DPG. Second, it is hard to prove bounds for DPG since it relies on the \emph{derivative} of the $Q$ function, which cannot be bounded without additional assumptions. For example, suppose we train a random forest $\hat{Q}_{\theta}^{(t)}(s,a)$. Even if this model achieves achieves good accuracy, its gradient would be zero nearly everywhere since this model is piecewise constant; thus, it would not be useful in the context of the DPG algorithm.
\end{remark}

\textbf{Finite-difference policy gradient.}
We can use finite-differences to estimate $\nabla_{\theta}J(\theta)$.
\begin{theorem}
\label{thm:fd}
For any $f:\X\to\mathbb{R}$ (where $\X\subseteq\mathbb{R}^d$) where $\nabla f$ is $L_{\nabla f}$-Lipschitz continuous,
\footnote{We assume the $L_2$ norm throughout.}
\begin{align*}
\nabla_xf(x)&=\sum_{k=1}^d\frac{f(x+\lambda\nu^{(k)})-f(x-\lambda\nu^{(k)})}{2\lambda}\cdot\nu^{(k)}+\Delta
\end{align*}
where $\nu^{(k)}=\delta_k$ (where $\delta_k$ is the Kronecker delta), and $\Delta\in\mathbb{R}$ satisfies $\|\Delta\|\le L_{\nabla f}d\lambda$.
\end{theorem}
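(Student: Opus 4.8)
The plan is to reduce the claim to the standard one-dimensional central-difference estimate, apply it coordinate by coordinate, and then aggregate the per-coordinate errors. Note first that $\nabla f$ being $L_{\nabla f}$-Lipschitz means $f$ is differentiable with continuous gradient, so all the objects below are well defined; when invoking this lemma one also implicitly needs the points $x\pm\lambda\nu^{(k)}$ and the segments joining them to lie in $\X$, which I would take as part of the setup (this is automatic when $\X=\mathbb{R}^d$).

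Fix a coordinate $k\in\{1,\dots,d\}$ and define $\phi(t)=f(x+t\lambda\nu^{(k)})$ for $t\in[-1,1]$. By the chain rule, $\phi$ is continuously differentiable with $\phi'(t)=\lambda\,\partial_k f(x+t\lambda\nu^{(k)})$, so the $k$-th summand in the theorem equals $(\phi(1)-\phi(-1))/(2\lambda)$ while the target is $\partial_k f(x)=\phi'(0)/\lambda$. First I would write, by the fundamental theorem of calculus, $\phi(1)-\phi(-1)-2\phi'(0)=\int_{-1}^{1}(\phi'(t)-\phi'(0))\,\diff t$. Next I would bound the integrand using the Lipschitz hypothesis and $\|x+t\lambda\nu^{(k)}-x\|=|t|\lambda$: $|\phi'(t)-\phi'(0)|=\lambda\,|\partial_k f(x+t\lambda\nu^{(k)})-\partial_k f(x)|\le\lambda\,\|\nabla f(x+t\lambda\nu^{(k)})-\nabla f(x)\|\le L_{\nabla f}\lambda^2|t|$. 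Integrating gives $|\phi(1)-\phi(-1)-2\phi'(0)|\le L_{\nabla f}\lambda^2\int_{-1}^{1}|t|\,\diff t=L_{\nabla f}\lambda^2$, hence the $k$-th component $\Delta_k$ of $\Delta$ satisfies $|\Delta_k|=\tfrac{1}{2\lambda}|\phi(1)-\phi(-1)-2\phi'(0)|\le\tfrac12 L_{\nabla f}\lambda$. (An equivalent route is to apply the descent-lemma bound $|f(y)-f(x)-\nabla f(x)^\top(y-x)|\le\tfrac{L_{\nabla f}}{2}\|y-x\|^2$ to $y=x\pm\lambda\nu^{(k)}$ and subtract the two inequalities.)

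Finally I would aggregate over coordinates: since $\Delta$ is the vector with components $\Delta_k$, we get $\|\Delta\|=\big(\sum_{k=1}^{d}\Delta_k^2\big)^{1/2}\le\sqrt{d}\cdot\tfrac12 L_{\nabla f}\lambda\le L_{\nabla f}d\lambda$, where the last step uses $\sqrt{d}/2\le d$ for $d\ge1$. This in fact proves the stronger bound $\|\Delta\|\le\tfrac{\sqrt d}{2}L_{\nabla f}\lambda$, so the stated inequality has slack. I do not expect any genuine obstacle: the only point requiring a little care is controlling the Taylor remainder from the Lipschitz-gradient assumption alone (without assuming $f\in C^2$), which the integral-remainder computation above handles cleanly, and keeping track of the domain condition on $\X$.
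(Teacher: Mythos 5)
Your proof is correct and follows the same core strategy as the paper's: expand $f(x\pm\lambda\nu^{(k)})$ to second order, observe that the central difference cancels the zeroth- and first-order terms so that the $k$-th summand equals $\partial_kf(x)$ plus an error of size $O(L_{\nabla f}\lambda)$, and aggregate over coordinates. Two points of execution differ, both in your favor. First, the paper obtains the quadratic remainder bound $\|\Delta(\mu)\|\le\tfrac{1}{2}L_{\nabla f}\|\mu\|^2$ by invoking Taylor's theorem in the form of Rudin's Theorem 5.15, which formally requires a second derivative to exist, and then argues via Lemma~\ref{lem:lipschitzderivative} that the Lipschitz constant of the derivative bounds it; your fundamental-theorem-of-calculus computation $\phi(1)-\phi(-1)-2\phi'(0)=\int_{-1}^{1}(\phi'(t)-\phi'(0))\,\diff t$ reaches the same bound directly from the Lipschitz hypothesis on $\nabla f$, without ever needing $f$ to be twice differentiable, which is cleaner given the theorem's stated assumptions. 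Second, the paper aggregates the per-coordinate errors by the triangle inequality, which yields $\|\Delta\|\le\tfrac{d}{2}L_{\nabla f}\lambda$, whereas you exploit the orthonormality of the $\nu^{(k)}$ to sum in $\ell_2$ and obtain the sharper $\|\Delta\|\le\tfrac{\sqrt{d}}{2}L_{\nabla f}\lambda$; both are within the stated bound $L_{\nabla f}d\lambda$. Your caveat that $x\pm\lambda\nu^{(k)}$ and the segments joining them must lie in $\X$ is a hypothesis the paper also leaves implicit.
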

We give a proof in Appendix~\ref{sec:fdproof}. Then, the finite difference approximation of the policy gradient is
\begin{align*}
\nabla_{\theta}J(\theta)\approx\sum_{k=1}^{d_{\Theta}}\frac{J(\theta+\lambda\nu^{(k)})-J(\theta-\lambda\nu^{(k)})}{2\lambda}\cdot\nu^{(k)}.
\end{align*}
We can estimate $J(\theta)$ using samples $\vec{\zeta}\sim p(\vec{\zeta})$, which yields the estimator $\nabla_{\theta}J(\theta)\approx\hat{D}_{\text{FD}}(\theta)$, where
\begin{align*}
\hat{D}_{\text{FD}}(\theta)=
&\sum_{k=1}^{d_{\Theta}}\bigg[\frac{\frac{1}{n}\sum_{i=1}^n\hat{J}(\theta+\lambda\nu^{(k)};\vec{\zeta}^{(k,i)})}{2\lambda} \\
&\hspace{0.4in}-\frac{\frac{1}{n}\sum_{j=1}^n\hat{J}(\theta-\lambda\nu^{(k)};\vec{\eta}^{(k,j)})}{2\lambda}\bigg]\cdot\nu^{(k)}
\end{align*}
where $\vec{\zeta}^{(k,i)},\vec{\eta}^{(k,j)}\sim p(\vec{\zeta})$ i.i.d. for $k\in[m]$ and $i,j\in[n]$. Note that we use separate samples $\zeta^{(k,i)}$ and $\eta^{(k,j)}$ to estimate $J(\theta+\lambda\nu^{(k)})$ and $J(\theta-\lambda\nu^{(k)})$, respectively. If we are using a simulator, then we can reduce variance by using the same samples to estimate both terms.

\begin{remark}
\rm
Typically, rather than choose a fixed set of basis vectors $\nu^{(1)},...,\nu^{(k)}$, finite-difference algorithms choose random vectors from a spherically symmetric distribution---e.g., $\nu\sim\mathcal{N}(0,\sigma^2I_{d_{\theta}})$~\citep{spall1992multivariate,mania2018simple}. Our choice of a fixed basis simplifies our analysis.
\end{remark}

\section{Main Results}
\label{sec:mainresults}

\textbf{Sample complexity.}
Recall that the policy gradient $\nabla_{\theta}J(\theta)$ must be estimated from sampled rollouts $\zeta\sim p_{\theta}(\zeta)$. Our goal is to understand the tradeoffs in sample complexity of estimating $\nabla_{\theta}J(\theta)$ between various different reinforcement learning algorithms.
\begin{definition}
\rm
Let $X$ be a random vector, and let $\hat{\mu}_X^{(n)}=n^{-1}\sum_{i=1}^nx^{(i)}$, where $x^{(1)},...,x^{(n)}\sim p_X(x)$ i.i.d. The \emph{sample complexity} of $n_X(\epsilon,\delta)$ of $X$ is the smallest $n\in\mathbb{N}$ such that
\begin{align*}
\text{Pr}_{x^{(1)},...,x^{(n)}\sim p_X(x)}\left[\|\hat{\mu}_X^{(n)}\|\ge\epsilon\right]\le\delta.
\end{align*}
\end{definition}
We are interested in the sample complexity $n_{\hat{D}}$ of $\hat{D}(\zeta)-\nabla_{\theta}J(\theta)$, where $\hat{D}(\zeta)$ is an estimate of $\nabla_{\theta}J(\theta)$ using a single rollout $\zeta\sim p_{\theta}(\zeta)$.

\textbf{Assumptions.}
We let $f_{\theta}(s)=f(s,\pi_{\theta}(s))$ and $R_{\theta}(s)=R(s,\pi_{\theta}(s))$. Similarly, for a stochastic policy $\pi_{\theta}(s)+\xi$ (where $\xi\sim p(\xi)$), we let  $\tilde{f}_{\theta}(s,\xi)=f(s,\pi_{\theta}(s)+\xi)$ and $\tilde{R}_{\theta}(s)=\mathbb{E}_{p(\xi)}[R(s,\pi_{\theta}(s)+\xi)]$. Next, to ensure convergence, we make regularity assumptions about the dynamics and our control policy; see Appendix~\ref{sec:lipschitzappendix} \&~\ref{sec:subgaussianappendix} for definitions.
\begin{assumption}
\label{assump:lipschitz}
We assume that $f$, $R$, $\pi_{\theta}$, $f_{\theta}$, $\tilde{f}_{\theta}$, $R_{\theta}$ and $\tilde{R}_{\theta}$ are Lipschitz continuous and are twice continuously differentiable with Lipschitz continuous first derivative.
\end{assumption}

\begin{remark}
\rm
This standard assumption is needed to ensure that we can estimate the gradient using finite differences. It is somewhat strong---e.g., it rules out commonly used quadratic rewards. In practice, the state space is often compact, in which case the Lipschitz continuity assumption becomes redundant. However, we cannot handle discontinuous rewards or dynamics (including piecewise constant rewards). In these cases, the policy gradient may diverge near the discontinuities; thus, the sample complexity of estimating this gradient may diverge as well. In principle, we could handle discontinuities as long as the policy visits these discontinuities with zero probability.
\end{remark}


Finally, for any function $h$, we let $L_h$ denote its Lipschitz constant and $\bar{L}_h=\max\{L_{\nabla h},L_h,1\}$.
\begin{assumption}
We assume that $p(\zeta)$ is $\sigma_{\zeta}$-subgaussian.
\end{assumption}
This assumption is required for proving concentration---e.g., it is typically assumed in the context of safe reinforcement learning~\citep{akametalu2014reachability,berkenkamp2017safe}. In practice, perturbations due to noise are often bounded (which implies the noise is sub-Gaussian), especially for our setting of interest---e.g., forces due to wind, friction, or slippage have bounded magnituded. We are interested in settings where $\sigma_{\zeta}$ is small.
\begin{definition}
\rm
A system is \emph{nearly deterministic} if $\sigma_{\zeta}\ll1$.
\end{definition}
In particular, we are interested in the dependence of the sample complexity on $\sigma_{\zeta}$.

\textbf{Main theorems.}
For the model-based policy gradient, we have:
\begin{theorem}
\label{thm:mbupper}
For $\delta\le1/2$, the sample complexity of $\hat{D}_{\text{MB}}(\theta)-\nabla_{\theta}J(\theta)$ satisfies
\begin{align*}
\sqrt{n_{\text{MB}}(\epsilon,\delta)}&=\tilde{O}\left(T^8\bar{L}_{R_{\theta}}\bar{L}_{f_{\theta}}^{5T}\sigma_{\zeta}d_A/\epsilon\right) \\
\sqrt{n_{\text{MB}}(\epsilon,\delta)}&=\tilde{\Omega}\left(T\bar{L}_{f_{\theta}}^{T-3}\sigma_{\zeta}/\epsilon\right).
\end{align*}
\end{theorem}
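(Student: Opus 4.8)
The plan is to view $\hat D_{\text{MB}}(\theta)$ as the empirical mean of $n$ i.i.d.\ copies of the single-rollout estimator $\nabla_\theta\hat J(\theta;\vec\zeta)$, and to reduce both halves of the theorem to controlling the subgaussian constant of this estimator. The model-based policy gradient identity already gives $\mathbb{E}_{p(\vec\zeta)}[\nabla_\theta\hat J(\theta;\vec\zeta)]=\nabla_\theta J(\theta)$, so for the upper bound it suffices to: (i) show that the map $\vec\zeta\mapsto\nabla_\theta\hat J(\theta;\vec\zeta)$ is Lipschitz with a constant $L_g$ bounded by the stated quantity $T^8\bar L_{R_\theta}\bar L_{f_\theta}^{5T}d_A$ (up to slack in the exponents and logs); (ii) combine this with the hypothesis that $p(\vec\zeta)$ is $\sigma_\zeta$-subgaussian with independent coordinates, so that $\nabla_\theta\hat J(\theta;\vec\zeta)-\nabla_\theta J(\theta)$ is a centered, vector-valued subgaussian random variable with parameter $O(L_g\sigma_\zeta)$; and (iii) apply a vector Hoeffding/Bernstein inequality to the empirical mean, which gives $\Pr[\|\hat D_{\text{MB}}(\theta)-\nabla_\theta J(\theta)\|\ge\epsilon]\le\delta$ once $n=\tilde O(L_g^2\sigma_\zeta^2/\epsilon^2)$, i.e.\ $\sqrt{n_{\text{MB}}(\epsilon,\delta)}=\tilde O(L_g\sigma_\zeta/\epsilon)$, with the polylogarithmic factors (in $1/\delta$ and the dimension) swept into $\tilde O$. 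Step (ii) is where near determinism enters: the Lipschitz-in-$\vec\zeta$ property forces the variance to scale with $\sigma_\zeta^2$, which would fail for a quantity that is not Lipschitz in the noise.

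The substance of the upper bound is step (i). I would unroll the dynamics $s_{t+1}=f_\theta(s_t)+\zeta_t$ and use the backpropagation expression
\[
\nabla_\theta\hat J(\theta;\vec\zeta)=\sum_{t=0}^{T-1}\nabla_\theta R_\theta(s_t),
\qquad
\frac{\partial s_t}{\partial\theta}=\sum_{\tau=0}^{t-1}\Big(\textstyle\prod_{j=\tau+1}^{t-1}\partial_s f_\theta(s_j)\Big)\,\partial_a f(s_\tau,\pi_\theta(s_\tau))\,\nabla_\theta\pi_\theta(s_\tau),
\]
so that $\nabla_\theta\hat J$ is a sum over the horizon of products of Jacobians, each of operator norm at most a constant times $\bar L_{f_\theta}$. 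Assumption~\ref{assump:lipschitz} controls not only these Jacobians but also their first derivatives (the second derivatives of $f$, $R$, $\pi_\theta$), which is exactly what is needed to differentiate the product expression once more in $\vec\zeta$: doing so yields a sum — over which coordinate of $\vec\zeta$ and which factor in each product is hit — of terms that are products of $O(T)$ Jacobians, one Lipschitz-of-Jacobian factor, and one factor $\partial s_j/\partial\vec\zeta$ (itself a product of at most $T$ Jacobians). Bounding the operator norm of every such term and counting the nested sums (one power of $T$ per summation over the horizon; one factor $\bar L_{f_\theta}^{O(T)}$ per product of Jacobians; the action-space dimension $d_A$ entering through the norm bounds on the action Jacobians) produces the claimed $L_g=\tilde O(T^8\bar L_{R_\theta}\bar L_{f_\theta}^{5T}d_A)$. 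Getting these exponents of $T$ and $\bar L_{f_\theta}$ right while juggling first and second derivatives of three composed maps is the \textbf{main obstacle}; an equivalent and slightly cleaner route is a second-order Taylor expansion of $\nabla_\theta\hat J(\theta;\cdot)$ about $\vec\zeta=0$, bounding the variance by $\sigma_\zeta^2$ times the squared norm of the $\vec\zeta$-Jacobian at $0$ plus a remainder controlled by the same Lipschitz-gradient constants.

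For the lower bound I would produce a single instance satisfying Assumption~\ref{assump:lipschitz}, with $p(\zeta)=\mathcal N(0,\sigma_\zeta^2)$, on which $n$ rollouts do not suffice. Take $d_S=d_A=d_\Theta=1$ and a nearly linear system $s_{t+1}=\beta s_t+\pi_\theta(s_t)+\zeta_t$ with $\beta$ comparable to $\bar L_{f_\theta}$, together with a reward that is linear in the state except for a small, smooth perturbation near the operating point at the final step, chosen so that $\nabla_\theta\hat J(\theta;\vec\zeta)$ is exactly an affine function of $\vec\zeta$ whose linear coefficient has norm $\Theta(T\beta^{T-3})$ — the linear dynamics amplify $\zeta_0$ by $\beta^{\Theta(T)}$, and the $\theta$-sensitivity of the rollout supplies the remaining factors of $T$ and $\beta$. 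A purely linear system would make $\nabla_\theta\hat J$ deterministic (its $\vec\zeta$-Jacobian would vanish), so the mild nonlinearity in the reward is essential; keeping that nonlinearity $C^2$ with Lipschitz gradient while still inducing sensitivity of order $T\beta^{T-3}$ is the \textbf{delicate point} of the construction. Given such an instance, $\hat D_{\text{MB}}(\theta)-\nabla_\theta J(\theta)$ is exactly a centered Gaussian with standard deviation $\Theta(T\beta^{T-3}\sigma_\zeta/\sqrt n)$, and Gaussian anti-concentration yields $\Pr[|\hat D_{\text{MB}}(\theta)-\nabla_\theta J(\theta)|\ge\epsilon]>1/2\ge\delta$ whenever $n\le c\,(T\bar L_{f_\theta}^{T-3}\sigma_\zeta/\epsilon)^2$ for a suitable constant $c$, which is the stated $\tilde\Omega$ bound.
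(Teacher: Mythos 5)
Your overall architecture matches the paper's: treat $\hat{D}_{\text{MB}}$ as an empirical mean of i.i.d.\ copies of the unbiased single-rollout gradient, show that rollout gradient is controlled by the noise through the unrolled dynamics, conclude subgaussianity, and apply a vector Hoeffding bound; for the lower bound, exhibit an instance where the estimator is exactly Gaussian and use anti-concentration. However, there are two concrete gaps. First, your step (ii) asserts that an $L_g$-Lipschitz function of a $\sigma_\zeta$-subgaussian vector is subgaussian with parameter $O(L_g\sigma_\zeta)$. That is a Gaussian concentration-of-measure fact (Borell--TIS), not a subgaussian one; the paper only assumes $p(\zeta)$ is $\sigma_\zeta$-subgaussian, and for general (even independent-coordinate) subgaussian vectors, dimension-free Lipschitz concentration fails. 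The paper avoids this by proving the explicit pointwise bound $\|\nabla_{\theta}\hat{J}(\theta;\vec{\zeta})-\nabla_{\theta}J(\theta)\|\le A(E+\sigma_{\zeta}\sqrt{d_S})$ with $E\le\|\vec{\zeta}\|_1$ and then invoking Lemma~\ref{lem:subgaussianbound}, which converts $\|Y\|\le A\|X\|_1+B$ into a subgaussian constant $O(A\sigma_X d\log d+B)$ by a coordinatewise union bound; the resulting factor $Td_S\log(Td_S)$ is precisely where one power of $T$ and the dimension factor in the theorem statement come from. Your argument needs either this detour (which changes your bookkeeping of where the $d$ and $T$ factors arise) or an additional hypothesis of Gaussian noise.

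Second, your lower-bound sketch rests on the claim that a purely linear system forces $\nabla_{\theta}\hat{J}$ to be deterministic. That is false for state-feedback policies, and it leads you to a construction whose hard part you do not carry out. The paper takes $f(s,a)=\beta s+a$, $\pi_{\theta}(s)=\theta s$, reward equal to $s$ only at time $T-1$, and Gaussian noise only at $t=0$; the closed-loop map is $s_{t+1}=(\beta+\theta)s_t+\zeta_t$, so $\hat{J}(\theta;\zeta)=(\beta+\theta)^{T-2}\zeta$ and $\nabla_{\theta}\hat{J}(\theta;\zeta)=(T-2)(\beta+\theta)^{T-3}\zeta$, which at $\theta=0$ is exactly a centered Gaussian with standard deviation $(T-2)\beta^{T-3}\sigma_{\zeta}$. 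Everything is linear and smooth; no perturbed reward is needed, because the $\theta$-derivative of the closed-loop gain supplies the factor $(T-2)\beta^{T-3}$ that you were trying to extract from a nonlinearity. The ``delicate point'' you flag is an artifact of implicitly fixing an open-loop policy $\pi_{\theta}(s)=\theta$, and since you leave it unresolved, your lower bound is not actually established; switching to $\pi_{\theta}(s)=\theta s$ and then applying Lemma~\ref{lem:gaussianlower} closes it immediately.
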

For the policy gradient based on Theorem~\ref{thm:pg}:
\begin{theorem}
\label{thm:pgupper}
For the choice $p_{\xi}(\xi)=\mathcal{N}(\xi\mid\vec{0},\sigma_{\zeta}^2I_{d_A})$, $\hat{D}_{\text{PG}}(\theta)-\nabla_{\theta}J(\theta)$ has sample complexity
\begin{align*}
\sqrt{n_{\text{PG}}(\epsilon,\delta)}&=\tilde{O}\left(T^6(L_R+L_{\tilde{R}_{\theta}})\bar{L}_fL_{\pi}\bar{L}_{\tilde{f}_{\theta}}^Td^4/\epsilon\right),
\end{align*}
where $d=\max\{d_S,d_A\}$, for $\epsilon$ sufficiently small---i.e., $\epsilon=\Omega(T^6(L_R+L_{\tilde{R}_{\theta}})\bar{L}_fL_{\pi}\bar{L}_{\tilde{f}_{\theta}}^Td^4)$. Next,
\begin{align*}
\sqrt{n_{\text{PG}}(\epsilon,\delta)}&\ge\sqrt{n_{\xi}\left(\epsilon/\bar{L}_{f_{\theta}}^{T-2},\delta\right)} \\
\sqrt{n_{\text{PG}}(\epsilon,\delta)}&=\tilde{\Omega}\left(\min\left\{\bar{L}_{f_{\theta}}^{T/2}/\epsilon^{1/2},1/\delta^{1/2}\right\}\right).
\end{align*}
The first lower bound holds for any $p_{\xi}(\xi)$ that is everywhere differentiable on $\mathbb{R}$ and satisfies $\lim_{\xi\to\pm\infty}\xi\cdot p_{\xi}(\xi)=0$, where $n_{\xi}$ is the sample complexity of estimating $\mathbb{E}_{p_{\xi}(\xi)}[\xi\cdot\nabla_{\xi}\log p_{\xi}(\xi)]$ using samples from $p_{\xi}$. The second lower bound holds for $p_{\xi}(\xi)=\mathcal{N}(0,\sigma_{\xi}^2)$, for any $\sigma_{\xi}\in\mathbb{R}_+$.
\end{theorem}
We have shown two lower bounds---one for an arbitrary distribution $p_{\xi}$ (in terms of a sample complexity $n_{\xi}$ related to $p_{\xi}$), and one for the specific choice where $p_{\xi}$ is Gaussian (as is the case in our upper bound). Also, note that our upper bound depends on choosing the action noise to have variance $\sigma_{\zeta}$. In principle, the first lower bound holds even if $p_{\xi}$ depends on the problem parameters; however, then $n_{\xi}$ may depend on these parameters as well. The second lower bound is independent of the the action noise $\sigma_{\xi}$, so it holds even if $\sigma_{\xi}$ depends on the problem parameters.

\begin{remark}
\rm
Note that the upper and lower bounds have a gap on the order of $\epsilon^{1/2}$. We believe that this gap is due to limitations in our analysis. In particular, our lower bounds depend on a lower bound on the tail of the $\chi_n^2$ distribution, which has exponential tails. In contrast, our other lower bounds depend on Gaussian tails, which are doubly exponential. Intuitively, since the $\chi_n^2$ distribution has a longer tail, it should not have lower sample complexity.
\end{remark}

\begin{remark}
\rm
Note that the second lower bound contains a dependence on $\delta^{-1/2}$, which is unusual. However, this term only has a role if the first term in the minimum is very large. Furthermore, the first term depends as usual on $\log(1/\delta)$ (which is not shown since we omit log factors).
\end{remark}

\begin{remark}
\rm
Actor-critic approaches reduce variance by using function approximation to obtain lower variance estimates of the advantage $\tilde{A}_{\theta}^{(t)}$~\citep{schulman2015high}. However, our lower bounds hold even if the advantage is known exactly. Thus, while actor-critic approaches can reduce variance, they do not affect our main insight that these estimates remain noisy for nearly deterministic dynamical systems.
\end{remark}

For the finite-difference policy gradient:
\begin{theorem}
\label{thm:fdupper}
The sample complexity of $\hat{D}_{\text{FD}}(\theta)-\nabla_{\theta}J(\theta)$ satisfies
\begin{align*}
\sqrt{n_{\text{FD}}(\epsilon,\delta)}&=\tilde{O}\left(T^9\bar{L}_{R_{\theta}}^2\bar{L}_{f_{\theta}}^{5T}\sigma_{\zeta}d_A^2\sqrt{d_{\Theta}}/\epsilon^2\right) \\
\sqrt{n_{\text{FD}}(\epsilon,\delta)}&=\tilde{\Omega}\left(T\bar{L}_{f_{\theta}}^{3(T-3)}\sigma_{\zeta}d_{\Theta}/\epsilon^2\right).
\end{align*}
The first bound (i.e., the upper bound) holds for a choice $\lambda=O(\epsilon/T^5\bar{L}_{R_{\theta}}\bar{L}_{f_{\theta}}^{4T}d_A)$. The second bound (i.e., the lower bound) holds for any $\lambda\in\mathbb{R}_+$, $\epsilon\le1$, and $\delta\le1/2$,
\end{theorem}
Note that our upper bound is for the choice $\lambda=O(\epsilon)$, but our lower bound holds for arbitrary $\lambda$.

\begin{remark}
\rm
In an abuse of notation, in Theorem~\ref{thm:fdupper}, we have ignored the fact that $n_{\text{FD}}$ must always be at least $2d_{\Theta}$; in particular, it does not go to zero as $\sigma_{\zeta}$ goes to zero. This discrepancy in Theorem~\ref{thm:fdupper} arises because there is an implicit assumption we use when inverting Hoeffding’s inequality that $n\ge1$---more precisely, Hoeffding’s inequality gives a bound of the form
\begin{align*}
\text{Pr}[\hat{\mu}_X^{(n)}\ge\epsilon]\le\delta,
\end{align*}
where $\hat{\mu}_X^{(n)}$ is an estimate of $\mu_X=\mathbb{E}[X]$ using $n$ samples, and $\delta\ge e^{-n\epsilon^2/(2\sigma^2)}$. Solving for $n$ yields $n\ge2\sigma^2\log(1/\delta)/\epsilon^2$. However, if $\sigma=0$, then $\delta$ is not well defined, so it does not mean we can get an estimate of $\mu_X$ using $n=0$ samples; instead, we need to take $n=1$. In our proof of Theorem~\ref{thm:fdupper}, we apply Hoeffding’s inequality $2d_{\Theta}$ times (since we estimate the gradient of each component separately), so we need $n\ge2d_{\Theta}$.
\end{remark}

\textbf{Proof strategy.}
We give a high-level overview of our proof strategy, focusing on Theorem~\ref{thm:mbupper}. Our proof proceeds in two steps. First, we prove an upper bound
\begin{align}
\label{eqn:keystep}
|\hat{D}_{\text{MB}}(\theta)-\nabla_{\theta}J(\theta)|\le AE+B,
\end{align}
where $E=T^{-1}\sum_{t=0}^{T-1}\|\zeta_t\|$ and $A,B\in\mathbb{R}_+$ do not depend on $\vec{\zeta}$. This step uses induction based on the recursive structure of $V_{\theta}$. Second, we prove Lemma~\ref{lem:subgaussianbound}; we state a simplified version:
\begin{lemma}
Let $X$ be a $\sigma_X$-sub-Gaussian random vector over $\mathbb{R}^d$, and let $Y$ be a random vector over $\mathbb{R}^{d'}$ satisfying $\|Y\|\le A\|X\|_1+B$, where $A,B\in\mathbb{R}_+$. Then $Y$ is $\sigma_Y$-sub-Gaussian, where $\sigma_Y=\tilde{O}(A\sigma_Xd+B)$.
\end{lemma}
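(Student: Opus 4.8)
The plan is to reduce the statement to a tail bound for the scalar random variable $\|X\|_1$ and then push that bound through the hypothesis $\|Y\|\le A\|X\|_1+B$. The only property of $X$ I will use is that each coordinate $X_i$ of a $\sigma_X$-sub-Gaussian vector is a $\sigma_X$-sub-Gaussian scalar---apply the defining inequality in the coordinate direction---so that $\Pr[|X_i|\ge u]\le 2e^{-u^2/2\sigma_X^2}$ and $\mathbb{E}|X_i|=O(\sigma_X)$.

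\emph{Step 1: concentration of $\|X\|_1$.} Use $\|X\|_1\le d\|X\|_\infty=d\max_i|X_i|$ and a union bound over the $d$ coordinates to get $\Pr[\|X\|_1\ge t]\le\Pr[\|X\|_\infty\ge t/d]\le 2d\,e^{-t^2/2d^2\sigma_X^2}$; integrating the tail also gives $\mathbb{E}\|X\|_1\le\sum_i\mathbb{E}|X_i|=O(\sigma_X d)$. (An alternative is $\|X\|_1\le\sqrt d\,\|X\|_2$ combined with the standard $\varepsilon$-net bound $\Pr[\|X\|_2\ge t]\le 5^d\max_{\|u\|=1}\Pr[\langle u,X\rangle\ge t/2]$; this yields a comparable estimate but loses an exponential prefactor in $d$ rather than a polynomial one, so the $\|X\|_\infty$ route is preferable.)

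\emph{Step 2: transfer to $Y$ and absorb the nuisance factors.} From $\|Y\|\le A\|X\|_1+B$ we get $\mathbb{E}\|Y\|=O(A\sigma_X d+B)$, hence $\|\mathbb{E}Y\|\le\mathbb{E}\|Y\|=O(A\sigma_X d+B)$ and $\|Y-\mathbb{E}Y\|\le A\|X\|_1+O(A\sigma_X d+B)$, so $\Pr[\|Y-\mathbb{E}Y\|\ge t]\le 2d\,\exp\!\big(-(t-c)_+^2/2A^2d^2\sigma_X^2\big)$ for a suitable $c=O(A\sigma_X d+B)$. It remains to massage this ``polynomial prefactor times shifted Gaussian tail'' into the clean definition of a $\sigma_Y$-sub-Gaussian vector. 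Take $\sigma_Y=\Theta\big(A\sigma_X d\sqrt{\log d}+B\big)=\tilde O(A\sigma_X d+B)$: for $t$ below a fixed multiple of $c$ the target bound (on the tail, or on $\mathbb{E}e^{\lambda\langle u,Y-\mathbb{E}Y\rangle}$) is automatic because $\sigma_Y\gtrsim c$; for larger $t$ the Gaussian factor $e^{-t^2/2A^2d^2\sigma_X^2}$ dominates the $\log(2d)$ coming from the prefactor (and, for the per-direction MGF form of the definition, the term linear in $|\lambda|$ is removed by $|\lambda|c\le\tfrac12\lambda^2c^2+\tfrac12$). Either way $Y$ is $\sigma_Y$-sub-Gaussian with $\sigma_Y=\tilde O(A\sigma_X d+B)$.

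The routine parts are Step 1 and the substitution of the $\|X\|_1$ tail into $\|Y\|\le A\|X\|_1+B$. The step that takes care---and the main obstacle---is the bookkeeping in Step 2: reconciling the crude estimate with whichever precise definition of ``$\sigma$-sub-Gaussian random vector'' is adopted in Appendix~\ref{sec:subgaussianappendix}, and checking that the resulting parameter is $\tilde O(A\sigma_X d+B)$ rather than $\tilde O(A\sigma_X\sqrt d+B)$---the factor $d$ (not $\sqrt d$) is forced because $\|X\|_1$ concentrates around $\Theta(\sigma_X d)$, not around $0$. This is exactly where the $\tilde O$ absorbs the stray $\sqrt{\log d}$, and where the full (non-simplified) version of the lemma presumably picks up its logarithmic factors.
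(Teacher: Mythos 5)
Your proposal is correct and follows essentially the same route as the paper's proof of Lemma~\ref{lem:subgaussianbound}: a coordinate-wise union bound gives $\Pr[\|X\|_1\ge t]\le 2d\,e^{-t^2/2d^2\sigma_X^2}$, this is pushed through $\|Y\|\le A\|X\|_1+B$, and a case split on $t$ versus $\max\{\Theta(A\sigma_Xd\sqrt{\log d}),\Theta(B)\}$ absorbs the prefactor $2d$ and the additive shift into a parameter $\tilde O(A\sigma_Xd+B)$, after which a standard tail-bound-to-sub-Gaussian conversion finishes. The only (harmless) difference is that you explicitly center $Y$ at $\mathbb{E}Y$, whereas the paper applies the conversion to $Y$ directly.
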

Combined with (\ref{eqn:keystep}), we conclude that $\hat{D}_{\text{MB}}(\theta)-\nabla_{\theta}J(\theta)$ is sub-Gaussian, from which we can use Hoeffding's inequality (see Lemma~\ref{lem:hoeffding}) to complete the proof. For the lower bound, we construct a system where $J(\theta)$ is Gaussian. The proof of Theorem~\ref{thm:pgupper} follows similarly, except we need to use analogous results for sub-exponential random variables. In particular, we prove Lemma~\ref{lem:subexponentialbound}, an analog of Lemma~\ref{lem:subgaussianbound}. The proof of Theorem~\ref{thm:fdupper} also follows similarly, but we need to account for the bias in the finite-difference estimate of $\nabla_{\theta}J(\theta)$ from Theorem~\ref{thm:fd}.

\begin{figure*}[t]
\centering
\begin{tabular}{ccc}
\includegraphics[width=0.3\textwidth]{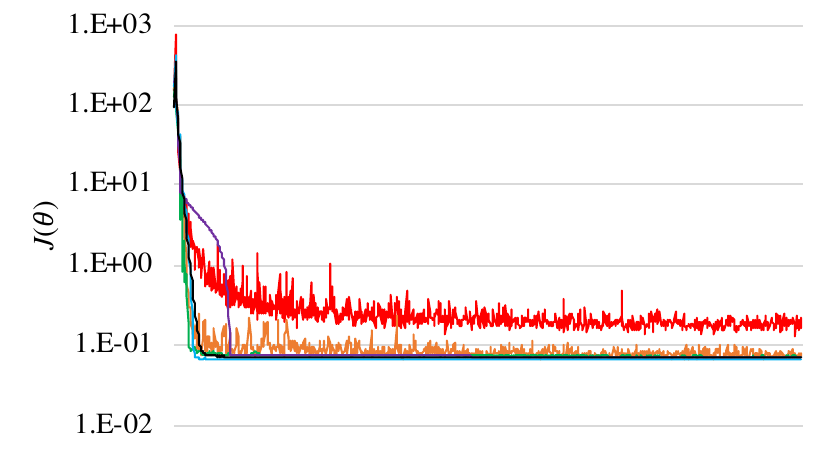} &
\includegraphics[width=0.3\textwidth]{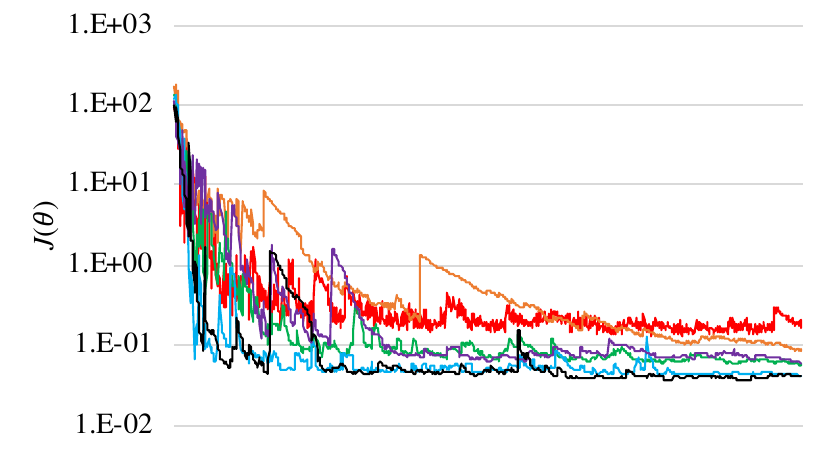} &
\includegraphics[width=0.3\textwidth]{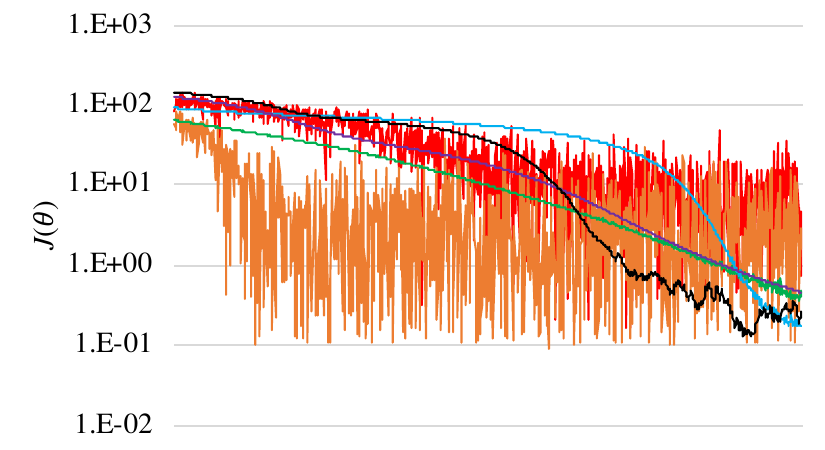} \\
Model-Based &
Finite-Difference &
Policy Gradient Theorem
\end{tabular}
\caption{The cumulative expected reward $J(\theta)$ as a function of the number of gradient steps $i\in\{1,2,...,1000\}$. The black, purple, blue, green, orange, and red curves correspond to $\sigma_{\zeta}=\{10^{-6},10^{-5},10^{-4},10^{-3},10^{-2},10^{-1}\}$, respectively. The $x$-axis is the number of gradient steps taken (or equivalently, the number of rollouts), and the $y$-axis is $-J(\theta)$.}
\label{fig:exp}
\end{figure*}

\section{Discussion}

\textbf{Dependence on $\sigma_{\zeta}$.}
Both $n_{\text{MB}}$ and $n_{\text{FD}}$ scale linearly in $\sigma_{\zeta}$. Thus, the corresponding algorithms perform very well when $\sigma_{\zeta}$ is small. In contrast, $n_{\text{PG}}$ does not become small when $\sigma_{\zeta}$ becomes small. Intuitively, if $p_{\xi}$ is wide, then the action noise adds uncertainty to $\hat{D}_{\text{PG}}(\theta)$. On the other hand, if $p_{\xi}$ is narrow, then $\nabla_{\theta}\log\tilde{\pi}_{\theta}(a\mid s)=\nabla_{\theta}\log p_{\xi}(a-\pi_{\theta}(s))$ becomes large---in particular, $p_{\xi}$ must change rapidly for some values of $\xi$, and must have large gradient at such values of $\xi$.

A key point is that in the first lower bound for $n_{\text{PG}}$ (i.e., for arbitrary $p_{\xi}$), even though we do not know its explicit dependence on $\epsilon$, $\delta$, $T$, and $\bar{L}_{f_{\theta}}$, we know that it is completely independent of $\sigma_{\zeta}$. Thus, regardless of how $p_{\xi}$ is chosen (e.g., even if it chosen based on the problem parameters), the sample complexity does not become small as $\sigma_{\zeta}$ becomes small.


\textbf{Full determinism ($\sigma_{\zeta}=0$).}
%
When $\sigma_{\zeta}=0$, we have $n_{\text{MB}}=1$ (i.e., we only need a single sample to estimate $\nabla_{\theta}J(\theta)$) and $n_{\text{FD}}=2d_{\Theta}$ (i.e., we need two samples to estimate the derivative of each parameter, taking $\lambda$ small enough to get $\epsilon$ error). For the case of $n_{\text{PG}}$, our lower bound in Theorem~\ref{thm:pgupper} still holds---the dynamical system we use to obtain the lower bound has no noise in the dynamics. In particular, a large number of samples are still needed to obtain good estimates (i.e., possibly exponential in $T$).

\textbf{Dependence on $\epsilon$.}
Both $n_{\text{MB}}$ and $n_{\text{PG}}$ depend quadratically on $\epsilon$ (ignoring the gap between the upper and lower bounds for $n_{\text{PG}}$). In contrast, $n_{\text{FD}}$ depends quartically on $\epsilon$. This gap arises because according to Theorem~\ref{thm:fd}, the finite-differences error of $\hat{D}_{\text{FD}}(\theta)$ (assuming there is no noise) depends linearly on $\lambda$. Thus, we must choose $\lambda=O(\epsilon)$ to obtain error at most $\epsilon$. If the dynamical system and control policy are both linear, then this error goes away, so the dependence on $\epsilon$ becomes quadratic.

\textbf{Dependence on $d_{\Theta}$.}
Only $n_{\text{FD}}$ depends on $d_{\Theta}$---whereas the other two algorithms make use of the fact that we can compute $\nabla_{\theta}\pi_{\theta}$, the finite-difference approximation ignores this ability.

\textbf{Dependence on $T$.}
All of the sample complexities depend exponentially on $T$. As we show in our lower bounds, this dependence is unavoidable---it arises from the fact that the dynamics cause the state (and therefore the rewards) to grow exponentially large in $T$. A common assumption made in prior work is that the rewards are bounded uniformly by $R_{\text{max}}\in\mathbb{R}_+$~\citep{kearns2002near,kakade2003sample}. Intuitively, our results indicate that without stronger assumptions, $R_{\text{max}}$ may be exponentially large. In practice, rewards for continuous control tasks are often quadratic, and can indeed be exponentially in magnitude.

An important aspect is that estimation is substantially easier when the current policy is good. In our bounds, the base of the exponential dependence is always $\bar{L}_{f_{\theta}}$. If the initial policy $\pi_{\theta}$ provides relatively stable control, then we may expect that $L_{f_{\theta}}\le1$---i.e., the states remain bounded in magnitude. Then, we have $\bar{L}_{f_{\theta}}=1$, so our bounds no longer depend exponentially on $T$. This insight suggests the importance of good initialization for fast estimation.

Indeed, policy gradient estimators can have high variance in practice. As an example, consider the cart-pole problem with continuous action space, with random initial state and where the reward function is the negative distance to origin. We empirically estimated that the MSE of the model-based policy gradient estimator using $n=1$ on a randomly initialized policy for this benchmark is $3.5\times10^7$. This error is substantially reduced when the policy is stable---for a trained cart-pole policy, we estimate that the MSE of the model-based policy gradient estimator is just $5.2\times10^{-2}$.

\section{Experiments}

We empirically evaluated the effect of $\sigma_{\zeta}$ on the performance of the different algorithms.

\textbf{Dynamical system.}
We use the inverted pendulum ~\citep{tedrake2018underactuated} (specifically, using the dynamics from OpenAI Gym~\citep{brockman2016openai}), which has state space $S=\mathbb{R}^2$ (i.e., angle $\vartheta$ and angular velocity $\omega$) and actions $A=\mathbb{R}$ (i.e., applied torque). Letting $f$ be the (deterministic) pendulum dynamics, we consider the system $s_{t+1}=f(s_t,a_t)+\zeta_t$, where $\zeta_t\sim\mathcal{N}(0,\sigma_{\zeta}^2)$ i.i.d. We use the rewards
\begin{align*}
R((\vartheta,\omega),a)=-(w_{\vartheta}\cdot(\vartheta-\vartheta_0)^2+w_{\omega}\cdot\omega^2+w_a\cdot a^2),
\end{align*}
where $\vartheta_0$ is the angle corresponding to the upright position, and $w_{\vartheta}=1$, $w_{\omega}=10^{-1}$, and $w_a=10^{-2}$. Our goal is to control the system over a horizon of $T=50$ steps, from a fixed start state $s_0=(\vartheta_0',0)$, where $\vartheta_0'=0.05$. For the control policy, we used a neural network $\pi_{\theta}$ with a single hidden layer with 100 neurons, ReLU activations, and linear outputs. As usual, we randomly initialize the weights; to reduce variance, we initialized the policy to have a reasonably high reward by running our model-based algorithm until $J(\theta)\ge-100$.

\textbf{Algorithms.}
We use stochastic gradient descent in conjunction with each of the three estimation algorithms. On each gradient step, we use a single sample to estimate the gradient, and we take 1000 gradient steps. We modify the finite-difference algorithm to use a single random sample $\nu\sim\text{Uniform}(S^{d_{\Theta}-1})$ (i.e., the uniform distribution on the unit sphere in $\mathbb{R}^{d_{\Theta}}$), rather than summing over the $d_{\Theta}$ basis vectors $\nu^{(k)}$. This choice may improve the dependence of the sample complexity on $d_{\Theta}$; however, it should not affect dependence on $\sigma_{\zeta}$, which is our parameter of interest.

For the algorithm based on the policy gradient theorem, we use action noise $\xi\sim\mathcal{N}(0,\sigma_{\xi}I_{d_A})$. For each choice of $\sigma_{\zeta}$, we used cross-validation to identify the optimal hyperparameters: the learning rate $\upsilon$ (for all algorithms), the parameter $\lambda$ (for the finite-differences algorithm), and the action noise $\sigma_{\xi}$ (for the algorithm based on the policy gradient theorem).

%

\textbf{Results.}
Average the results of each algorithm over 20 runs; the algorithms have very high variance, so we discard runs that do not converge. In Figure~\ref{fig:exp}, we show the learning curves for $\sigma_{\zeta}\in\{10^{-6},10^{-5},10^{-4},10^{-3},10^{-2},10^{-1}\}$ (i.e., $J(\theta)$ as a function of the number of gradient steps). The darker colors correspond to smaller noise. We show enlarged versions of these plots in Appendix~\ref{sec:expappendix}.

Note that unlike the other two algorithms, the finite-difference algorithm actually uses 2000 sampled rollouts (since it uses two per gradient step). However, this detail does not affect our insights regarding the relative convergence rate of different algorithms for different $\sigma_{\zeta}$.

Our key finding is that the learning curves for the model-based and finite-differences are ordered based on the choice of $\sigma_{\zeta}$---i.e., the curves tend to converge more quickly for smaller choices of $\sigma_{\zeta}$. This effect is most apparent in the curves for the finite-differences algorithms, where curves for smaller $\sigma_{\zeta}$ (black and blue) converge much faster than those for larger $\sigma_{\zeta}$ (red and orange). In contrast, the learning curves for the policy gradient based algorithm do not have strong dependence on $\sigma_{\zeta}$. For example, the fastest curve to converge (at least initially) for the policy gradient based algorithm is for our second-largest choice $\sigma_{\zeta}=10^{-2}$ (orange), whereas the slowest to converge is for $\sigma_{\zeta}=10^{-4}$ (blue). These results mirror our theoretical insights.

Finally, as expected, the model-based algorithm converges most quickly, followed by the finite-differences and policy gradient theorem based algorithms.

\section{Conclusion}

We have analyzed the sample complexity of algorithms for estimating the policy gradient for nearly deterministic dynamical systems. Future work includes leveraging these results in safe reinforcement learning algorithms, and understanding the sample complexity of optimizing $J(\theta)$.

\subsubsection*{Acknowledgements}

This work was supported by NSF CCF-1910769.

\bibliography{paper}

\begin{thebibliography}{59}
\providecommand{\natexlab}[1]{#1}
\providecommand{\url}[1]{\texttt{#1}}
\expandafter\ifx\csname urlstyle\endcsname\relax
  \providecommand{\doi}[1]{doi: #1}\else
  \providecommand{\doi}{doi: \begingroup \urlstyle{rm}\Url}\fi

\bibitem[Abbeel et~al.(2007)Abbeel, Coates, Quigley, and
  Ng]{abbeel2007application}
Pieter Abbeel, Adam Coates, Morgan Quigley, and Andrew~Y Ng.
\newblock An application of reinforcement learning to aerobatic helicopter
  flight.
\newblock In \emph{Advances in neural information processing systems}, pages
  1--8, 2007.

\bibitem[Akametalu et~al.(2014)Akametalu, Kaynama, Fisac, Zeilinger, Gillula,
  and Tomlin]{akametalu2014reachability}
Anayo~K Akametalu, Shahab Kaynama, Jaime~F Fisac, Melanie~Nicole Zeilinger,
  Jeremy~H Gillula, and Claire~J Tomlin.
\newblock Reachability-based safe learning with gaussian processes.
\newblock In \emph{CDC}, pages 1424--1431. Citeseer, 2014.

\bibitem[Andrychowicz et~al.(2018)Andrychowicz, Baker, Chociej, Jozefowicz,
  McGrew, Pachocki, Petron, Plappert, Powell, Ray,
  et~al.]{andrychowicz2018learning}
Marcin Andrychowicz, Bowen Baker, Maciek Chociej, Rafal Jozefowicz, Bob McGrew,
  Jakub Pachocki, Arthur Petron, Matthias Plappert, Glenn Powell, Alex Ray,
  et~al.
\newblock Learning dexterous in-hand manipulation.
\newblock \emph{arXiv preprint arXiv:1808.00177}, 2018.

\bibitem[Antos et~al.(2008)Antos, Szepesv{\'a}ri, and Munos]{antos2008learning}
Andr{\'a}s Antos, Csaba Szepesv{\'a}ri, and R{\'e}mi Munos.
\newblock Learning near-optimal policies with bellman-residual minimization
  based fitted policy iteration and a single sample path.
\newblock \emph{Machine Learning}, 71\penalty0 (1):\penalty0 89--129, 2008.

\bibitem[Berkenkamp et~al.(2017)Berkenkamp, Turchetta, Schoellig, and
  Krause]{berkenkamp2017safe}
Felix Berkenkamp, Matteo Turchetta, Angela Schoellig, and Andreas Krause.
\newblock Safe model-based reinforcement learning with stability guarantees.
\newblock In \emph{Advances in Neural Information Processing Systems}, pages
  908--918, 2017.

\bibitem[Bottou and Bousquet(2008)]{bottou2008tradeoffs}
L{\'e}on Bottou and Olivier Bousquet.
\newblock The tradeoffs of large scale learning.
\newblock In \emph{Advances in neural information processing systems}, pages
  161--168, 2008.

\bibitem[Brockman et~al.(2016)Brockman, Cheung, Pettersson, Schneider,
  Schulman, Tang, and Zaremba]{brockman2016openai}
Greg Brockman, Vicki Cheung, Ludwig Pettersson, Jonas Schneider, John Schulman,
  Jie Tang, and Wojciech Zaremba.
\newblock Openai gym.
\newblock \emph{arXiv preprint arXiv:1606.01540}, 2016.

\bibitem[Chang et~al.(2011)Chang, Cosman, and Milstein]{chang2011chernoff}
Seok-Ho Chang, Pamela~C Cosman, and Laurence~B Milstein.
\newblock Chernoff-type bounds for the gaussian error function.
\newblock \emph{IEEE Transactions on Communications}, 59\penalty0
  (11):\penalty0 2939--2944, 2011.

\bibitem[Collins et~al.(2005)Collins, Ruina, Tedrake, and
  Wisse]{collins2005efficient}
Steve Collins, Andy Ruina, Russ Tedrake, and Martijn Wisse.
\newblock Efficient bipedal robots based on passive-dynamic walkers.
\newblock \emph{Science}, 307\penalty0 (5712):\penalty0 1082--1085, 2005.

\bibitem[Dalal et~al.(2018)Dalal, Sz{\"o}r{\'e}nyi, Thoppe, and
  Mannor]{dalal2018finite}
Gal Dalal, Bal{\'a}zs Sz{\"o}r{\'e}nyi, Gugan Thoppe, and Shie Mannor.
\newblock Finite sample analyses for td (0) with function approximation.
\newblock In \emph{Thirty-Second AAAI Conference on Artificial Intelligence},
  2018.

\bibitem[Dean et~al.(2018{\natexlab{a}})Dean, Mania, Matni, Recht, and
  Tu]{dean2018regret}
Sarah Dean, Horia Mania, Nikolai Matni, Benjamin Recht, and Stephen Tu.
\newblock Regret bounds for robust adaptive control of the linear quadratic
  regulator.
\newblock In \emph{NIPS}, 2018{\natexlab{a}}.

\bibitem[Dean et~al.(2018{\natexlab{b}})Dean, Tu, Matni, and
  Recht]{dean2018safely}
Sarah Dean, Stephen Tu, Nikolai Matni, and Benjamin Recht.
\newblock Safely learning to control the constrained linear quadratic
  regulator.
\newblock \emph{arXiv preprint arXiv:1809.10121}, 2018{\natexlab{b}}.

\bibitem[Farahmand et~al.(2015)Farahmand, Precup, Barreto, and
  Ghavamzadeh]{farahmand2015classification}
Amir-massoud Farahmand, Doina Precup, Andr{\'e}~M.S. Barreto, and Mohammad
  Ghavamzadeh.
\newblock Classification-based approximate policy iteration.
\newblock \emph{IEEE Transactions on Automatic Control}, 2015.

\bibitem[Farahmand et~al.(2016)Farahmand, Ghavamzadeh, Szepesvári, and
  Mannor]{farahmand2016regularized}
Amir-massoud Farahmand, Mohammad Ghavamzadeh, Csaba Szepesvári, and Shie
  Mannor.
\newblock Regularized policy iteration with nonparametric function spaces.
\newblock \emph{JMLR}, 2016.

\bibitem[Finn et~al.(2017)Finn, Abbeel, and Levine]{finn2017model}
Chelsea Finn, Pieter Abbeel, and Sergey Levine.
\newblock Model-agnostic meta-learning for fast adaptation of deep networks.
\newblock In \emph{ICML}, 2017.

\bibitem[Jin et~al.(2018)Jin, Allen-Zhu, Bubeck, and Jordan]{jin2018q}
Chi Jin, Zeyuan Allen-Zhu, Sebastien Bubeck, and Michael~I Jordan.
\newblock Is q-learning provably efficient?
\newblock In \emph{Advances in Neural Information Processing Systems}, pages
  4868--4878, 2018.

\bibitem[Johannink et~al.(2018)Johannink, Bahl, Nair, Luo, Kumar, Loskyll,
  Ojea, Solowjow, and Levine]{johannink2018residual}
Tobias Johannink, Shikhar Bahl, Ashvin Nair, Jianlan Luo, Avinash Kumar,
  Matthias Loskyll, Juan~Aparicio Ojea, Eugen Solowjow, and Sergey Levine.
\newblock Residual reinforcement learning for robot control.
\newblock \emph{arXiv preprint arXiv:1812.03201}, 2018.

\bibitem[Kakade et~al.(2003)]{kakade2003sample}
Sham~Machandranath Kakade et~al.
\newblock \emph{On the sample complexity of reinforcement learning}.
\newblock PhD thesis, University of London London, England, 2003.

\bibitem[Kearns and Singh(2002)]{kearns2002near}
Michael Kearns and Satinder Singh.
\newblock Near-optimal reinforcement learning in polynomial time.
\newblock \emph{Machine learning}, 49\penalty0 (2-3):\penalty0 209--232, 2002.

\bibitem[Kingma and Welling(2014)]{kingma2014auto}
Diederik~P Kingma and Max Welling.
\newblock Auto-encoding variational bayes.
\newblock In \emph{ICLR}, 2014.

\bibitem[Kober et~al.(2013)Kober, Bagnell, and Peters]{kober2013reinforcement}
Jens Kober, J~Andrew Bagnell, and Jan Peters.
\newblock Reinforcement learning in robotics: A survey.
\newblock \emph{The International Journal of Robotics Research}, 32\penalty0
  (11):\penalty0 1238--1274, 2013.

\bibitem[Kuindersma et~al.(2016)Kuindersma, Deits, Fallon, Valenzuela, Dai,
  Permenter, Koolen, Marion, and Tedrake]{kuindersma2016optimization}
Scott Kuindersma, Robin Deits, Maurice Fallon, Andr{\'e}s Valenzuela, Hongkai
  Dai, Frank Permenter, Twan Koolen, Pat Marion, and Russ Tedrake.
\newblock Optimization-based locomotion planning, estimation, and control
  design for the atlas humanoid robot.
\newblock \emph{Autonomous Robots}, 40\penalty0 (3):\penalty0 429--455, 2016.

\bibitem[Kwon et~al.(1983)Kwon, Bruckstein, and Kailath]{kwon1983stabilizing}
W~Hi Kwon, AM~Bruckstein, and T~Kailath.
\newblock Stabilizing state-feedback design via the moving horizon method.
\newblock \emph{International Journal of Control}, 37\penalty0 (3):\penalty0
  631--643, 1983.

\bibitem[Lattimore and Szepesv{\'a}ri(2018)]{lattimore2018bandit}
Tor Lattimore and Csaba Szepesv{\'a}ri.
\newblock Bandit algorithms.
\newblock \emph{preprint}, 2018.

\bibitem[Lazaric et~al.(2012)Lazaric, Ghavamzadeh, and
  Munos]{lazaric2012finite}
Alessandro Lazaric, Mohammad Ghavamzadeh, and R{\'e}mi Munos.
\newblock Finite-sample analysis of least-squares policy iteration.
\newblock \emph{Journal of Machine Learning Research}, 13\penalty0
  (Oct):\penalty0 3041--3074, 2012.

\bibitem[Levine and Koltun(2013)]{levine2013guided}
Sergey Levine and Vladlen Koltun.
\newblock Guided policy search.
\newblock In \emph{International Conference on Machine Learning}, pages 1--9,
  2013.

\bibitem[Levine et~al.(2016)Levine, Finn, Darrell, and Abbeel]{levine2016end}
Sergey Levine, Chelsea Finn, Trevor Darrell, and Pieter Abbeel.
\newblock End-to-end training of deep visuomotor policies.
\newblock \emph{The Journal of Machine Learning Research}, 17\penalty0
  (1):\penalty0 1334--1373, 2016.

\bibitem[Levinson et~al.(2011)Levinson, Askeland, Becker, Dolson, Held, Kammel,
  Kolter, Langer, Pink, Pratt, et~al.]{levinson2011towards}
Jesse Levinson, Jake Askeland, Jan Becker, Jennifer Dolson, David Held, Soeren
  Kammel, J~Zico Kolter, Dirk Langer, Oliver Pink, Vaughan Pratt, et~al.
\newblock Towards fully autonomous driving: Systems and algorithms.
\newblock In \emph{Intelligent Vehicles Symposium (IV), 2011 IEEE}, pages
  163--168. IEEE, 2011.

\bibitem[Malik et~al.(2019)Malik, Pananjady, Bhatia, Khamaru, Bartlett, and
  Wainwright]{malik2018derivative}
Dhruv Malik, Ashwin Pananjady, Kush Bhatia, Koulik Khamaru, Peter~L Bartlett,
  and Martin~J Wainwright.
\newblock Derivative-free methods for policy optimization: Guarantees for
  linear quadratic systems.
\newblock In \emph{AISTATS}, 2019.

\bibitem[Mania et~al.(2018)Mania, Guy, and Recht]{mania2018simple}
Horia Mania, Aurelia Guy, and Benjamin Recht.
\newblock Simple random search provides a competitive approach to reinforcement
  learning.
\newblock In \emph{NIPS}, 2018.

\bibitem[Mnih et~al.(2015)Mnih, Kavukcuoglu, Silver, Rusu, Veness, Bellemare,
  Graves, Riedmiller, Fidjeland, Ostrovski, et~al.]{mnih2015human}
Volodymyr Mnih, Koray Kavukcuoglu, David Silver, Andrei~A Rusu, Joel Veness,
  Marc~G Bellemare, Alex Graves, Martin Riedmiller, Andreas~K Fidjeland, Georg
  Ostrovski, et~al.
\newblock Human-level control through deep reinforcement learning.
\newblock \emph{Nature}, 518\penalty0 (7540):\penalty0 529, 2015.

\bibitem[Montemerlo et~al.(2008)Montemerlo, Becker, Bhat, Dahlkamp, Dolgov,
  Ettinger, Haehnel, Hilden, Hoffmann, Huhnke, et~al.]{montemerlo2008junior}
Michael Montemerlo, Jan Becker, Suhrid Bhat, Hendrik Dahlkamp, Dmitri Dolgov,
  Scott Ettinger, Dirk Haehnel, Tim Hilden, Gabe Hoffmann, Burkhard Huhnke,
  et~al.
\newblock Junior: The stanford entry in the urban challenge.
\newblock \emph{Journal of field Robotics}, 25\penalty0 (9):\penalty0 569--597,
  2008.

\bibitem[Moulines and Bach(2011)]{moulines2011non}
Eric Moulines and Francis~R Bach.
\newblock Non-asymptotic analysis of stochastic approximation algorithms for
  machine learning.
\newblock In \emph{Advances in Neural Information Processing Systems}, pages
  451--459, 2011.

\bibitem[Munos and Szepesv{\'a}ri(2008)]{munos2008finite}
R{\'e}mi Munos and Csaba Szepesv{\'a}ri.
\newblock Finite-time bounds for fitted value iteration.
\newblock \emph{Journal of Machine Learning Research}, 9\penalty0
  (May):\penalty0 815--857, 2008.

\bibitem[Robbins(1955)]{robbins1955remark}
Herbert Robbins.
\newblock A remark on stirling's formula.
\newblock \emph{The American mathematical monthly}, 62\penalty0 (1):\penalty0
  26--29, 1955.

\bibitem[Robbins and Monro(1985)]{robbins1985stochastic}
Herbert Robbins and Sutton Monro.
\newblock A stochastic approximation method.
\newblock In \emph{Herbert Robbins Selected Papers}, pages 102--109. Springer,
  1985.

\bibitem[Roberts and Tedrake(2009)]{roberts2009signal}
John~W Roberts and Russ Tedrake.
\newblock Signal-to-noise ratio analysis of policy gradient algorithms.
\newblock In \emph{Advances in neural information processing systems}, pages
  1361--1368, 2009.

\bibitem[Ross and Bagnell(2012)]{ross2012agnostic}
Stephane Ross and J~Andrew Bagnell.
\newblock Agnostic system identification for model-based reinforcement
  learning.
\newblock In \emph{ICML}, 2012.

\bibitem[Rudin et~al.(1976)]{rudin1976principles}
Walter Rudin et~al.
\newblock \emph{Principles of mathematical analysis}, volume~3.
\newblock McGraw-hill New York, 1976.

\bibitem[Schulman et~al.(2015{\natexlab{a}})Schulman, Levine, Abbeel, Jordan,
  and Moritz]{schulman2015trust}
John Schulman, Sergey Levine, Pieter Abbeel, Michael Jordan, and Philipp
  Moritz.
\newblock Trust region policy optimization.
\newblock In \emph{International Conference on Machine Learning}, pages
  1889--1897, 2015{\natexlab{a}}.

\bibitem[Schulman et~al.(2015{\natexlab{b}})Schulman, Moritz, Levine, Jordan,
  and Abbeel]{schulman2015high}
John Schulman, Philipp Moritz, Sergey Levine, Michael Jordan, and Pieter
  Abbeel.
\newblock High-dimensional continuous control using generalized advantage
  estimation.
\newblock In \emph{ICLR}, 2015{\natexlab{b}}.

\bibitem[Schulman et~al.(2017)Schulman, Wolski, Dhariwal, Radford, and
  Klimov]{schulman2017proximal}
John Schulman, Filip Wolski, Prafulla Dhariwal, Alec Radford, and Oleg Klimov.
\newblock Proximal policy optimization algorithms.
\newblock \emph{arXiv preprint arXiv:1707.06347}, 2017.

\bibitem[Silver et~al.(2014)Silver, Lever, Heess, Degris, Wierstra, and
  Riedmiller]{silver2014deterministic}
David Silver, Guy Lever, Nicolas Heess, Thomas Degris, Daan Wierstra, and
  Martin Riedmiller.
\newblock Deterministic policy gradient algorithms.
\newblock In \emph{ICML}, 2014.

\bibitem[Spall et~al.(1992)]{spall1992multivariate}
James~C Spall et~al.
\newblock Multivariate stochastic approximation using a simultaneous
  perturbation gradient approximation.
\newblock \emph{IEEE transactions on automatic control}, 37\penalty0
  (3):\penalty0 332--341, 1992.

\bibitem[Stromberg(1994)]{stromberg1994probability}
Karl Stromberg.
\newblock \emph{Probability for analysts}.
\newblock CRC Press, 1994.

\bibitem[Sutton and Barto(2018)]{sutton2018reinforcement}
Richard~S Sutton and Andrew~G Barto.
\newblock \emph{Reinforcement learning: An introduction}.
\newblock MIT press, 2018.

\bibitem[Sutton et~al.(2000)Sutton, McAllester, Singh, and
  Mansour]{sutton2000policy}
Richard~S Sutton, David~A McAllester, Satinder~P Singh, and Yishay Mansour.
\newblock Policy gradient methods for reinforcement learning with function
  approximation.
\newblock In \emph{Advances in neural information processing systems}, pages
  1057--1063, 2000.

\bibitem[Tamar et~al.(2016)Tamar, Wu, Thomas, Levine, and
  Abbeel]{tamar2016value}
Aviv Tamar, Yi~Wu, Garrett Thomas, Sergey Levine, and Pieter Abbeel.
\newblock Value iteration networks.
\newblock In \emph{Advances in Neural Information Processing Systems}, pages
  2154--2162, 2016.

\bibitem[Taylor and Stone(2009)]{taylor2009transfer}
Matthew~E Taylor and Peter Stone.
\newblock Transfer learning for reinforcement learning domains: A survey.
\newblock \emph{Journal of Machine Learning Research}, 10\penalty0
  (Jul):\penalty0 1633--1685, 2009.

\bibitem[Tedrake(2018)]{tedrake2018underactuated}
Russ Tedrake.
\newblock \emph{Underactuated Robotics: Algorithms for Walking, Running,
  Swimming, Flying, and Manipulation}.
\newblock 2018.
\newblock URL \url{http://underactuated.mit.edu/}.

\bibitem[Todorov et~al.(2012)Todorov, Erez, and Tassa]{todorov2012mujoco}
Emanuel Todorov, Tom Erez, and Yuval Tassa.
\newblock Mujoco: A physics engine for model-based control.
\newblock In \emph{Intelligent Robots and Systems (IROS), 2012 IEEE/RSJ
  International Conference on}, pages 5026--5033. IEEE, 2012.

\bibitem[Tosatto et~al.(2017)Tosatto, Pirotta, D’Eramo, and
  Restelli]{tosatto2017boosted}
Samuele Tosatto, Matteo Pirotta, Carlo D’Eramo, and Marcello Restelli.
\newblock Boosted fitted q-iteration.
\newblock In \emph{ICML}, 2017.

\bibitem[Tu and Recht(2018{\natexlab{a}})]{tu2018gap}
Stephen Tu and Benjamin Recht.
\newblock The gap between model-based and model-free methods on the linear
  quadratic regulator: An asymptotic viewpoint.
\newblock \emph{arXiv preprint arXiv:1812.03565}, 2018{\natexlab{a}}.

\bibitem[Tu and Recht(2018{\natexlab{b}})]{tu2018least}
Stephen Tu and Benjamin Recht.
\newblock Least-squares temporal difference learning for the linear quadratic
  regulator.
\newblock In \emph{ICML}, 2018{\natexlab{b}}.

\bibitem[Vemula et~al.(2019)Vemula, Sun, and Bagnell]{vemula2019contrasting}
Anirudh Vemula, Wen Sun, and J.~Andrew Bagnell.
\newblock Contrasting exploration in parameter and action space: A zeroth order
  optimization perspective.
\newblock In \emph{AISTATS}, 2019.

\bibitem[Vershynin(2010)]{vershynin2010introduction}
Roman Vershynin.
\newblock Introduction to the non-asymptotic analysis of random matrices.
\newblock \emph{arXiv preprint arXiv:1011.3027}, 2010.

\bibitem[Wainwright(2019)]{wainwright2019high}
Martin~J. Wainwright.
\newblock \emph{High-dimensional statistics: A non-asymptotic viewpoint}.
\newblock Cambridge University Press, 2019.

\bibitem[Watkins and Dayan(1992)]{watkins1992q}
Christopher~JCH Watkins and Peter Dayan.
\newblock Q-learning.
\newblock \emph{Machine learning}, 8\penalty0 (3-4):\penalty0 279--292, 1992.

\bibitem[Williams(1992)]{williams1992simple}
Ronald~J Williams.
\newblock Simple statistical gradient-following algorithms for connectionist
  reinforcement learning.
\newblock \emph{Machine learning}, 8\penalty0 (3-4):\penalty0 229--256, 1992.

\end{thebibliography}
\bibliographystyle{plainnat}

\clearpage
\onecolumn
\appendix
\section{Proof of Theorem~\ref{thm:mbupper}}
\label{sec:mbupperproof}

\paragraph{Preliminaries.}

Note that the expected cumulative reward is equivalent to
\begin{align*}
J(\theta)&=V_{\theta}^{(0)}(s_0) \\
V_{\theta}^{(t)}(s)&=R_{\theta}(s)+\mathbb{E}_{p(\zeta)}\left[V_{\theta}^{(t+1)}(f_{\theta}(s)+\zeta)\right]\hspace{0.2in}(\forall t\in\{0,1,...,T-1\}) \\
V_{\theta}^{(T)}(s)&=0
\end{align*}
and the expected model-based policy gradient is
\begin{align*}
\nabla_{\theta}J(\theta)&=\nabla_{\theta}V_{\theta}^{(0)}(s_0) \\
\nabla_{\theta}V_{\theta}^{(t)}(s)
&=\nabla_{\theta}R_{\theta}(s)+\mathbb{E}_{p(\zeta)}\left[\nabla_{\theta}V_{\theta}^{(t+1)}(f_{\theta}(s)+\zeta)+\nabla_sV_{\theta}^{(t+1)}(f_{\theta}(s)+\zeta)\nabla_{\theta}f_{\theta}(s)\right] \\
\nabla_sV_{\theta}^{(t)}(s)
&=\nabla_sR_{\theta}(s)+\mathbb{E}_{p(\zeta)}\left[\nabla_sV_{\theta}^{(t+1)}(f_{\theta}(s)+\zeta)\nabla_sf_{\theta}(s)\right] \\
\nabla_{\theta}V_{\theta}^{(T)}(s)&=\nabla_sV_{\theta}^{(T)}(s)=0.
\end{align*}
Similarly, given a sample $\vec{\zeta}\sim p(\vec{\zeta})$, the stochastic approximation of the expected cumulative reward is
\begin{align*}
\hat{J}(\theta;\vec{\zeta})&=\hat{V}_{\theta}^{(0)}(s_0;\vec{\zeta}) \\
\hat{V}_{\theta}^{(t)}(s;\vec{\zeta})&=R_{\theta}(s)+\hat{V}_{\theta}^{(t+1)}(f_{\theta}(s)+\zeta_t;\vec{\zeta})\hspace{0.2in}(\forall t\in\{0,1,...,T-1\}) \\
\hat{V}_{\theta}^{(T)}(s;\vec{\zeta})&=0
\end{align*}
and the stochastic approximation of the model-based policy gradient is
\begin{align*}
\nabla_{\theta}\hat{J}(\theta;\vec{\zeta})&=\nabla_{\theta}\hat{V}_{\theta}^{(0)}(s_0;\vec{\zeta}) \\
\nabla_{\theta}\hat{V}_{\theta}^{(t)}(s;\vec{\zeta})&=\nabla_{\theta}R_{\theta}(s)+\nabla_{\theta}\hat{V}_{\theta}^{(t+1)}(f_{\theta}(s)+\zeta_t;\vec{\zeta})+\nabla_s\hat{V}_{\theta}^{(t+1)}(f_{\theta}(s)+\zeta_t;\vec{\zeta})\nabla_{\theta}f_{\theta}(s) \\
\nabla_s\hat{V}_{\theta}^{(t)}(s;\vec{\zeta})&=\nabla_sR_{\theta}(s)+\nabla_s\hat{V}_{\theta}^{(t+1)}(f_{\theta}(s)+\zeta_t;\vec{\zeta})\nabla_sf_{\theta}(s) \\
\nabla_{\theta}\hat{V}_{\theta}^{(T)}(s;\vec{\zeta})&=\nabla_s\hat{V}_s^{(T)}(s;\vec{\zeta})=0.
\end{align*}

\paragraph{Bounding the deviation of $\nabla_{\theta}\hat{V}_{\theta}^{(t)}$ from $\nabla{\theta}V_{\theta}^{(t)}$.}

We claim that for $t\in\{0,1,...,T\}$, we have
\begin{align*}
\|\nabla_{\theta}\hat{V}_{\theta}^{(t)}(s;\vec{\zeta})-\nabla_{\theta}V_{\theta}^{(t)}(s)\|&\le B_0^{(t)}(\vec{\zeta}) \\
\|\nabla_s\hat{V}_{\theta}^{(t)}(s;\vec{\zeta})-\nabla_sV_{\theta}^{(t)}(s)\|&\le B_1^{(t)}(\vec{\zeta})
\end{align*}
for all $\theta\in\Theta$ and $s\in S$, where
\begin{align*}
B_0^{(t)}(\vec{\zeta})&=\sum_{i=t}^{T-1}L_{f_{\theta}}B_1^{(i+1)}(\vec{\zeta})+L_{\nabla V}^{(i+1)}(L_{f_{\theta}}+1)(\|\zeta_i\|+\sigma_{\zeta}\sqrt{d_S}) \\
B_1^{(t)}(\vec{\zeta})&=\sum_{i=t}^{T-1}L_{\nabla V}^{(i+1)}L_{f_{\theta}}^{i-t+1}(\|\zeta_i\|+\sigma_{\zeta}\sqrt{d_S}) \\
B_0^{(T)}(\vec{\zeta})&=B_1^{(T)}(\vec{\zeta})=0,
\end{align*}
where $L_{\nabla V}^{(t)}$ is a Lipschitz constant for $\nabla V_{\theta}^{(t)}$. The base case $t=T$ follows trivially. Note that $\sigma_{\zeta}\sqrt{d_S}\ge\sqrt{\mathbb{E}_{p(\zeta)}[\|\zeta\|^2]}\ge\mathbb{E}_{p(\zeta)}[\|\zeta\|]$. Then, for $t\in\{0,1,...,T-1\}$, we have
\begin{align*}
\|\nabla_{\theta}\hat{V}_{\theta}^{(t)}(s;\vec{\zeta})-\nabla_{\theta}V_{\theta}^{(t)}(s)\|
\le&\left\|\nabla_{\theta}\hat{V}_{\theta}^{(t+1)}(f_{\theta}(s)+\zeta_t;\vec{\zeta})-\mathbb{E}_{p(\zeta)}\left[\nabla_{\theta}V_{\theta}^{(t+1)}(f_{\theta}(s)+\zeta)\right]\right\| \\
&+L_{f_{\theta}}\left\|\nabla_s\hat{V}_{\theta}^{(t+1)}(f_{\theta}(s)+\zeta_t;\vec{\zeta})-\mathbb{E}_{p(\zeta)}\left[\nabla_sV_{\theta}^{(t+1)}(f_{\theta}(s)+\zeta)\right]\right\| \\
\le&\left\|\nabla_{\theta}\hat{V}_{\theta}^{(t+1)}(f_{\theta}(s)+\zeta_t;\vec{\zeta})-\nabla_{\theta}V_{\theta}^{(t+1)}(f_{\theta}(s)+\zeta_t)\right\| \\
&+\mathbb{E}_{p(\zeta)}\left[\left\|\nabla_{\theta}V_{\theta}^{(t+1)}(f_{\theta}(s)+\zeta_t)-\nabla_{\theta}V_{\theta}^{(t+1)}(f_{\theta}(s)+\zeta)\right\|\right] \\
&+L_{f_{\theta}}\left\|\nabla_s\hat{V}_{\theta}^{(t+1)}(f_{\theta}(s)+\zeta_t;\vec{\zeta})-\nabla_sV_{\theta}^{(t+1)}(f_{\theta}(s)+\zeta_t)\right\| \\
&+L_{f_{\theta}}\mathbb{E}_{p(\zeta)}\left[\left\|\nabla_sV_{\theta}^{(t+1)}(f_{\theta}(s)+\zeta_t)-\nabla_sV_{\theta}^{(t+1)}(f_{\theta}(s)+\zeta)\right\|\right] \\
\le&B_0^{(t+1)}(\vec{\zeta})+L_{\nabla V}^{(t+1)}(\|\zeta_t\|+\sigma_{\zeta}\sqrt{d_S})+L_{f_{\theta}}B_1^{(t+1)}(\vec{\zeta})+L_{f_{\theta}}L_{\nabla V}^{(t+1)}(\|\zeta_t\|+\sigma_{\zeta}\sqrt{d_S}) \\
=&B_0^{(t+1)}(\vec{\zeta})+L_{f_{\theta}}B_1^{(t+1)}(\vec{\zeta})+L_{\nabla V}^{(t+1)}(L_{f_{\theta}}+1)(\|\zeta_t\|+\sigma_{\zeta}\sqrt{d_S}) \\
=&B_0^{(t)}(\vec{\zeta}).
\end{align*}
Similarly, we have
\begin{align*}
\|\nabla_s\hat{V}_{\theta}^{(t)}(s;\vec{\zeta})-\nabla_sV_{\theta}^{(t)}(s)\|
\le&L_{f_{\theta}}\left\|\nabla_s\hat{V}_{\theta}^{(t+1)}(f_{\theta}(s)+\zeta_t;\vec{\zeta})-\mathbb{E}_{p(\zeta)}\left[\nabla_sV_{\theta}^{(t+1)}(f_{\theta}(s)+\zeta)\right]\right\| \\
\le&L_{f_{\theta}}\left\|\nabla_s\hat{V}_{\theta}^{(t+1)}(f_{\theta}(s)+\zeta_t;\vec{\zeta})-\nabla_sV_{\theta}^{(t+1)}(f_{\theta}(s)+\zeta_t)\right\| \\
&+L_{f_{\theta}}\mathbb{E}_{p(\zeta)}\left[\left\|\nabla_sV_{\theta}^{(t+1)}(f_{\theta}(s)+\zeta_t)-\nabla_sV_{\theta}^{(t+1)}(f_{\theta}(s)+\zeta)\right\|\right] \\
\le&L_{f_{\theta}}\left(B_1^{(t+1)}(\vec{\zeta})+L_{\nabla V}^{(t+1)}(\|\zeta_t\|+\sigma_{\zeta}\sqrt{d_S})\right) \\
=&B_1^{(t)}(\vec{\zeta}).
\end{align*}
The claim follows.

\paragraph{Bounding the deviation of $\nabla_{\theta}\hat{J}$ from $\nabla_{\theta}J$.}

We claim that
\begin{align*}
\|\nabla_{\theta}\hat{J}(\theta;\vec{\zeta})-\nabla_{\theta}J(\theta)\|
\le132T^7\bar{L}_{R_{\theta}}\bar{L}_{f_{\theta}}^{5T}(E+\sigma_{\zeta}\sqrt{d_S}),
\end{align*}
where $E=T^{-1}\sum_{t=0}^{T-1}\|\zeta_t\|$. To this end, letting $L_{\nabla V}=\operatorname*{\arg\max}_{t\in\{0,1,...,T\}}L_{\nabla V}^{(t)}$, note that
\begin{align*}
B_1^{(t)}\le TL_{\nabla V}\bar{L}_{f_{\theta}}^{T-1}(E+\sigma_{\zeta}\sqrt{d_S})
\end{align*}
for $t\in\{1,2,...,T\}$, so
\begin{align*}
\|\nabla_{\theta}\hat{J}(\theta;\vec{\zeta})-\nabla_{\theta}J(\theta)\|
\le B_0^{(0)}(\vec{\zeta})
&=\sum_{i=0}^{T-1}L_{f_{\theta}}B_1^{(i+1)}(\vec{\zeta})+L_{\nabla V}(L_{f_{\theta}}+1)(\|\zeta_i\|+\sigma_{\zeta}\sqrt{d_S}) \\
&\le T^2L_{\nabla V}\bar{L}_{f_{\theta}}^T(E+\sigma_{\zeta}\sqrt{d_S})+TL_{\nabla V}(L_{f_{\theta}}+1)(E+\sigma_{\zeta}\sqrt{d_S}) \\
&\le3T^2L_{\nabla V}\bar{L}_{f_{\theta}}^T(E+\sigma_{\zeta}\sqrt{d_S}) \\
&\le132T^7\bar{L}_{R_{\theta}}\bar{L}_{f_{\theta}}^{5T}(E+\sigma_{\zeta}\sqrt{d_S}),
\end{align*}
where the last step follows from our bound on $L_{\nabla V}^{(t)}$ in Lemma~\ref{lem:nablavlipschitz}.

\paragraph{Upper bound on sample complexity of $\nabla_{\theta}\hat{J}-\nabla_{\theta}J$.}

Note that $E\le\|\vec{\zeta}\|_1$, where we think of $\vec{\zeta}$ as the length $Td_S$ concatenation of the vectors $\zeta_0,\zeta_1,...,\zeta_{T-1}$, so $\vec{\zeta}$ is $\sigma_{\zeta}$-sub-Gaussian. We apply Lemma~\ref{lem:subgaussianbound} with
\begin{align*}
Y&=\nabla_{\theta}\hat{J}(\theta;\vec{\zeta})-\nabla_{\theta}J(\theta) \\
X&=E \\
A&=132T^7\bar{L}_{R_{\theta}}\bar{L}_{f_{\theta}}^{5T} \\
B&=A\sigma_{\zeta}\sqrt{d_S}.
\end{align*}
Thus, $Y$ is $\sigma_{\text{MB}}$-sub-Gaussian, where
\begin{align*}
\sigma_{\text{MB}}
&=\max\{10A\sigma_{\zeta}Td_S\log(Td_S),5A\sigma_{\zeta}\sqrt{d_S}\} \\
&=10A\sigma_{\zeta}Td_S\log(Td_S) \\
&\le1320T^8\bar{L}_{R_{\theta}}\bar{L}_{f_{\theta}}^{5T}\sigma_{\zeta}d_S\log(Td_S).
\end{align*}
Thus, by Lemma~\ref{lem:hoeffdingvector}, the sample complexity of $\nabla_{\theta}\hat{J}(\theta)-\nabla_{\theta}J(\theta)$ is
\begin{align*}
\sqrt{n_{\text{MB}}(\epsilon,\delta)}&=\frac{\sigma_{\text{MB}}\sqrt{2\log(2d_S/\delta)}}{\epsilon} \\
&=O\left(\frac{T^8\bar{L}_{R_{\theta}}\bar{L}_{f_{\theta}}^{5T}\sigma_{\zeta}d_S\log(T)\log(d_S)^{3/2}\log(1/\delta)^{1/2}}{\epsilon}\right).
\end{align*}
The claim follows.

\paragraph{Lower bound on sample complexity of $\nabla_{\theta}\hat{J}-\nabla_{\theta}J$.}

Consider a linear dynamical system with $S=A=\mathbb{R}$, time-invariant deterministic transitions $f(s,a)=\beta s+a$ (where $\beta\in\mathbb{R}$), time-varying noise
\begin{align*}
p_t(\zeta)=\begin{cases}\mathcal{N}(\zeta\mid 0,\sigma_{\zeta}^2)&\text{if}~t=0\\\delta(0)&\text{otherwise},\end{cases}
\end{align*}
where $\sigma_{\zeta}\in\mathbb{R}$, initial state $s_0=0$, time-varying rewards
\begin{align*}
R_t(s,a)=\begin{cases}s&\text{if}~t=T-1\\0&\text{otherwise},\end{cases}
\end{align*}
control policy class $\pi_{\theta}(s)=\theta s$, and current parameters $\theta=0$. Note that
\begin{align*}
s_t=\begin{cases}0&\text{if}~t=0\\(\beta+\theta)^{t-1}\zeta&\text{otherwise},\end{cases}
\end{align*}
where $\zeta=\zeta_0$ is the noise on the first step. Thus, we have
\begin{align*}
\hat{J}(\theta;\zeta)=s_{T-1}=(\beta+\theta)^{T-2}\zeta,
\end{align*}
so
\begin{align*}
\nabla_{\theta}\hat{J}(\theta;\zeta)=(T-2)(\beta+\theta)^{T-3}\zeta.
\end{align*}
Also, note that
\begin{align*}
\nabla_{\theta}J(\theta)=\mathbb{E}_{p(\zeta)}[\nabla_{\theta}\hat{J}(0;\zeta)]=\mathbb{E}_{p(\zeta)}[(T-2)(\beta+\theta)^{T-3}\zeta]=0.
\end{align*}
Next, note that for $n$ i.i.d. samples $\zeta^{(1)},...,\zeta^{(n)}\sim\mathcal{N}(0,\sigma_{\zeta}^2)$, we have
\begin{align*}
\hat{D}_{\text{MB}}(0)-\nabla_{\theta}J(0)&=\frac{1}{n}\sum_{i=1}^n(T-2)\beta^{T-3}\zeta^{(i)}\sim\mathcal{N}\left(0,\frac{\sigma_{\text{MB}}^2}{n}\right),
\end{align*}
where
\begin{align*}
\sigma_{\text{MB}}=\sigma_{\zeta}^2(T-2)^2\beta^{2(T-3)}.
\end{align*}
Thus, by Lemma~\ref{lem:gaussianlower}, for
\begin{align*}
n<\frac{\sigma_{\text{MB}}^2\left(\log\left(\sqrt{\frac{e}{2\pi}}\right)+\log(1/\delta)\right)}{\epsilon^2},
\end{align*}
we have
\begin{align*}
\text{Pr}\left[|\hat{D}_{\text{MB}}(0)-\nabla_{\theta}J(0)|\ge\epsilon\right]=\text{Pr}_{x\sim\mathcal{N}(0,\sigma_{\text{MB}}^2/n)}\left[|x|\ge\epsilon\right]\ge\sqrt{\frac{e}{2\pi}}\cdot e^{-n\epsilon^2/\sigma_{\text{MB}}^2}>\delta.
\end{align*}
Thus, the sample complexity of $\hat{D}_{\text{MB}}(0)-\nabla_{\theta}J(0)$ satisfies
\begin{align*}
n_{\text{MB}}(\epsilon,\delta)\ge\frac{\sigma_{\zeta}^2(T-2)^2\beta^{2(T-3)}\cdot\left(\log\left(\sqrt{\frac{e}{2\pi}}\right)+\log(1/\delta)\right)}{\epsilon^2}.
\end{align*}
Note that the numerator is positive as long as $\delta\le1/2$. The claim follows, as does the theorem statement. $\qed$

\section{Proof of Theorem~\ref{thm:pgupper}}
\label{sec:pgupperproof}

\paragraph{Preliminaries.}

Recall the form of the policy gradient based on Theorem~\ref{thm:pg}:
\begin{align*}
\nabla_{\theta}J(\theta)&=\mathbb{E}_{\tilde{p}_{\theta}(\zeta)}\left[\sum_{t=0}^{T-1}\hat{A}_{\theta}^{(t)}(\zeta)\nabla_{\theta}\log\tilde{\pi}_{\theta}(a_t\mid s_t)\right],
\end{align*}
where, for $t\in\{0,1,...,T-1\}$, we have
\begin{align*}
\hat{A}_{\theta}^{(t)}(\alpha)&=\hat{Q}_{\theta}^{(t)}(\alpha)-\tilde{V}_{\theta}^{(t)}(s_t),
\end{align*}
where
\begin{align*}
\hat{Q}_{\theta}^{(t)}(\alpha)&=R(s_t,a_t)+\hat{Q}_{\theta}^{(t+1)}(\alpha) \\
\tilde{V}_{\theta}^{(t)}(s)&=\mathbb{E}_{p_{\xi}(\xi),p(\zeta)}[\tilde{R}_{\theta}(s)+\tilde{V}_{\theta}^{(t+1)}(\tilde{f}_{\theta}(s,\xi)+\zeta)] \\
\hat{Q}_{\theta}^{(T)}(\alpha)&=\tilde{V}_{\theta}^{(T)}(s)=0.
\end{align*}
The stochastic approximation of $\nabla_{\theta}J(\theta)$ for a single sampled rollout $\alpha\sim\tilde{p}(\alpha)$ is
\begin{align*}
\hat{D}_{\text{PG}}(\theta;\alpha)&=\sum_{t=0}^{T-1}\hat{A}_{\theta}^{(t)}(\alpha)\nabla_{\theta}\log\tilde{\pi}_{\theta}(a_t\mid s_t).
\end{align*}

\paragraph{Bounding $\hat{Q}_{\theta}^{(t)}-\tilde{V}_{\theta}^{(t)}$.}

We claim that
\begin{align*}
\|\hat{Q}_{\theta}^{(t)}(\zeta)-\tilde{V}_{\theta}^{(t)}(s_t)\|\le&B^{(t)}(\zeta),
\end{align*}
where
\begin{align*}
B^{(t)}(\zeta)&=\sum_{i=t}^{T-1}(L_R+L_{\tilde{V}}^{(i+1)}L_f)(\|\xi_t\|+\sigma_{\zeta}\sqrt{d})+L_{\tilde{V}}^{(i+1)}(\|\zeta_t\|+\sigma_{\zeta}\sqrt{d}),
\end{align*}
where $L_{\tilde{V}}^{(t)}$ is a Lipschitz constant for $\tilde{V}_{\theta}^{(t)}$. We prove by induction. The base case $t=T$ is trivial. Note that $\sigma_{\zeta}\sqrt{d}\ge\sqrt{\mathbb{E}_{p(\zeta)}[\|\zeta\|^2]}\ge\mathbb{E}_{p(\zeta)}[\|\zeta\|]$, and similarly $\sigma_{\zeta}\sqrt{d}\ge\sqrt{\mathbb{E}_{p_{\xi}(\xi)}[\|\xi\|^2]}\ge\mathbb{E}_{p_{\xi}(\xi)}[\|\xi\|]$. Then, for $t\in\{0,1,...,T-1\}$, we have
\begin{align*}
\|\hat{Q}_{\theta}^{(t)}(\zeta)-\tilde{V}_{\theta}^{(t)}(s_t)\|
\le&\mathbb{E}_{p_{\xi}(\xi)}\left[\|R(s_t,\pi_{\theta}(s_t)+\xi_t)-R(s_t,\pi_{\theta}(s_t)+\xi)\|\right] \\
&+\|\hat{Q}_{\theta}^{(t+1)}(\zeta)-\tilde{V}_{\theta}^{(t+1)}(s_{t+1})\| \\
&+\mathbb{E}_{p_{\xi}(\xi),p(\zeta)}\left[\|\tilde{V}_{\theta}^{(t+1)}(f(s_t,\pi_{\theta}(s_t)+\xi_t)+\zeta_t)-\tilde{V}_{\theta}^{(t+1)}(f(s_t,\pi_{\theta}(s_t)+\xi)+\zeta)\|\right] \\
\le&L_R(\|\xi_t\|+\sigma_{\zeta}\sqrt{d})+B^{(t+1)}(\zeta)+L_{\tilde{V}}^{(t+1)}(\|\zeta_t\|+\sigma_{\zeta}\sqrt{d})+L_{\tilde{V}}^{(t+1)}L_f(\|\xi_t\|+\sigma_{\zeta}\sqrt{d}) \\
=&B^{(t)}(\zeta).
\end{align*}
The claim follows.

\paragraph{Bounding $\log\tilde{\pi}_{\theta}(a\mid s)$.}

We claim that
\begin{align*}
\|\nabla_{\theta}\log\tilde{\pi}_{\theta}(a\mid s)\|\le\frac{L_{\pi}}{\sigma_{\zeta}^2}\cdot\|\xi\|,
\end{align*}
where $\xi=a-\pi_{\theta}(s)$. Recall that $p_{\xi}(\xi)=\mathcal{N}(\vec{0},\sigma_{\zeta}^2I_{d_A})$. Thus, we have
\begin{align*}
\log\tilde{\pi}_{\theta}(a\mid s)=\log p_{\xi}(a-\pi_{\theta}(s))&=\log\mathcal{N}(a-\pi_{\theta}(s)\mid0,\sigma_{\zeta}^2I_{d_A})=-\frac{1}{2}\log(2\pi\sigma_{\zeta}^2)-\frac{1}{2\sigma_{\zeta}^2}\cdot\|a-\pi_{\theta}(s)\|^2.
\end{align*}
Thus, we have
\begin{align*}
\|\nabla_{\theta}\log\tilde{\pi}_{\theta}(a\mid s)\|=\frac{1}{2\sigma_{\zeta}^2}\cdot\left\|\nabla_{\theta}\|a-\pi_{\theta}(s)\|^2\right\|=\frac{1}{\sigma_{\zeta}^2}\cdot\left\|\nabla_{\theta}\pi_{\theta}(s)^{\top}(a-\pi_{\theta}(s))\right\|\le\frac{L_{\pi}}{\sigma_{\zeta}^2}\cdot\|\xi\|,
\end{align*}
as claimed.

\paragraph{Bounding the deviation of $\hat{D}_{\text{PG}}$ from $\nabla_{\theta}J$.}

We claim that
\begin{align*}
\|\hat{D}_{\text{PG}}(\theta;\zeta)-\nabla_{\theta}J(\theta)\|\le3T^4(L_R+L_{\tilde{R}_{\theta}})\bar{L}_fL_{\pi}\bar{L}_{\tilde{f}_{\theta}}^Td\cdot\left(4d+\frac{\tilde{E}+E+2\sigma_{\zeta}\sqrt{d}}{\sigma_{\zeta}^2}\right),
\end{align*}
where $L_{\tilde{V}}=\operatorname{\arg\max}_{t\in\{1,...,T\}}L_{\tilde{V}}^{(t)}$, $E=T^{-1}\sum_{t=0}^{T-1}\|\zeta_t\|$, and $\tilde{E}=T^{-1}\sum_{t=0}^{T-1}\|\xi_t\|$. First, note that
\begin{align*}
\|\hat{Q}_{\theta}^{(t)}(\zeta)-\tilde{V}_{\theta}^{(t)}(s_t)\|&\le T\left((L_R+L_{\tilde{V}}L_f)(\tilde{E}+\sigma_{\zeta}\sqrt{d})+L_{\tilde{V}}(E+\sigma_{\zeta}\sqrt{d})\right) \\
&\le3T^3(L_R+L_{\tilde{R}_{\theta}})\bar{L}_f\bar{L}_{\tilde{f}_{\theta}}^{T-1}(\tilde{E}+E+2\sigma_{\zeta}\sqrt{d}),
\end{align*}
where the last step follows from the bound on $L_{\tilde{V}}^{(t)}$ in Lemma~\ref{lem:tildevlipschitz}. Then, we have
\begin{align*}
\|\hat{D}_{\text{PG}}(\theta;\zeta)\|&=\left\|\sum_{t=0}^{T-1}(\hat{Q}_{\theta}^{(t)}(\zeta)-\tilde{V}_{\theta}^{(t)}(s_t))\nabla_{\theta}\log\tilde{\pi}_{\theta}(a_t\mid s_t)\right\| \\
&\le\sum_{t=0}^{T-1}\|\hat{Q}_{\theta}^{(t)}(\zeta)-\tilde{V}_{\theta}^{(t)}(s_t)\|\cdot\|\nabla_{\theta}\log\tilde{\pi}_{\theta}(a_t\mid s_t)\| \\
&\le\sum_{t=0}^{T-1}3T^3(L_R+L_{\tilde{R}_{\theta}})\bar{L}_f\bar{L}_{\tilde{f}_{\theta}}^T(\tilde{E}+E+2\sigma_{\zeta}\sqrt{d})\cdot\frac{L_{\pi}}{\sigma_{\zeta}^2}\cdot\|\xi_t\| \\
&=3T^4(L_R+L_{\tilde{R}_{\theta}})\bar{L}_fL_{\pi}\bar{L}_{\tilde{f}_{\theta}}^T\cdot\frac{(E+\tilde{E}+2\sigma_{\zeta}\sqrt{d})\tilde{E}}{\sigma_{\zeta}^2}.
\end{align*}
Furthermore, we have
\begin{align*}
\|\nabla_{\theta}J(\theta)\|&\le\mathbb{E}_{\tilde{p}_{\theta}(\zeta)}[\|\hat{D}_{\text{PG}}(\theta;\zeta)\|] \\
&\le\mathbb{E}_{\tilde{p}_{\theta}(\zeta)}\left[3T^4(L_R+L_{\tilde{R}_{\theta}})\bar{L}_fL_{\pi}\bar{L}_{\tilde{f}_{\theta}}^T\cdot\frac{(E+\tilde{E}+2\sigma_{\zeta}\sqrt{d})\tilde{E}}{\sigma_{\zeta}^2}\right] \\
&=12T^4(L_R+L_{\tilde{R}_{\theta}})\bar{L}_fL_{\pi}\bar{L}_{\tilde{f}_{\theta}}^Td,
\end{align*}
where we have used the fact that $\mathbb{E}_{p(\vec{\zeta})}[E]=T^{-1}\sum_{t=0}^{T-1}\mathbb{E}_{p(\zeta_t)}[\|\zeta_t\|]\le\sigma_{\zeta}\sqrt{d}$, and similarly $\mathbb{E}_{p_{\xi}(\xi)}[\tilde{E}]=T^{-1}\sum_{t=0}^{T-1}\mathbb{E}_{p_{\xi}(\xi)}[\|\xi_t\|]\le\sigma_{\zeta}\sqrt{d}$. Therefore, we have
\begin{align*}
\|\hat{D}_{\text{PG}}(\theta;\zeta)-\nabla_{\theta}J(\theta)\|
\le\|\hat{D}_{\text{PG}}(\theta;\zeta)\|+\|\nabla_{\theta}J(\theta)\|
\le3T^4(L_R+L_{\tilde{R}_{\theta}})\bar{L}_fL_{\pi}\bar{L}_{\tilde{f}_{\theta}}^Td\cdot\left(4d+\frac{(\tilde{E}+E+2\sigma_{\zeta}\sqrt{d})\tilde{E}}{\sigma_{\zeta}^2}\right),
\end{align*}
as claimed.

\paragraph{Upper bound on the sample complexity of $\hat{D}_{\text{PG}}-\nabla_{\theta}J$.}

We have $E'=(\tilde{E}+E+2\sigma_{\zeta}\sqrt{d})\tilde{E}\le\|\phi\|_1$, where we think of $\phi$ as the $T^2(d_A+d_S+1)d_A$ values $\xi_{t,i}\xi_{t',i'}$, $\zeta_{t,j}\xi_{t',i'}$, and $2\sigma_{\zeta}\sqrt{d}\xi_{t',i'}$, for all $t,t'\in\{0,1,...,T-1\}$, $i,i'\in[d_A]$, and $j\in[d_S]$. Since $\xi_t$ and $\zeta_t$ are $\sigma_{\zeta}$-sub-Gaussian for each $t\in T$, by Lemma~\ref{lem:subgaussianprodbound}, $\phi$ is $(\tau,b)$-sub-exponential, where $\tau,b=O(d\sigma_{\zeta}^2)$. Thus, we can apply Lemma~\ref{lem:subexponentialbound} with
\begin{align*}
Y&=\hat{D}_{\text{PG}}(\theta;\zeta)-\nabla_{\theta}J(\theta) \\
X&=E' \\
A&=\frac{3T^4(L_R+L_{\tilde{R}_{\theta}})\bar{L}_fL_{\pi}\bar{L}_{\tilde{f}_{\theta}}^Td}{\sigma_{\zeta}^2} \\
B&=0.
\end{align*}
Thus, $Y$ is $(\tau_{\text{PG}},b_{\text{PG}})$-sub-exponential, where
\begin{align*}
\tau_{\text{PG}},b_{\text{PG}}
=O(A(\tau+b)d\log d+B)
=O\left(T^6(L_R+L_{\tilde{R}_{\theta}})\bar{L}_fL_{\pi}\bar{L}_{\tilde{f}_{\theta}}^Td^4\log(Td)\right).
\end{align*}
Thus, by Lemma~\ref{lem:hoeffdingvector}, the sample complexity of $\hat{D}_{\text{PG}}(\theta)-\nabla_{\theta}J(\theta)$ is
\begin{align*}
\sqrt{n_{\text{PG}}(\epsilon,\delta)}&=\frac{\tau_{\text{PG}}\sqrt{2\log(2Td_A/\delta)}}{\epsilon} \\
&=O\left(\frac{T^6(L_R+L_{\tilde{R}_{\theta}})\bar{L}_fL_{\pi}\bar{L}_{\tilde{f}_{\theta}}^Td^4\log(T)\log(d)^{3/2}\log(1/\delta)^{1/2}}{\epsilon}\right),
\end{align*}
for all $\epsilon\le d\tau_{\text{PG}}^2/b_{\text{PG}}$. The claim follows.

\paragraph{Lower bound on the sample complexity of $\hat{D}_{\text{PG}}-\nabla_{\theta}J$.}

Consider a linear dynamical system with $S=A=\mathbb{R}$, time-varying deterministic transitions
\begin{align*}
f_t(s,a)&=\begin{cases}\beta(s+a)&\text{if}~s=0\\\beta s&\text{otherwise},\end{cases}
\end{align*}
zero noise $p_t(\zeta)=\delta(0)$ (i.e., $\sigma_{\zeta}=0$), initial state $s_0=0$, time-varying rewards
\begin{align*}
R_t(s,a)=\begin{cases}s&\text{if}~t=T-1\\0&\text{otherwise},\end{cases}
\end{align*}
control policy class $\pi_{\theta}(s)=\theta$, current parameters $\theta=0$, and action noise $p_{\xi}$. Note that
\begin{align*}
a_t=\theta+\tau_{\xi}\xi_t,
\end{align*}
where $\xi_t\sim p_{\xi}(\xi)$ i.i.d., so
\begin{align*}
s_t=\begin{cases}0&\text{if}~t=0\\\beta^{t-1}(\theta+\tau_{\xi}\xi)&\text{otherwise},\end{cases}
\end{align*}
where $\xi=\xi_0$ is the action noise on the first step. Note that
\begin{align*}
\hat{Q}_{\theta}^{(t)}(\xi)&=\beta^{T-2}(\theta+\tau_{\xi}\xi),
\end{align*}
and
\begin{align*}
\tilde{V}_{\theta}^{(t)}(s)&=\begin{cases}
\mathbb{E}_{p_{\xi}(\xi)}[\beta^{T-2}(s+\theta+\tau_{\xi}\xi)]=0&\text{if}~t=0 \\
\beta^{T-t-2}s&\text{otherwise}.
\end{cases}
\end{align*}
In particular, note that
\begin{align*}
\hat{Q}_{\theta}^{(t)}(\xi)-\tilde{V}_{\theta}^{(t)}(s_t)=\begin{cases}
\beta^{T-2}(\theta+\tau_{\xi}\xi)&\text{if}~t=0 \\
0&\text{otherwise}.
\end{cases}
\end{align*}
Also, note that $\nabla_{\theta}J(\theta)=\beta^{T-2}$. Therefore, we have
\begin{align*}
\nabla_{\theta}\log\tilde{\pi}(a\mid s)=\nabla_{\theta}\log p_{\xi}\left(\frac{a-\theta}{\tau_{\xi}}\right)=-\frac{\nabla_{\xi}p_{\xi}\left(\frac{a-\theta}{\tau_{\xi}}\right)}{\tau_{\xi}\cdot p_{\xi}\left(\frac{a-\theta}{\tau_{\xi}}\right)}=-\frac{1}{\tau_{\xi}}\cdot\nabla_{\xi}\log p_{\xi}\left(\frac{a-\theta}{\tau_{\xi}}\right).
\end{align*}
Thus, for i.i.d. samples $\xi^{(1)},...,\xi^{(n)}\sim p_{\xi}(\xi)$, we have
\begin{align*}
\hat{D}_{\text{PG}}(0)-\nabla_{\theta}J(0)
&=\frac{1}{n}\sum_{i=1}^n\left(\hat{Q}_{\theta}^{(t)}(\xi^{(i)})-\tilde{V}_{\theta}^{(t)}(s_t^{(i)})\right)\cdot\left(-\nabla_{\theta}\log\tilde{\pi}(a_t^{(i)}\mid s_t^{(i)})\right)-\beta^{T-2} \\
&=\frac{1}{n}\sum_{i=1}^n\beta^{T-2}\tau_{\xi}\xi^{(i)}\cdot\left(-\frac{1}{\tau_{\xi}}\cdot\nabla_{\xi}\log p_{\xi}(\xi^{(i)})\right)-\beta^{T-2} \\
&=-\beta^{T-2}\left[1+\frac{1}{n}\sum_{i=1}^n\xi^{(i)}\cdot\nabla_{\xi}\log p_{\xi}(\xi^{(i)})\right].
\end{align*}
Note that for $p_{\xi}(\xi)$ satisfying our conditions (differentiable on $\mathbb{R}$ and satisfying $\lim_{\xi\to\pm\infty}\xi\cdot p_{\xi}(\xi)=0$), we have
\begin{align}
\label{eqn:integrationbyparts}
\mathbb{E}_{p_{\xi}(\xi)}[\xi\cdot\nabla_{\xi}\log p_{\xi}(\xi)]&=\int_{-\infty}^{\infty}\xi\cdot\nabla_{\xi}p_{\xi}(\xi)d\xi=-\int_{-\infty}^{\infty}p_{\xi}(\xi)d\xi=-1,
\end{align}
where the second-to-last step follows from integration by parts. Thus, by the definition of the sample complexity,
\begin{align*}
\text{Pr}\left[\left|\frac{1}{n}\sum_{i=1}^n\xi^{(i)}\cdot\nabla_{\xi}\log p_{\xi}(\xi^{(i)})+1\right|\ge\epsilon\right]>\delta
\end{align*}
for any $n<n_{\xi}(\epsilon,\delta)$, so we have
\begin{align*}
\text{Pr}\left[|\hat{D}_{\text{PG}}(0)-\nabla_{\theta}J(0)|\ge\epsilon\right]
=\text{Pr}\left[\beta^{T-2}\left|\frac{1}{n}\sum_{i=1}^n\xi^{(i)}\cdot\nabla_{\xi}\log p_{\xi}(\xi^{(i)})+1\right|\ge\beta^{T-2}\epsilon\right]>\delta.
\end{align*}
for any $n<n_{\xi}(\epsilon/\beta^{T-2},\delta)$. Thus, we have
\begin{align*}
n_{\text{PG}}(\epsilon,\delta)\ge n_{\xi}(\epsilon/\beta^{T-2},\delta).
\end{align*}
Next, consider the case where $p_{\xi}(\xi)=\mathcal{N}(\xi\mid0,\sigma^2)$, for any $\sigma\in\mathbb{R}_+$. Then, we have
\begin{align*}
\nabla_{\xi}\log p_{\xi}(\xi)=\nabla_{\xi}\left(-\log\sqrt{2\pi}-\frac{\|\xi\|^2}{2\sigma^2}\right)=-\frac{\xi}{\sigma^2},
\end{align*}
so
\begin{align*}
\hat{D}_{\text{PG}}(0)-\nabla_{\theta}J(0)
=\beta^{T-2}\left[-1+\frac{1}{n\sigma^2}\sum_{i=1}^n(\xi^{(i)})^2\right]
=\beta^{T-2}\left[-1+\frac{1}{n}\sum_{i=1}^n(x^{(i)})^2\right],
\end{align*}
where $x^{(i)}\sim\mathcal{N}(0,1)$ are i.i.d. standard Gaussian random variables for $i\in[n]$. By Lemma~\ref{lem:chisquaredlower}, letting $x=n^{-1}\sum_{i=1}^n(x^{(i)})^2$ (so $\mu_x=\mathbb{E}_{p(x)}=1$), for
\begin{align*}
n\le\min\left\{\frac{2\beta^{T-2}\left(\frac{1}{2}\log(1/\delta)+\log(1/e^2\sqrt{2})\right)}{\epsilon},\frac{1}{\delta}\right\},
\end{align*}
we have
\begin{align*}
\text{Pr}\left[\hat{D}_{\text{PG}}(0)-\nabla_{\theta}J(0)\ge\epsilon\right]
=\text{Pr}_{p(x)}\left[x\ge\mu_x+\frac{\epsilon}{\beta^{T-2}}\right]
\ge\frac{1}{\sqrt{n}}\cdot\frac{1}{e^2\sqrt{2}}e^{-\frac{n\epsilon}{2\beta^{T-2}}}
\ge\sqrt{\delta}\cdot\sqrt{\delta}
=\delta.
\end{align*}
Thus, the sample complexity of $\hat{D}_{\text{PG}}-\nabla_{\theta}J(\theta)$ satisfies
\begin{align*}
n_{\text{PG}}(\epsilon,\delta)\ge\min\left\{\frac{2\beta^{T-2}\left(\frac{1}{2}\log(1/\delta)+\log(1/e^2\sqrt{2})\right)}{\epsilon},\frac{1}{\delta}\right\}.
\end{align*}
Note that the numerator is positive as long as $\delta\le1/12$. The claim follows, as does the theorem statement. $\qed$

\section{Proof of Theorem~\ref{thm:fdupper}}

\paragraph{Preliminaries.}

Note that the expected cumulative reward is equivalent to
\begin{align*}
J(\theta)&=V_{\theta}^{(0)}(s_0) \\
V_{\theta}^{(t)}(s)&=R_{\theta}(s)+\mathbb{E}_{p(\zeta)}\left[V_{\theta}^{(t+1)}(f_{\theta}(s)+\zeta)\right]\hspace{0.2in}(\forall t\in\{0,1,...,T-1\}) \\
V_{\theta}^{(T)}(s)&=0.
\end{align*}
Similarly, given a sample $\vec{\zeta}\sim p(\vec{\zeta})$, the stochastic approximation of the expected cumulative reward is
\begin{align*}
\hat{J}(\theta;\vec{\zeta})&=\hat{V}_{\theta}^{(0)}(s_0;\vec{\zeta}) \\
\hat{V}_{\theta}^{(t)}(s;\vec{\zeta})&=R_{\theta}(s)+\hat{V}_{\theta}^{(t+1)}(f_{\theta}(s)+\zeta_t;\vec{\zeta})\hspace{0.2in}(\forall t\in\{0,1,...,T-1\}) \\
\hat{V}_{\theta}^{(T)}(s;\vec{\zeta})&=0.
\end{align*}
The finite difference approximation of $\nabla_{\theta}J(\theta)$ is
\begin{align*}
D_{\text{FD}}(\theta)=\sum_{k=1}^{d_{\Theta}}\frac{J(\theta+\lambda\nu^{(k)})-J(\theta-\lambda\nu^{(k)})}{2\lambda}\cdot\nu^{(k)},
\end{align*}
where $\nu^{(k)}$ is a basis vector for $k\in[d]$ and $d_{\Theta}$ is the dimension of the parameter space $\Theta=\mathbb{R}^d$. Finally, an estimate of the finite difference approximation for two samples $\zeta,\eta\sim\tilde{p}(\zeta)$ is
\begin{align*}
\hat{D}_{\text{FD}}(\theta;\vec{\zeta},\vec{\eta})=\sum_{k=1}^{d_{\Theta}}\frac{\hat{J}(\theta+\lambda\nu^{(k)};\vec{\zeta})-\hat{J}(\theta-\lambda\nu^{(k)};\vec{\eta})}{2\lambda}\cdot\nu^{(k)},
\end{align*}
where $\hat{J}(\theta;\vec{\zeta})$ is as defined in the proof of Theorem~\ref{thm:mbupper}.

\paragraph{Bounding the deviation of $\hat{V}_{\theta}^{(t)}$ from $V_{\theta}^{(t)}$.}

We claim that for $t\in\{0,1,...,T\}$, we have
\begin{align*}
\|\hat{V}_{\theta}^{(t)}(s;\vec{\zeta})-V_{\theta}^{(t)}(s)\|&\le B^{(t)}(\vec{\zeta})
\end{align*}
for all $\theta\in\Theta$ and $s\in S$, where
\begin{align*}
B^{(t)}(\vec{\zeta})&=\sum_{i=t}^{T-1}L_V^{(i+1)}(\|\zeta_i\|+\sigma_{\zeta}\sqrt{d_A}),
\end{align*}
where $L_V^{(t)}$ is a Lipschitz constant for $V_{\theta}^{(t)}$. The base case $t=T$ follows trivially. Note that $\sigma_{\zeta}\sqrt{d_A}\ge\sqrt{\mathbb{E}_{p(\zeta)}[\|\zeta\|^2]}\ge\mathbb{E}_{p(\zeta)}[\|\zeta\|]$. Then, for $t\in\{0,1,...,T-1\}$, we have
\begin{align*}
\|\hat{V}_{\theta}^{(t)}(s;\vec{\zeta})-V_{\theta}^{(t)}(s)\|
=&\left\|\hat{V}_{\theta}^{(t+1)}(f_{\theta}(s)+\zeta_t;\vec{\zeta})-\mathbb{E}_{p(\zeta)}\left[V_{\theta}^{(t+1)}(f_{\theta}(s)+\zeta)\right]\right\| \\
\le&\|\hat{V}_{\theta}^{(t+1)}(f_{\theta}(s)+\zeta_t;\vec{\zeta})-V_{\theta}^{(t+1)}(f_{\theta}(s)+\zeta_t)\| \\
&+\mathbb{E}_{p(\zeta)}\left[\|V_{\theta}^{(t+1)}(f_{\theta}(s)+\zeta_t)-V_{\theta}^{(t+1)}(f_{\theta}(s)+\zeta)\|\right] \\
\le&B^{(t+1)}(\vec{\zeta})+L_V^{(t+1)}(\|\zeta_t\|+\sigma_{\zeta}\sqrt{d_A}) \\
=&B^{(t)}(\vec{\zeta}).
\end{align*}
The claim follows.

\paragraph{Bounding the deviation of $\hat{D}_{\text{FD}}$ from $D_{\text{FD}}$.}

Let
\begin{align*}
D_{\text{FD}}(\theta)\mathbb{E}_{p(\vec{\zeta}),p(\vec{\eta})}[\hat{D}_{\text{FD}}(\theta)].
\end{align*}
Then, letting $L_{\nabla V}=\operatorname*{\arg\max}_{t\in\{0,1,...,T\}}L_{\nabla V}^{(t)}$, note that
\begin{align*}
\|\hat{J}(\theta;\vec{\zeta})-J(\theta)\|\le B^{(0)}(\vec{\zeta})=\sum_{i=0}^{T-1}L_V^{(i+1)}(\|\zeta_i\|+\sigma_{\zeta}\sqrt{d_A})\le3T^3L_{R_{\theta}}\bar{L}_{f_{\theta}}^T(E+\sigma_{\zeta}\sqrt{d_A}),
\end{align*}
where $E=T^{-1}\sum_{t=0}^{T-1}\|\zeta_t\|$. Thus, we have
\begin{align*}
\|\hat{D}_{\text{FD}}(\theta;\zeta,\eta)_k-D_{\text{FD}}(\theta)_k\|
&=\left\|\frac{\hat{J}(\theta+\lambda\nu^{(k)};\vec{\zeta})-\hat{J}(\theta-\lambda\nu^{(k)};\vec{\eta})}{2\lambda}\cdot\nu^{(k)}-\frac{J(\theta+\lambda\nu^{(k)})-J(\theta-\lambda\nu^{(k)})}{2\lambda}\cdot\nu^{(k)}\right\| \\
&\le\frac{\|\hat{J}(\theta+\lambda\nu^{(k)};\vec{\zeta})-J(\theta+\lambda\nu^{(k)})\|+\|\hat{J}(\theta-\lambda\nu^{(k)};\vec{\eta})-J(\theta-\lambda\nu^{(k)})\|}{2\lambda} \\
&\le\frac{3T^3L_{R_{\theta}}\bar{L}_{f_{\theta}}^T(E+\tilde{E}+2\sigma_{\zeta}\sqrt{d_A})}{2\lambda}
\end{align*}
for $k\in[d_{\Theta}]$, where $\tilde{E}=T^{-1}\sum_{t=0}^{T-1}\|\eta_t\|$.

\paragraph{Upper bound on the sample complexity of $\hat{D}_{\text{FD}}-D_{\text{FD}}$.}

Note that $E+\tilde{E}\le\|E'\|_1$, where $E'=\vec{\zeta}\circ\vec{\eta}$ is the length $2Td_S$ concatenation of the vectors $\zeta_0,\zeta_1,...,\zeta_{T-1},\eta_0,\eta_1,...,\eta_{T-1}$, so $E'$ is $\sigma_{\zeta}$-sub-Gaussian. We apply Lemma~\ref{lem:subgaussianbound} with
\begin{align*}
Y&=\hat{D}_{\text{FD}}(\theta;\vec{\zeta},\vec{\eta})_k-D_{\text{FD}}(\theta)_k \\
X&=E' \\
A&=\frac{3T^3L_{R_{\theta}}\bar{L}_{f_{\theta}}^T}{\lambda} \\
B&=A\sigma_{\zeta}\sqrt{d_A}.
\end{align*}
Thus, $Y$ is $\sigma_{\text{FD}}$-sub-Gaussian, where
\begin{align*}
\sigma_{\text{FD}}
&=\max\{10A\sigma(2Td_A)\log(2Td_A),5A\sigma_{\zeta}\sqrt{d_A})\} \\
&=20A\sigma_{\zeta}Td_A\log(Td_A) \\
&\le\frac{60T^4L_{R_{\theta}}\bar{L}_{f_{\theta}}^T\sigma_{\zeta}d_A\log(Td_A)}{\lambda}.
\end{align*}
Thus, by Lemma~\ref{lem:hoeffdingvector}, for $k\in[d_{\Theta}]$, the sample complexity of $\hat{D}_{\text{FD}}(\theta)_k-D_{\text{FD}}(\theta)_k$ is
\begin{align*}
\sqrt{\tilde{n}_{\text{FD}}(\tilde{\epsilon},\tilde{\delta})}&=\frac{\sigma_{\text{FD}}\sqrt{2\log(2d_A/\tilde{\delta})}}{\tilde{\epsilon}} \\
&=O\left(\frac{T^4L_{R_{\theta}}\bar{L}_{f_{\theta}}^T\sigma_{\zeta}d_A\log(T)\log(d_A)^{3/2}\log(1/\tilde{\delta})^{1/2}}{\lambda\tilde{\epsilon}}\right).
\end{align*}

\paragraph{Upper bound on the sample complexity of $\hat{D}_{\text{FD}}-\nabla_{\theta}J(\theta)$.}

By Theorem~\ref{thm:fd}, we have
\begin{align*}
\nabla_{\theta}J(\theta)&=D_{\text{FD}}(\theta)+\Delta,
\end{align*}
where
\begin{align*}
\|\Delta\|\le L_{\nabla J}d_A\lambda\le44T^5\bar{L}_{R_{\theta}}\bar{L}_{f_{\theta}}^{4T}d_A\lambda,
\end{align*}
where the second inequality follows from the fact that $L_{\nabla J}=L_{\nabla V}^{(0)}$ and the bound on $L_{\nabla V}^{(0)}$ in Lemma~\ref{lem:nablavlipschitz}. Now, taking
\begin{align*}
\lambda&=\frac{\epsilon}{88T^5\bar{L}_{R_{\theta}}\bar{L}_{f_{\theta}}^{4T}d_A} \\
\tilde{\epsilon}&=\frac{\epsilon}{2\sqrt{d_{\Theta}}} \\
\tilde{\delta}&=\frac{\delta}{d_{\Theta}},
\end{align*}
then with probability $1-\delta$, we have
\begin{align*}
\|\hat{D}_{\text{FD}}(\theta)-\nabla_{\theta}J(\theta)\|\le\|\hat{D}_{\text{FD}}(\theta)-D_{\text{FD}}(\theta)\|+\|\Delta\|\le\epsilon,
\end{align*}
so the sample complexity of $\hat{D}_{\text{FD}}(\theta)-\nabla_{\theta}J(\theta)$ is
\begin{align*}
\sqrt{n_{\text{FD}}(\epsilon,\delta)}=
&=O\left(\frac{T^9\bar{L}_{R_{\theta}}^2\bar{L}_{f_{\theta}}^{5T}\sigma_{\zeta}d_A^2\sqrt{d_{\Theta}}\log(T)\log(d_A)^{3/2}\log(d_{\Theta})^{1/2}\log(1/\tilde{\delta})^{1/2}}{\epsilon^2}\right).
\end{align*}
The claim follows.

\paragraph{Lower bound on the sample complexity of $\hat{D}_{\text{FD}}-\nabla_{\theta}J(\theta)$.}

Consider a linear dynamical system with $S=\mathbb{R}^2$, $A=\mathbb{R}$, time-varying deterministic transitions
\begin{align*}
f_t((s,s'),a)&=\begin{cases}\beta(s,s'+a)&\text{if}~s=0\\\beta(s,s')&\text{otherwise},\end{cases}
\end{align*}
time-varying noise
\begin{align*}
p_t((\zeta,0))=\begin{cases}\mathcal{N}(\zeta\mid 0,\sigma_{\zeta}^2)&\text{if}~t=0\\\delta(0)&\text{otherwise},\end{cases}
\end{align*}
where $\sigma_{\zeta}\in\mathbb{R}$, initial state $s_0=(0,0)$, time-varying rewards
\begin{align*}
R_t((s,s'),a)=\begin{cases}s+\phi(s')&\text{if}~t=T-1\\0&\text{otherwise},\end{cases}
\end{align*}
where $\phi:\mathbb{R}\to\mathbb{R}$ is defined by
\begin{align*}
\phi(x)=\begin{cases}2x-1&\text{if}~x\ge1\\x^2&\text{if}~-1\le x<1\\2x+1&\text{if}~x<-1,\end{cases}
\end{align*}
control policy class $\pi_{\theta}((s,s'))=\theta$, and current parameters $\theta=0$. Note that technically, $R$ is not twice continuously differentiable, so it does not satisfy Assumption~\ref{assump:lipschitz}. However, the only place in the proof of Theorem~\ref{thm:fdupper} where we need this assumption is to apply Lemma~\ref{lem:lipschitzderivative} in Lemma~\ref{lem:nablavlipschitz}. By the discussion in the proof of Lemma~\ref{lem:lipschitzderivative}, the lemma still applies, so our theorems still apply to this dynamical system. Now, we have
\begin{align*}
s_t=\begin{cases}0&\text{if}~t=0\\\beta^{t-1}(\zeta,\theta)&\text{otherwise},\end{cases}
\end{align*}
where $\zeta=\zeta_0$ is the noise on the first step. Thus, we have
\begin{align*}
\hat{J}(\theta;\zeta)=s_{T-1}+s_{T-1}'=\beta^{T-2}\zeta+\phi(\beta^{T-2}\theta).
\end{align*}
Also, note that
\begin{align*}
\nabla_{\theta}J(\theta)=\mathbb{E}_{p(\zeta)}[\nabla_{\theta}\hat{J}(0;\zeta)]=\phi'(\beta^{T-2}\theta)\cdot\beta^{T-2},
\end{align*}
so $\nabla_{\theta}J(0)=0$, since $\phi'(0)=0$.

Next, note that for $2n$ i.i.d. samples $\zeta^{(1)},...,\zeta^{(n)},\eta^{(1)},...,\eta^{(n)}\sim\mathcal{N}(0,\sigma_{\zeta}^2)$, we have
\begin{align*}
\hat{D}_{\text{FD}}(0)-\nabla_{\theta}J(0)
&=\frac{1}{2\lambda}\left[\frac{1}{n}\sum_{i=1}^n\hat{J}(\lambda;\zeta^{(i)})-\frac{1}{n}\sum_{i=1}^n\hat{J}(-\lambda;\eta^{(i)})\right] \\
&=\frac{1}{2\lambda}\cdot\frac{1}{n}\sum_{i=1}^n\left[\beta^{T-2}\zeta^{(i)}-\beta^{T-2}\eta^{(i)}\right]+\frac{1}{2\lambda}\left[\phi(\beta^{T-2}\lambda)-\phi(-\beta^{T-2}\lambda)\right].
\end{align*}
Letting $\zeta^{(n+i)}=-\eta^{(i)}$ for $i\in[n]$, and using the fact that $\phi(-x)=-\phi(x)$, we have
\begin{align*}
\hat{D}_{\text{FD}}(0)-\nabla_{\theta}J(0)
=\frac{1}{2\lambda n}\sum_{i=1}^{2n}\beta^{T-2}\zeta^{(i)}+\frac{1}{\lambda}\cdot\phi(\beta^{T-2}\lambda)
\sim\mathcal{N}\left(\mu_{\text{FD}},\frac{\sigma_{\text{FD}}}{n}\right).
\end{align*}
where
\begin{align*}
\mu_{\text{FD}}&=\phi(\beta^{T-2}\lambda) \\
\sigma_{\text{FD}}&=\frac{\beta^{T-2}\sigma_{\zeta}}{\lambda}.
\end{align*}
Thus, by Lemma~\ref{lem:gaussianlower}, for
\begin{align*}
n\le\frac{\sigma_{\text{FD}}^2\left(\log\left(\sqrt{\frac{e}{2\pi}}\right)+\log(1/\tilde{\delta})\right)}{\epsilon^2},
\end{align*}
and recalling that $D_{\text{FD}}(\theta)=\mathbb{E}_{p_{\theta}(\alpha)}[\hat{D}_{\text{FD}}(\theta;\alpha)]=\mu_{\text{FD}}$, we have
\begin{align*}
\text{Pr}\left[\hat{D}_{\text{FD}}(0)-D_{\text{FD}}(0)\ge\tilde{\epsilon}\right]=\text{Pr}_{x\sim\mathcal{N}(0,\sigma_{\text{FD}}^2/n)}[|x|\ge\tilde{\epsilon}]\ge\sqrt{\frac{e}{2\pi}}\cdot e^{-n\epsilon^2/\sigma_{\text{FD}}^2}\ge\tilde{\delta}.
\end{align*}
Thus, the sample complexity of $\hat{D}_{\text{FD}}(0)-D_{\text{FD}}(0)$ satisfies
\begin{align*}
\tilde{n}_{\text{FD}}(\tilde{\epsilon},\tilde{\delta})\ge\frac{\sigma_{\text{FD}}^2\left(\log\left(\sqrt{\frac{e}{2\pi}}\right)+\log(1/\tilde{\delta})\right)}{\tilde{\epsilon}^2}.
\end{align*}
Now, recall that $\nabla_{\theta}J(0)=0$, so
\begin{align*}
\text{Pr}\left[\hat{D}_{\text{FD}}(0)-\nabla_{\theta}J(0)\ge\epsilon\right]
=\text{Pr}\left[\hat{D}_{\text{FD}}(0)\ge\epsilon\right]
=\text{Pr}\left[\hat{D}_{\text{FD}}(0)-D_{\text{FD}}(0)\ge\epsilon-\mu_{\text{FD}}\right].
\end{align*}
Thus, using our assumption $\delta\le1/2$, then we need to have $\mu_{\text{FD}}\le\epsilon$ for $\text{Pr}\left[\hat{D}_{\text{FD}}(0)-\nabla_{\theta}J(0)\ge\epsilon\right]\le\delta$ to hold. As a consequence, using our assumption $\epsilon\le1$, we must have
\begin{align*}
\epsilon\ge\mu_{\text{FD}}=\phi(\beta^{T-2}\lambda)=\beta^{2(T-2)}\lambda^2,
\end{align*}
where the last step follows since $0\le\phi(\beta^{T-2}\lambda)\le1$ implies $\phi(x)=x^2$. Thus, we have $\lambda\le\sqrt{\frac{\epsilon}{\beta^{2(T-2)}}}$, so we have $\sigma_{\text{FD}}\ge\beta^{4(T-2)}\sigma_{\zeta}^2/\epsilon$. Finally, we have
\begin{align*}
\text{Pr}\left[\hat{D}_{\text{FD}}(0)-\nabla_{\theta}J(0))\ge\epsilon\right]\ge\text{Pr}\left[\hat{D}_{\text{FD}}(0)-D_{\text{FD}}(0)\ge\epsilon\right],
\end{align*}
so the sample complexity of $\hat{D}_{\text{FD}}(0)-\nabla_{\theta}J(\theta)$ satisfies
\begin{align*}
n_{\text{FD}}(\epsilon,\delta)
\ge\tilde{n}_{\text{FD}}(\epsilon,\delta)
&\ge\frac{\sigma_{\text{FD}}^2(T-2)^2\beta^{2(T-3)}\cdot\left(\log\left(\sqrt{\frac{e}{2\pi}}\right)+\log(1/\delta)\right)}{\epsilon^2} \\
&\ge\frac{(T-2)^2\beta^{6(T-3)}\sigma_{\zeta}^2\cdot\left(\log\left(\log(1/\delta)+\sqrt{\frac{e}{2\pi}}\right)\right)}{\epsilon^4}.
\end{align*}
Finally, for any $d_{\Theta}\in\mathbb{N}$, we can consider $d_{\Theta}$ independent copies of this dynamical system. Then, estimating the gradient $\nabla_{\theta}J(\theta)$ is equivalent to estimating $\frac{dJ}{d\theta_i}(\theta)$ for each $i\in[d_{\Theta}]$. Thus, we have
\begin{align*}
n_{\text{FD}}(\epsilon,\delta)
\ge\tilde{n}_{\text{FD}}(\epsilon,\delta)
&\ge\frac{(T-2)^2\beta^{6(T-3)}\sigma_{\zeta}^2d_{\Theta}\cdot\left(\log\left(\log(1/\delta)+\sqrt{\frac{e}{2\pi}}\right)\right)}{\epsilon^4}.
\end{align*}
The claim follows, as does the theorem statement. $\qed$

\section{Bounds on Lipschitz Constants}

We prove bounds on the Lipschitz constants $L_V^{(t)}$ for $V_{\theta}^{(t)}$, $L_{\nabla V}^{(t)}$ for $\nabla V_{\theta}^{(t)}$, and $L_{\tilde{V}}^{(t)}$ for $\tilde{V}_{\theta}^{(t)}$. We use implicitly use the commonly known results in Appendix~\ref{sec:lipschitzappendix} throughout these proofs.

\begin{lemma}
\label{lem:vlipschitz}
We claim that for $t\in\{0,1,...,T\}$, $V_{\theta}^{(t)}$ is $L_V^{(t)}$-Lipschitz, where
\begin{align*}
L_V^{(t)}\le3T^2L_{R_{\theta}}\bar{L}_{f_{\theta}}^{T-t-1}.
\end{align*}
\end{lemma}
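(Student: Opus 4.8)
The plan is to prove the bound by backward induction on $t$, using the recursive form $V_\theta^{(t)}(s) = R_\theta(s) + \mathbb{E}_{p(\zeta)}[V_\theta^{(t+1)}(f_\theta(s)+\zeta)]$ with $V_\theta^{(T)}(s) = 0$. First I would record the one-step recursion on the Lipschitz constants themselves. Since $\zeta$ is independent of $s$, for any $s,s'\in S$ we have $\|V_\theta^{(t)}(s)-V_\theta^{(t)}(s')\| \le \|R_\theta(s)-R_\theta(s')\| + \mathbb{E}_{p(\zeta)}[\|V_\theta^{(t+1)}(f_\theta(s)+\zeta)-V_\theta^{(t+1)}(f_\theta(s')+\zeta)\|]$, and invoking the Lipschitz constants of $R_\theta$, $V_\theta^{(t+1)}$, and $f_\theta$ together with the composition and sum rules of Appendix~\ref{sec:lipschitzappendix}, this is at most $(L_{R_\theta} + L_{f_\theta}L_V^{(t+1)})\|s-s'\|$. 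Hence $L_V^{(t)} \le L_{R_\theta} + L_{f_\theta}L_V^{(t+1)}$, with $L_V^{(T)} = 0$.

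Next I would establish the sharper claim $L_V^{(t)} \le (T-t)L_{R_\theta}\bar L_{f_\theta}^{T-t-1}$ for all $t\in\{0,1,\dots,T\}$ by induction. The base case $t=T$ holds since both sides are $0$. For the inductive step, assuming the bound at $t+1$, the recursion gives
\begin{align*}
L_V^{(t)} \le L_{R_\theta} + L_{f_\theta}(T-t-1)L_{R_\theta}\bar L_{f_\theta}^{T-t-2}.
\end{align*}
Because $L_{f_\theta}\le\bar L_{f_\theta}$ and $\bar L_{f_\theta}\ge1$, the second term is at most $(T-t-1)L_{R_\theta}\bar L_{f_\theta}^{T-t-1}$, and since $T-t-1\ge0$ in this range, the first term satisfies $L_{R_\theta}\le L_{R_\theta}\bar L_{f_\theta}^{T-t-1}$; adding gives $L_V^{(t)}\le(T-t)L_{R_\theta}\bar L_{f_\theta}^{T-t-1}$, completing the induction.

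Finally, since $T-t\le T\le 3T^2$, the sharper bound implies $L_V^{(t)}\le 3T^2 L_{R_\theta}\bar L_{f_\theta}^{T-t-1}$, which is exactly the statement of the lemma. I do not expect a genuine obstacle here; the one point that needs care is ensuring the additive $L_{R_\theta}$ contribution picked up at each backward step stays absorbed, which works precisely because $\bar L_{f_\theta}\ge1$ lets that increment be paid by bumping the prefactor $T-t$ rather than by inflating the exponential factor — and keeping track of the edge cases $t\in\{T-1,T\}$, where the exponent $T-t-1$ equals $0$ or $-1$ and the inequality degenerates to an equality.
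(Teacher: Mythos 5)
Your induction is correct for what it proves, but it proves a weaker statement than the lemma as the paper uses it. Your one-step recursion fixes $\theta$ and varies only $s$: the bound $\|V_\theta^{(t)}(s)-V_\theta^{(t)}(s')\|\le(L_{R_\theta}+L_{f_\theta}L_V^{(t+1)})\|s-s'\|$ is exactly the recursion for what the paper calls $L_{V,s}^{(t)}$, the Lipschitz constant \emph{in $s$ for fixed $\theta$}, and your final bound $(T-t)L_{R_\theta}\bar L_{f_\theta}^{T-t-1}$ coincides with the paper's bound $L_{V,s}^{(t)}\le TL_{R_\theta}\bar L_{f_\theta}^{T-t-1}$. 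The paper's proof, however, establishes Lipschitz continuity of $V_\theta^{(t)}$ \emph{jointly} in $(\theta,s)$: it runs a two-variable induction with $L_{V,\theta}^{(t)}=L_{R_\theta}+L_{V,\theta}^{(t+1)}+L_{f_\theta}L_{V,s}^{(t+1)}$ and $L_{V,s}^{(t)}=L_{R_\theta}+L_{f_\theta}L_{V,s}^{(t+1)}$, and the final constant is $L_{V,\theta}^{(t)}+L_{V,s}^{(t)}$. When $\theta$ varies, the inner value function $V_\theta^{(t+1)}$ changes through its own $\theta$-dependence, which contributes the extra $L_{V,\theta}^{(t+1)}$ term your recursion omits. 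The prefactor $3T^2$ in the lemma is not slack relative to your $(T-t)\le T$; it is there precisely to absorb the $\theta$-direction contribution $L_{V,\theta}^{(t)}\le2T^2L_{R_\theta}\bar L_{f_\theta}^{T-t-1}$.

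The fix is not simply to rerun your argument with the joint Lipschitz constant: composing $(\theta,s)\mapsto(\theta,f_\theta(s)+\zeta)$ with the joint constant $L_V^{(t+1)}$ gives the recursion $L_V^{(t)}\le L_{R_\theta}+(1+L_{f_\theta})L_V^{(t+1)}$, which unrolls to a factor of order $(1+L_{f_\theta})^{T-t}$ and is exponentially worse (by $2^T$) than the claimed bound. The point of the paper's bookkeeping is that the $\theta$-recursion has unit coefficient on $L_{V,\theta}^{(t+1)}$ and therefore grows only additively, feeding off the geometrically growing $L_{V,s}^{(i+1)}$; this is what keeps the joint constant at $O(T^2L_{R_\theta}\bar L_{f_\theta}^{T-t-1})$. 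You should add the $\theta$-direction induction alongside the $s$-direction one and sum the two constants at the end.
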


\begin{proof}
First, we show that $V_{\theta}^{(t)}$ is $L_{V,\theta}^{(t)}$-Lipschitz in $\theta$ and $L_{V,s}^{(t)}$-Lipschitz in $s$, where
\begin{align*}
L_{V,\theta}^{(t)}&=\sum_{i=t}^{T-1}(L_{R_{\theta}}+L_{f_{\theta}}L_{V,s}^{(i+1)}) \\
L_{V,s}^{(t)}&=\sum_{i=t}^{T-1}L_{f_{\theta}}^{i-t}L_{R_{\theta}},
\end{align*}
We prove by induction. The base case $t=T$ is trivial. Then, for $t\in\{0,1,...,T-1\}$, note that $V_{\theta}^{(t)}$ is $(L_{V,\theta}^{(t)})'$-Lipschitz in $\theta$, where
\begin{align*}
(L_{V,\theta}^{(t)})'=L_{R_{\theta}}+L_{V,\theta}^{(t+1)}+L_{f_{\theta}}L_{V,s}^{(t+1)}=L_{V,\theta}^{(t)}.
\end{align*}
Similarly, note that $V_{\theta}^{(t)}$ is $(L_{V,s}^{(t)})'$-Lipschitz in $s$, where
\begin{align*}
(L_{V,s}^{(t)})'=L_{R_{\theta}}+L_{f_{\theta}}L_{V,s}^{(t+1)}=L_{V,s}^{(t)},
\end{align*}
as was to be shown. Finally, note that
\begin{align*}
L_{V,s}^{(t)}\le TL_{R_{\theta}}\bar{L}_{f_{\theta}}^{T-t-1},
\end{align*}
so
\begin{align*}
L_{V,\theta}^{(t)}\le T(L_{R_{\theta}}+L_{f_{\theta}}\cdot TL_{R_{\theta}}\bar{L}_{f_{\theta}}^{T-t-2})\le2T^2L_{R_{\theta}}\bar{L}_{f_{\theta}}^{T-t-1}.
\end{align*}
Thus, $V_{\theta}^{(T)}$ is $(L_V^{(t)})'$-Lipschitz, where
\begin{align*}
(L_V^{(t)})\le L_{V,\theta}^{(t)}+L_{V,s}^{(t)}\le3T^2L_{R_{\theta}}\bar{L}_{f_{\theta}}^{T-t-1}=L_V^{(t)}.
\end{align*}
The claim follows.
\end{proof}

\begin{lemma}
\label{lem:nablavlipschitz}
We claim that for $t\in\{0,1,...,T\}$, $\nabla V_{\theta}^{(t)}$ is $L_{\nabla V}^{(t)}$-Lipschitz, where
\begin{align*}
L_{\nabla V}^{(t)}=44T^5\bar{L}_{R_{\theta}}\bar{L}_{f_{\theta}}^{4(T-t-1)}.
\end{align*}
\end{lemma}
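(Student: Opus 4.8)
The plan is to run the same downward induction on $t$ used to prove Lemma~\ref{lem:vlipschitz}, but one derivative level up. By Assumption~\ref{assump:lipschitz} and the composition rule for functions with Lipschitz first derivatives (Lemma~\ref{lem:lipschitzderivative}), each $V_\theta^{(t)}$ is twice continuously differentiable with Lipschitz first derivative, so $\nabla V_\theta^{(t)}$---viewed as the pair $(\nabla_\theta V_\theta^{(t)},\nabla_s V_\theta^{(t)})$---is well defined and Lipschitz, and it suffices to bound the Lipschitz constants of its two components. I would use the recursions for $\nabla_\theta V_\theta^{(t)}$ and $\nabla_s V_\theta^{(t)}$ stated in the preliminaries of the proof of Theorem~\ref{thm:mbupper}, and, exactly as in the proof of Lemma~\ref{lem:vlipschitz}, split each of them into a Lipschitz-in-$\theta$ part and a Lipschitz-in-$s$ part, obtaining four coupled quantities $L_{\nabla_\theta V,\theta}^{(t)}$, $L_{\nabla_\theta V,s}^{(t)}$, $L_{\nabla_s V,\theta}^{(t)}$, $L_{\nabla_s V,s}^{(t)}$; the claimed $L_{\nabla V}^{(t)}$ will be a bound on their sum.

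The engine of the induction is the elementary product and composition rule: a product $g\cdot h$ of functions with sup-norms $M_g,M_h$ and Lipschitz constants $\ell_g,\ell_h$ has Lipschitz constant at most $M_g\ell_h+M_h\ell_g$, a composition $g\circ\phi$ has Lipschitz constant at most $\ell_g L_\phi$, and the expectation over the independent noise $\zeta$ preserves Lipschitz constants. I would apply these to the two product terms $\nabla_s V_\theta^{(t+1)}(f_\theta(s)+\zeta)\,\nabla_\theta f_\theta(s)$ and $\nabla_s V_\theta^{(t+1)}(f_\theta(s)+\zeta)\,\nabla_s f_\theta(s)$ appearing in the recursions, bounding the sup-norms of $\nabla_\theta V_\theta^{(t+1)}$ and $\nabla_s V_\theta^{(t+1)}$ by $L_V^{(t+1)}\le 3T^2L_{R_\theta}\bar{L}_{f_\theta}^{T-t-2}$ via Lemma~\ref{lem:vlipschitz} (a Lipschitz function has gradient bounded by its Lipschitz constant), and bounding the sup-norms and Lipschitz constants of $\nabla_\theta f_\theta,\nabla_s f_\theta,\nabla_\theta R_\theta,\nabla_s R_\theta$ by $\bar{L}_{f_\theta}$ and $\bar{L}_{R_\theta}$ via Assumption~\ref{assump:lipschitz}. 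Each recursion step then produces a new additive term of size $O(T^2\bar{L}_{R_\theta}\bar{L}_{f_\theta}^{2(T-t-1)})$---from the $L_V$ sup-norm times a $\nabla f_\theta$ factor, or from a $\bar{L}_{f_\theta}^2$-amplified lower-level $\nabla_s V$ constant---plus the previous-step constants, each carrying at most two extra powers of $\bar{L}_{f_\theta}$ (one from composing with $f_\theta$, one from the multiplicative $\nabla f_\theta$ factor).

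Unrolling from the base case $\nabla V_\theta^{(T)}=0$, the $\nabla_s V$-in-$s$ recursion is a scalar recursion with multiplier $\bar{L}_{f_\theta}^2$, giving $L_{\nabla_s V,s}^{(t)}=O(T^3\bar{L}_{R_\theta}\bar{L}_{f_\theta}^{2(T-t-1)})$; this feeds into the other three recursions, which have multipliers $1$ or $\bar{L}_{f_\theta}$, and a geometric-series bound over the $O(T)$ time steps, together with the constant number of coupled quantities, yields $L_{\nabla V}^{(t)}=O(T^5\bar{L}_{R_\theta}\bar{L}_{f_\theta}^{3(T-t-1)})$. Since $\bar{L}_{f_\theta}\ge 1$, this is in particular at most $44T^5\bar{L}_{R_\theta}\bar{L}_{f_\theta}^{4(T-t-1)}$, and I would finish by checking that the implied constant is no larger than $44$ by summing the at-most-handful of contributions at each inductive level. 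The main obstacle is purely bookkeeping: keeping the four recursions straight---the $\nabla_\theta V$ recursion references both $\nabla_\theta V$ and $\nabla_s V$ at the next time step, so the couplings must be resolved in order (first the $\nabla_s V$-in-$s$ constant, then $\nabla_s V$-in-$\theta$ and $\nabla_\theta V$-in-$s$, then $\nabla_\theta V$-in-$\theta$)---and ensuring the polynomial prefactor does not exceed $T^5$ and the exponent of $\bar{L}_{f_\theta}$ does not exceed $4(T-t-1)$, both of which hold with room to spare because the stated bound is deliberately loose.
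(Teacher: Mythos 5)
Your proposal matches the paper's proof essentially step for step: the same downward induction on $t$, the same decomposition of $\nabla V_\theta^{(t)}$ into four coupled Lipschitz constants $L_{\nabla V,\theta,\theta}^{(t)},L_{\nabla V,\theta,s}^{(t)},L_{\nabla V,s,\theta}^{(t)},L_{\nabla V,s,s}^{(t)}$ resolved in the same order, the same product/composition rules with sup-norms of the gradients bounded by $L_V^{(t+1)}$ from Lemma~\ref{lem:vlipschitz}, and the same final assembly into $44T^5\bar{L}_{R_\theta}\bar{L}_{f_\theta}^{4(T-t-1)}$. Your intermediate exponent estimates are slightly off from the paper's (which gets $3(T-t-1)$ for $L_{\nabla V,s,s}^{(t)}$ and $4(T-t-1)$ for the other two), but this is absorbed by the deliberate looseness you invoke via $\bar{L}_{f_\theta}\ge 1$ and does not affect the claim.
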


\begin{proof}
First, we show that $\nabla_{\theta}V_{\theta}^{(t)}$ is $L_{\nabla V,\theta,\theta}^{(t)}$-Lipschitz in $\theta$ and $L_{\nabla V,\theta,s}^{(t)}$-Lipschitz in $s$, and that $\nabla_sV_{\theta}^{(t)}$ is $L_{\nabla V,\theta,s}^{(t)}$-Lipschitz in $\theta$ and $L_{\nabla V,s,s}^{(t)}$-Lipschitz in $s$, where
\begin{align*}
L_{\nabla V,\theta,\theta}^{(t)}&=\sum_{i=t}^{T-1}(L_{\nabla R_{\theta}}+2L_{f_{\theta}}L_{\nabla V,\theta,s}^{(i+1)}+L_{f_{\theta}}^2L_{\nabla V,s,s}^{(i+1)}+L_{\nabla f_{\theta}}L_V^{(i+1)}) \\
L_{\nabla V,\theta,s}^{(t)}&=\sum_{i=t}^{T-1}L_{f_{\theta}}^{i-t}(L_{\nabla R_{\theta}}+L_{f_{\theta}}^2L_{\nabla V,s,s}^{(i+1)}+L_{\nabla f_{\theta}}L_V^{(i+1)}) \\
L_{\nabla V,s,s}^{(t)}&=\sum_{i=t}^{T-1}L_{f_{\theta}}^{2(i-t)}(L_{\nabla R_{\theta}}+L_{\nabla f_{\theta}}L_V^{(i+1)}) \\
L_{\nabla V,\theta,\theta}^{(T)}&=L_{\nabla V,\theta,s}^{(T)}=L_{\nabla V,s,s}^{(T)}=0.
\end{align*}
We prove by induction. The base case $t=T$ is trivial. First, for $t\in\{0,1,...,T-1\}$, note that $\nabla_{\theta}V_{\theta}^{(t)}$ is $(L_{\nabla V,\theta,\theta}^{(t)})'$-Lipschitz in $\theta$, where
\begin{align*}
(L_{\nabla V,\theta,\theta}^{(t)})'=L_{\nabla R_{\theta}}+L_{\nabla V,\theta,\theta}^{(t+1)}+L_{f_{\theta}}L_{\nabla V,\theta,s}^{(t+1)}+L_{f_{\theta}}(L_{\nabla V,\theta,s}^{(t+1)}+L_{f_{\theta}}L_{\nabla V,s,s}^{(t+1)})+L_{\nabla f_{\theta}}L_V^{(t+1)}=L_{\nabla V,\theta,\theta}^{(t)}.
\end{align*}
Second, note that $\nabla_{\theta}V_{\theta}^{(t)}$ is $(L_{\nabla V,\theta,s}^{(t)})'$-Lipschitz in $s$, where
\begin{align*}
(L_{\nabla V,\theta,s}^{(t)})'=L_{\nabla R_{\theta}}+L_{f_{\theta}}L_{\nabla V,\theta,s}^{(t+1)}+L_{f_{\theta}}^2L_{\nabla V,s,s}^{(t+1)}+L_{\nabla f_{\theta}}L_V^{(t+1)}=L_{\nabla V,\theta,s}^{(t)}.
\end{align*}
Third, note that $\nabla_sV_{\theta}^{(t)}$ is $(L_{\nabla V,s,\theta}^{(t)})'$-Lipschitz in $\theta$, where
\begin{align*}
(L_{\nabla V,s,\theta}^{(t)})'=L_{\nabla R_{\theta}}+L_{f_{\theta}}(L_{\nabla V,\theta,s}^{(t+1)}+L_{f_{\theta}}L_{\nabla V,s,s}^{(t+1)})+L_{\nabla f_{\theta}}L_V^{(t+1)}=L_{\nabla V,\theta,s}^{(t)}.
\end{align*}
Fourth, note that $\nabla_sV_{\theta}^{(t)}$ is $(L_{\nabla V,s,s}^{(t)})'$-Lipschitz in $s$, where
\begin{align*}
(L_{\nabla V,s,s}^{(t)})'=L_{\nabla R_{\theta}}+L_{f_{\theta}}^2L_{\nabla V,s,s}^{(t+1)}+L_{\nabla f_{\theta}}L_V^{(t+1)}=L_{\nabla V,s,s}^{(t)},
\end{align*}
as was to be shown. Finally, note that
\begin{align*}
L_{\nabla V,s,s}^{(t)}\le T\bar{L}_{f_{\theta}}^{2(T-t-1)}(L_{\nabla R_{\theta}}+L_{\nabla f_{\theta}}\cdot3T^2L_{R_{\theta}}\bar{L}_{f_{\theta}}^{T-t-2})\le 4T^3\bar{L}_{R_{\theta}}\bar{L}_{f_{\theta}}^{3(T-t-1)},
\end{align*}
so
\begin{align*}
L_{\nabla V,\theta,s}^{(t)}\le T\bar{L}_{f_{\theta}}^{T-t-1}(L_{\nabla R_{\theta}}+L_{f_{\theta}}^2\cdot4T^3\bar{L}_{R_{\theta}}\bar{L}_{f_{\theta}}^{3(T-t-2)}+L_{\nabla f_{\theta}}\cdot3T^2L_{R_{\theta}}\bar{L}_{f_{\theta}}^{T-t-2})\le8T^4\bar{L}_{R_{\theta}}\bar{L}_{f_{\theta}}^{4(T-t-1)}
\end{align*}
so
\begin{align*}
L_{\nabla V,\theta,\theta}^{(t)}&\le T(L_{\nabla R_{\theta}}+2L_{f_{\theta}}\cdot8T^4\bar{L}_{R_{\theta}}\bar{L}_{f_{\theta}}^{4(T-t-2)}+L_{f_{\theta}}^2\cdot4T^3\bar{L}_{R_{\theta}}\bar{L}_{f_{\theta}}^{3(T-t-2)}+L_{\nabla f_{\theta}}\cdot 3T^2L_{R_{\theta}}\bar{L}_{f_{\theta}}^{T-t-2}) \\
&\le24T^5\bar{L}_{R_{\theta}}\bar{L}_{f_{\theta}}^{4(T-t-1)}.
\end{align*}
Thus, $\nabla V_{\theta}^{(t)}$ is $(L_{\nabla V}^{(t)})'$-Lipschitz, where
\begin{align*}
(L_{\nabla V}^{(t)})'=L_{\nabla V,\theta,\theta}+2L_{\nabla V,\theta,s}+L_{\nabla V,s,s}\le44T^5\bar{L}_{R_{\theta}}\bar{L}_{f_{\theta}}^{4(T-t-1)}=L_{\nabla V}^{(t)}.
\end{align*}
The claim follows.
\end{proof}

\begin{lemma}
\label{lem:tildevlipschitz}
We claim that for $t\in\{0,1,...,T\}$, $\tilde{V}_{\theta}^{(t)}$ is $L_{\tilde{V}}^{(t)}$-Lipschitz, where
\begin{align*}
L_{\tilde{V}}^{(t)}=3T^2L_{\tilde{R}_{\theta}}\bar{L}_{\tilde{f}_{\theta}}^{T-t-1}.
\end{align*}
\end{lemma}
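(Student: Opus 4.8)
The plan is to mirror the proof of Lemma~\ref{lem:vlipschitz} almost verbatim, substituting $\tilde{R}_{\theta}$ for $R_{\theta}$, $\tilde{f}_{\theta}$ for $f_{\theta}$, and $\tilde{V}_{\theta}^{(t)}$ for $V_{\theta}^{(t)}$. The recursion for $\tilde{V}_{\theta}^{(t)}$ reads
$$\tilde{V}_{\theta}^{(t)}(s)=\tilde{R}_{\theta}(s)+\mathbb{E}_{p_{\xi}(\xi),p(\zeta)}\left[\tilde{V}_{\theta}^{(t+1)}(\tilde{f}_{\theta}(s,\xi)+\zeta)\right],$$
so the only new ingredient relative to Lemma~\ref{lem:vlipschitz} is the extra expectation over the action noise $\xi$ and process noise $\zeta$; since these are independent of $s$ and $\theta$, they drop out of any Lipschitz estimate in $s$ or $\theta$ (pull the bound inside the expectation).

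First I would prove, by backward induction on $t$, the sharper statement that $\tilde{V}_{\theta}^{(t)}$ is $L_{\tilde{V},\theta}^{(t)}$-Lipschitz in $\theta$ (uniformly in $s$) and $L_{\tilde{V},s}^{(t)}$-Lipschitz in $s$ (uniformly in $\theta$), where
$$L_{\tilde{V},\theta}^{(t)}=\sum_{i=t}^{T-1}\left(L_{\tilde{R}_{\theta}}+L_{\tilde{f}_{\theta}}L_{\tilde{V},s}^{(i+1)}\right),\qquad L_{\tilde{V},s}^{(t)}=\sum_{i=t}^{T-1}L_{\tilde{f}_{\theta}}^{\,i-t}L_{\tilde{R}_{\theta}},$$
with $L_{\tilde{V},\theta}^{(T)}=L_{\tilde{V},s}^{(T)}=0$. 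The base case $t=T$ is immediate from $\tilde{V}_{\theta}^{(T)}\equiv 0$. For the inductive step: differencing in $s$, the map $s\mapsto\tilde{f}_{\theta}(s,\xi)+\zeta$ is $L_{\tilde{f}_{\theta}}$-Lipschitz uniformly in $\xi,\zeta$ (Assumption~\ref{assump:lipschitz}), and $\tilde{V}_{\theta}^{(t+1)}$ is $L_{\tilde{V},s}^{(t+1)}$-Lipschitz in $s$ by the inductive hypothesis, so each integrand changes by at most $L_{\tilde{f}_{\theta}}L_{\tilde{V},s}^{(t+1)}\|s-s'\|$; adding the $L_{\tilde{R}_{\theta}}$ from $\tilde{R}_{\theta}$ and telescoping gives the claimed $L_{\tilde{V},s}^{(t)}$. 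Differencing in $\theta$ picks up three terms --- the $\theta$-dependence of $\tilde{R}_{\theta}$, the direct $\theta$-dependence of $\tilde{V}_{\theta}^{(t+1)}$, and the $\theta$-dependence of $\tilde{f}_{\theta}$ composed with the $s$-Lipschitzness of $\tilde{V}_{\theta}^{(t+1)}$ --- yielding $L_{\tilde{R}_{\theta}}+L_{\tilde{V},\theta}^{(t+1)}+L_{\tilde{f}_{\theta}}L_{\tilde{V},s}^{(t+1)}=L_{\tilde{V},\theta}^{(t)}$.

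Finally I would coarsen these recursions exactly as in Lemma~\ref{lem:vlipschitz}, using $\bar{L}_{\tilde{f}_{\theta}}\ge\max\{L_{\tilde{f}_{\theta}},1\}$: one gets $L_{\tilde{V},s}^{(t)}\le T L_{\tilde{R}_{\theta}}\bar{L}_{\tilde{f}_{\theta}}^{T-t-1}$, then $L_{\tilde{V},\theta}^{(t)}\le T\bigl(L_{\tilde{R}_{\theta}}+L_{\tilde{f}_{\theta}}\cdot T L_{\tilde{R}_{\theta}}\bar{L}_{\tilde{f}_{\theta}}^{T-t-2}\bigr)\le 2T^2 L_{\tilde{R}_{\theta}}\bar{L}_{\tilde{f}_{\theta}}^{T-t-1}$, and hence $\tilde{V}_{\theta}^{(t)}$ is jointly Lipschitz with constant at most $L_{\tilde{V},\theta}^{(t)}+L_{\tilde{V},s}^{(t)}\le 3T^2 L_{\tilde{R}_{\theta}}\bar{L}_{\tilde{f}_{\theta}}^{T-t-1}=L_{\tilde{V}}^{(t)}$, as claimed. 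I do not anticipate a genuine obstacle: the argument is a routine adaptation of Lemma~\ref{lem:vlipschitz}, and the only points needing a moment's care are (i) justifying that the Lipschitz estimates commute with the expectations over $\xi$ and $\zeta$ --- immediate, since we difference only in $s$ and $\theta$, on which the noise does not depend --- and (ii) reading ``Lipschitz'' in the statement as joint Lipschitzness in $(\theta,s)$, matching the convention of Lemma~\ref{lem:vlipschitz}, which is why the constant carries the $3T^2$ factor even though the proof of Theorem~\ref{thm:pgupper} only ever invokes the $s$-component $L_{\tilde{V},s}^{(t)}$.
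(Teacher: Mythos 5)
Your proposal is correct and follows essentially the same route as the paper, which simply observes that $\tilde{V}_{\theta}^{(t)}$ is $V_{\theta}^{(t)}$ with $R_{\theta}$ and $f_{\theta}$ replaced by $\tilde{R}_{\theta}$ and $\tilde{f}_{\theta}$, and invokes the argument of Lemma~\ref{lem:vlipschitz} verbatim. You have merely written out that argument explicitly, including the (correct, and covered by the paper's technical lemma on expectations preserving Lipschitz constants) observation that the extra expectation over the action noise $\xi$ is harmless.
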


\begin{proof}
Note that $\tilde{V}_{\theta}^{(t)}$ is exactly equal to $V_{\theta}^{(t)}$ with $R_{\theta}$ replaced with $\tilde{R}_{\theta}$ and $f_{\theta}$ replaced with $\tilde{f}_{\theta}$. Thus, the claim follows by the same argument as for Lemma~\ref{lem:vlipschitz}.
\end{proof}

\section{Proof of Theorem~\ref{thm:fd}}
\label{sec:fdproof}

\begin{theorem}
(Taylor's theorem) Let $f:\mathbb{R}\to\mathbb{R}$ be an everywhere differentiable function with $L_{f'}$-Lipschitz derivative. Then, for any $x,\epsilon\in\mathbb{R}$, we have
\begin{align*}
f(x+\epsilon)=f(x)+f'(x)\cdot\epsilon+\Delta,
\end{align*}
where
\begin{align*}
|\Delta|\le\frac{L_{f'}\epsilon^2}{2}.
\end{align*}
\end{theorem}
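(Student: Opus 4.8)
The plan is to reduce to the one–dimensional fundamental theorem of calculus along the segment from $x$ to $x+\epsilon$. First I would observe that since $f'$ is $L_{f'}$-Lipschitz it is in particular continuous, so $f$ is continuously differentiable and $f'$ is integrable on any bounded interval; this is the only regularity fact needed. Then I would introduce the auxiliary function $g:[0,1]\to\mathbb{R}$ defined by $g(t)=f(x+t\epsilon)-f(x)-f'(x)\,t\epsilon$, so that $g(0)=0$ and $g(1)=\Delta$, where $\Delta$ is exactly the remainder term in the statement.

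Next I would differentiate $g$ using the chain rule: $g'(t)=f'(x+t\epsilon)\,\epsilon-f'(x)\,\epsilon=\epsilon\bigl(f'(x+t\epsilon)-f'(x)\bigr)$, and the Lipschitz hypothesis then gives $|g'(t)|\le|\epsilon|\cdot L_{f'}\,|t\epsilon|=L_{f'}\,\epsilon^2\,t$. It is worth noting that this bound is uniform in the sign of $\epsilon$, since the parameter $t$ ranges over $[0,1]$ in either case, so there is no need to split into cases. Applying the fundamental theorem of calculus to $g$ (valid because $g'$ is continuous) and taking absolute values yields
\[
|\Delta|=\Bigl|\int_0^1 g'(t)\,dt\Bigr|\le\int_0^1 |g'(t)|\,dt\le L_{f'}\,\epsilon^2\int_0^1 t\,dt=\frac{L_{f'}\,\epsilon^2}{2},
\]
which is the claimed inequality.

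I do not expect a genuine obstacle here; the only point requiring a little care is the invocation of the fundamental theorem of calculus, which is licensed by the continuity of $g'$ inherited from the (Lipschitz, hence continuous) derivative $f'$. One could instead apply the mean value theorem directly to $g$ to obtain $\Delta=g'(\tau)$ for some $\tau\in(0,1)$, but that route only gives the cruder bound $|\Delta|\le L_{f'}\epsilon^2$, losing the factor of $1/2$; the integral form of the remainder is therefore the preferable argument for the sharp constant stated in the theorem.
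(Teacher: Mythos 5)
Your proof is correct, but it takes a genuinely different route from the paper. The paper proves this statement by citing the Lagrange form of Taylor's theorem (Theorem 5.15 in Rudin) to write the remainder as $\Delta=\tfrac{1}{2}f''(\xi)\epsilon^2$ for some intermediate point $\xi$, and then invokes Lemma~\ref{lem:lipschitzderivative} applied to $f'$ to conclude $|f''|\le L_{f'}$. That argument implicitly requires the second derivative $f''$ to exist, which does not actually follow from the stated hypotheses: a Lipschitz derivative is differentiable only almost everywhere (Rademacher), not everywhere. Your argument, by contrast, works under exactly the stated hypotheses. You introduce $g(t)=f(x+t\epsilon)-f(x)-f'(x)\,t\epsilon$, bound $|g'(t)|\le L_{f'}\epsilon^2 t$ directly from the Lipschitz condition on $f'$, and integrate via the fundamental theorem of calculus (licensed by continuity of $g'$) to recover the sharp constant $\tfrac{1}{2}$. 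Your closing remark is also apt: the plain mean value theorem applied to $g$ only yields $|\Delta|\le L_{f'}\epsilon^2$, so the integral form of the remainder is genuinely needed for the factor of $\tfrac{1}{2}$. In short, your proof is self-contained, avoids a regularity assumption the paper's citation quietly smuggles in, and proves the same bound; the paper's version is shorter but rests on a theorem whose hypotheses are slightly stronger than what is stated here.
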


\begin{proof}
The claim follows from Theorem 5.15 in~\cite{rudin1976principles}, together with Lemma~\ref{lem:lipschitzderivative}, which implies that $|f''(x)|\le L_{f'}$ for all $x\in\mathbb{R}$.
\end{proof}

Now, we prove Theorem~\ref{thm:fd}. By Taylor's theorem, we have
\begin{align*}
f(x+\mu)=f(x)+\langle\nabla f(x),\mu\rangle+\Delta(\mu),
\end{align*}
where
\begin{align*}
\|\Delta(\mu)\|\le\frac{1}{2}L_{\nabla f}\|\mu\|^2.
\end{align*}
Thus, we have
\begin{align*}
&\sum_{k=1}^d\frac{f(x+\lambda\nu^{(k)})-f(x-\lambda\nu^{(k)})}{2\lambda}\cdot\nu^{(k)} \\
&=\sum_{k=1}^d\frac{(f(x)+\langle\nabla f(x),\lambda\nu^{(k)}\rangle+\Delta(\lambda\nu^{(k)}))-(f(x)-\langle\nabla f(x),\lambda\nu^{(k)}\rangle+\Delta(-\lambda\nu^{(k)}))}{2\lambda}\cdot\nu^{(k)} \\
&=\sum_{k=1}^d\langle\nabla f(x),\nu^{(k)}\rangle\cdot\nu^{(k)}+\frac{\Delta(\lambda\nu^{(k)})-\Delta(-\lambda\nu^{(k)})}{2\lambda}\cdot\nu^{(k)} \\
&=\sum_{k=1}^d\nu^{(k)}((\nu^{(k)})^{\top}\nabla f(x))+\sum_{k=1}^d\frac{\Delta(\lambda\nu^{(k)})-\Delta(-\lambda\nu^{(k)})}{2\lambda}\cdot\nu^{(k)} \\
&=\nabla f(x)+\sum_{k=1}^d\frac{\Delta(\lambda\nu^{(k)})-\Delta(-\lambda\nu^{(k)})}{2}\cdot\nu^{(k)}
\end{align*}
Therefore, we have
\begin{align*}
\Delta=\sum_{k=1}^d\frac{\Delta(\lambda\nu^{(k)})-\Delta(-\lambda\nu^{(k)})}{2\lambda}\cdot\nu^{(k)},
\end{align*}
so
\begin{align*}
\|\Delta\|
\le\sum_{k=1}^d\left\|\frac{\Delta(\lambda\nu^{(k)})-\Delta(-\lambda\nu^{(k)})}{2\lambda}\cdot\nu^{(k)}\right\|
\le\frac{1}{2}L_{\nabla f}\lambda\cdot\|\nu^{(k)}\|^3
\le L_{\nabla f}d\lambda,
\end{align*}
as claimed. $\qed$

\section{Technical Lemmas (Lipschitz Constants)}
\label{sec:lipschitzappendix}

We define Lipschitz continuity (for the $L_2$ norm), and prove a number of standard results.

\begin{definition}
\rm
A function $f:\mathcal{X}\to\mathcal{Y}$ (where $\mathcal{X}\subseteq\mathbb{R}^d$ and $\mathcal{Y}\subseteq\mathbb{R}^{d'}$) is \emph{$L_f$-Lipschitz continuous} if for all $x,x'\in\mathcal{X}$,
\begin{align}
\label{eqn:lipschitz}
\|f(x)-f(x')\|\le L_f\|x-x'\|.
\end{align}
\end{definition}
If $\mathcal{X}$ is a space of matrices or tensors, we assume $x$ and $x'$ are unrolled into vectors. in (\ref{eqn:lipschitz}).

\begin{lemma}
\label{lem:lipschitzderivative}
If $f:\mathcal{X}\to\mathcal{Y}$ is $L_f$-Lipschitz and continuously differentiable, then for all $x\in\mathcal{X}$,
\begin{align*}
\|\nabla f(x)\|\le L_f.
\end{align*}
\end{lemma}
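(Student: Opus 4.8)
The plan is to argue directly from the definition of the derivative together with the Lipschitz hypothesis, with no need for the mean value theorem or integral representations. Here $\nabla f(x)$ denotes the Jacobian of $f$ at $x$, regarded as a linear map $\mathbb{R}^d\to\mathbb{R}^{d'}$, and $\|\nabla f(x)\|$ is the operator norm induced by the Euclidean norms on the domain and codomain; for scalar-valued $f$ this reduces to the Euclidean norm of the gradient vector, and this is the convention that makes the chain-rule (submultiplicativity) bounds used elsewhere in the paper valid.

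First I would fix an arbitrary $x\in\mathcal{X}$ and an arbitrary unit vector $v\in\mathbb{R}^d$ such that $x+tv\in\mathcal{X}$ for all sufficiently small $t>0$ (when $\mathcal{X}$ is open this holds for every $v$, and in all our applications $\mathcal{X}$ is $\mathbb{R}^d$ or has nonempty interior; boundary points can be treated with a one-sided directional limit). By continuous differentiability,
\[
\nabla f(x)\,v=\lim_{t\to0^+}\frac{f(x+tv)-f(x)}{t}.
\]
Applying the Lipschitz bound~(\ref{eqn:lipschitz}) to the numerator gives $\|f(x+tv)-f(x)\|\le L_f\|tv\|=L_ft$ for each such $t$, hence
\[
\|\nabla f(x)\,v\|=\lim_{t\to0^+}\frac{\|f(x+tv)-f(x)\|}{t}\le L_f.
\]

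Finally I would take the supremum over all unit vectors $v$, obtaining $\|\nabla f(x)\|=\sup_{\|v\|=1}\|\nabla f(x)\,v\|\le L_f$; since $x$ was arbitrary, the claim follows. The only mildly delicate point is the norm/domain bookkeeping — making the operator-norm convention explicit and interpreting the derivative correctly at boundary points of $\mathcal{X}$ — but once that convention is fixed the argument is a two-line estimate, so I do not expect any real obstacle here.
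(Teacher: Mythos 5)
Your proof is correct and follows essentially the same route as the paper's: bound the difference quotient $\|f(x+\epsilon)-f(x)\|/\|\epsilon\|$ by $L_f$ via the Lipschitz hypothesis and pass to the limit. Your rendering is in fact the more careful one --- the paper writes $\nabla f(x)=\lim_{\|\epsilon\|\to0}(f(x+\epsilon)-f(x))/\|\epsilon\|$, which is not a literally valid identity, whereas your directional-derivative formulation (fix a unit $v$, bound $\|\nabla f(x)\,v\|$, take the supremum) is rigorous, and your explicit operator-norm convention is exactly the one under which the statement holds for vector-valued $f$ (it would fail for the Frobenius norm, e.g.\ for the identity map).
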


\begin{proof}
Note that
\begin{align*}
\nabla f(x)=\lim_{\|\epsilon\|\to0}\frac{f(x+\epsilon)-f(x)}{\|\epsilon\|},
\end{align*}
so
\begin{align*}
\|\nabla f(x)\|=\lim_{\|\epsilon\|\to0}\frac{\|f(x+\epsilon)-f(x)\|}{\|\epsilon\|}\le\lim_{\|\epsilon\|\to0}\frac{L_f\|\epsilon\|}{\|\epsilon\|}=L_f,
\end{align*}
as claimed. Note that the result holds even if each component $f_i$ is continuously differentiable except on a finite set $X$. In particular, for each point $x\in X$, we can use the standard definition $(\nabla f(x))_i=(f_{i,+}'(x)+f_{i,-}'(x))/2$, where $f_{i,+}'(x)$ is the right derivative and $f_{i,-}'(x)$ is the left deriviative. Letting $(\nabla_+f(x))_i=f_{i,+}'(x)$ and $(\nabla_-f(x))_i=f_{i,-}'(x)$, then $\nabla f(x)=(\nabla_+f(x)+\nabla_-f(x))/2$. Then, we have
\begin{align*}
\|\nabla f(x)\|\le\frac{\|\nabla_+f(x)\|+\|\nabla_-f(x)\|}{2}\le L_f,
\end{align*}
as claimed.
\end{proof}

\begin{lemma}
If $f,g:\mathcal{X}\to\mathcal{Y}$ are $L_f$- and $L_g$-Lipschitz, respectively, then $h(x)=f(x)+g(x)$ is $L_h$-Lipschitz, where $L_h=L_f+L_g$.
\end{lemma}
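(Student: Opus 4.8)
The plan is to invoke the triangle inequality for the $L_2$ norm directly on the definition of Lipschitz continuity. Fix arbitrary points $x,x'\in\mathcal{X}$. First I would expand $h(x)-h(x')$ using the definition $h=f+g$, obtaining $h(x)-h(x') = \big(f(x)-f(x')\big) + \big(g(x)-g(x')\big)$. Then I would apply the triangle inequality to split the norm:
\begin{align*}
\|h(x)-h(x')\| &= \big\|(f(x)-f(x')) + (g(x)-g(x'))\big\| \\
&\le \|f(x)-f(x')\| + \|g(x)-g(x')\|.
\end{align*}

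Next I would bound each term using the hypothesis that $f$ is $L_f$-Lipschitz and $g$ is $L_g$-Lipschitz, namely $\|f(x)-f(x')\|\le L_f\|x-x'\|$ and $\|g(x)-g(x')\|\le L_g\|x-x'\|$. Substituting and factoring out $\|x-x'\|$ gives
\begin{align*}
\|h(x)-h(x')\| \le L_f\|x-x'\| + L_g\|x-x'\| = (L_f+L_g)\|x-x'\|.
\end{align*}
Since $x,x'$ were arbitrary, this shows $h$ is $L_h$-Lipschitz with $L_h = L_f+L_g$, as claimed.

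There is essentially no obstacle here: the only ingredient is the triangle inequality, and the computation is a one-line estimate. The only mild subtlety worth a remark is the implicit requirement that $f$ and $g$ share the same domain $\mathcal{X}$ and codomain $\mathcal{Y}$ (so that the sum $f(x)+g(x)$ is well defined and lives in a normed space), which is already part of the statement. No inductive argument, no case analysis, and no appeal to differentiability (unlike Lemma~\ref{lem:lipschitzderivative}) is needed.
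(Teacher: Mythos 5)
Your proof is correct and follows exactly the same route as the paper's: apply the triangle inequality to $\|h(x)-h(x')\|$ and then bound each term by its Lipschitz hypothesis. No differences worth noting.
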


\begin{proof}
Note that
\begin{align*}
\|h(x)-h(x')\|\le\|f(x)-f(x')\|+\|g(x)-g(x')\|\le(L_f+L_g)\|x-x'\|=L_h\|x-x'\|,
\end{align*}
as claimed.
\end{proof}

\begin{lemma}
If $f,g:\mathcal{X}\to\mathcal{Y}$ where $f$ is $L_f$-Lipschitz and bounded by $M_f$ (i.e., $|f(x)|\le M_f$ for all $x\in\mathcal{X}$), and $g$ is $L_g$-Lipschitz and bounded by $M_g$. Then $h(x)=f(x)\cdot g(x)$ is $L_h$-Lipschitz, where $L_h=M_gL_f+M_fL_g$.
\end{lemma}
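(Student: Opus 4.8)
The plan is to reduce the product to a sum of two terms, each controlled by exactly one of the hypotheses, via the classical add-and-subtract trick. First I would fix arbitrary $x,x'\in\mathcal{X}$ and write the exact identity
\begin{align*}
h(x)-h(x')=f(x)g(x)-f(x')g(x')=\bigl(f(x)-f(x')\bigr)g(x)+f(x')\bigl(g(x)-g(x')\bigr),
\end{align*}
which uses nothing beyond bilinearity of the product.

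Next I would apply the triangle inequality to the right-hand side and bound each summand separately. The first term is at most $|g(x)|\cdot\|f(x)-f(x')\|\le M_gL_f\|x-x'\|$, using the boundedness of $g$ and the Lipschitz property of $f$; the second is at most $|f(x')|\cdot\|g(x)-g(x')\|\le M_fL_g\|x-x'\|$, using the boundedness of $f$ and the Lipschitz property of $g$. Adding these yields $\|h(x)-h(x')\|\le(M_gL_f+M_fL_g)\|x-x'\|=L_h\|x-x'\|$, and since $x,x'$ were arbitrary this is precisely the claim.

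There is essentially no obstacle here; the only point meriting a moment's care is that the two invocations of the boundedness hypothesis occur at different arguments ($g$ is evaluated at $x$ while $f$ is evaluated at $x'$), but since $M_f$ and $M_g$ are uniform bounds over $\mathcal{X}$ this is harmless—symmetrically, splitting off $g(x')$ and $f(x)$ instead gives the same constant. If $\mathcal{Y}$ is vector-valued with the product interpreted componentwise (or as an inner product), the identical computation applies with $|\cdot|$ replaced by the relevant norm, so no generality is lost.
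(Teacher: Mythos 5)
Your proof is correct and follows exactly the same add-and-subtract decomposition $h(x)-h(x')=(f(x)-f(x'))g(x)+f(x')(g(x)-g(x'))$ that the paper uses, with the identical triangle-inequality bound on each term. Nothing further to add.
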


\begin{proof}
Note that
\begin{align*}
\|h(x)-h(x')\|&\le\|(f(x)-f(x'))g(x)\|+\|(g(x)-g(x'))f(x')\| \\
&\le M_gL_f\|x-x'\|+M_fL_g\|x-x'\| \\
&=L_h\|x-x'\|,
\end{align*}
as claimed.
\end{proof}

\begin{lemma}
If $f:\mathcal{X}\to\mathcal{Y}$ is $L_f$-Lipschitz and $g:\mathcal{Y}\to\mathcal{Z}$ is $L_g$-Lipschitz, then $h(x)=g(f(x))$ is $L_h$-Lipschitz, where $L_h=L_gL_f$.
\end{lemma}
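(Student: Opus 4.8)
The plan is a direct one-line chaining of the two Lipschitz bounds, mirroring the structure of the preceding composition/sum/product lemmas in this appendix. First I would fix arbitrary $x,x'\in\mathcal{X}$ and write $h(x)-h(x')=g(f(x))-g(f(x'))$. Then I would apply the Lipschitz property of $g$ to the pair of points $f(x),f(x')\in\mathcal{Y}$, giving $\|g(f(x))-g(f(x'))\|\le L_g\|f(x)-f(x')\|$. Next I would apply the Lipschitz property of $f$ to bound $\|f(x)-f(x')\|\le L_f\|x-x'\|$, and substitute to obtain $\|h(x)-h(x')\|\le L_gL_f\|x-x'\|$, which is exactly the claimed bound with $L_h=L_gL_f$.

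There is essentially no obstacle here: the only thing to be mildly careful about is that the codomain of $f$ must land inside the domain of $g$ (which is assumed, since $g:\mathcal{Y}\to\mathcal{Z}$ and $f:\mathcal{X}\to\mathcal{Y}$), so that the Lipschitz inequality for $g$ may legitimately be invoked at the points $f(x)$ and $f(x')$. Since $x,x'$ were arbitrary, the displayed inequality establishes that $h$ is $L_h$-Lipschitz, completing the proof.
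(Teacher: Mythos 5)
Your proposal is correct and follows exactly the same one-line chaining argument as the paper: apply the Lipschitz bound for $g$ at the points $f(x),f(x')$ and then the Lipschitz bound for $f$ to get $\|g(f(x))-g(f(x'))\|\le L_g\|f(x)-f(x')\|\le L_gL_f\|x-x'\|$. No differences worth noting.
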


\begin{proof}
Note that
\begin{align*}
\|g(f(x))-g(f(x'))\|\le L_g\|f(x)-f(x')\|\le L_gL_f\|x-x'\|\le L_h\|x-x'\|,
\end{align*}
as claimed.
\end{proof}

\begin{lemma}
Let $f:\mathcal{X}\times\mathcal{Y}\to\mathcal{Z}$ be $L_{f,x}$-Lipschitz in $\mathcal{X}$ (for all $y\in\mathcal{Y}$) and $L_{f,y}$-Lipschitz in $\mathcal{Y}$ (for all $x\in\mathcal{X}$). Then, $f$ is $L_f$-Lipschitz in $\mathcal{X}\times\mathcal{Y}$, where $L_f=L_{f,x}+L_{f,y}$.
\end{lemma}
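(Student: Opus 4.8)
The plan is to reduce the joint Lipschitz bound to the two one-variable Lipschitz bounds by changing one coordinate at a time, and then to dominate the individual coordinate differences by the norm of the joint displacement.

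First I would fix two arbitrary points $(x,y),(x',y')\in\mathcal{X}\times\mathcal{Y}$ and insert the intermediate point $(x',y)$, so that by the triangle inequality
\begin{align*}
\|f(x,y)-f(x',y')\|\le\|f(x,y)-f(x',y)\|+\|f(x',y)-f(x',y')\|.
\end{align*}
The first term is bounded using the hypothesis that $f(\cdot,y)$ is $L_{f,x}$-Lipschitz for the fixed second argument $y$, giving $L_{f,x}\|x-x'\|$; the second term is bounded using the hypothesis that $f(x',\cdot)$ is $L_{f,y}$-Lipschitz for the fixed first argument $x'$, giving $L_{f,y}\|y-y'\|$.

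Second I would observe that, when an element $(x,y)$ of $\mathcal{X}\times\mathcal{Y}$ is unrolled into a single vector, $\|(x,y)-(x',y')\|^2=\|x-x'\|^2+\|y-y'\|^2$, so both $\|x-x'\|\le\|(x,y)-(x',y')\|$ and $\|y-y'\|\le\|(x,y)-(x',y')\|$. Combining this with the previous display yields
\begin{align*}
\|f(x,y)-f(x',y')\|\le(L_{f,x}+L_{f,y})\|(x,y)-(x',y')\|=L_f\|(x,y)-(x',y')\|,
\end{align*}
which is the claim.

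There is no substantive obstacle here; the only point requiring a moment of care is the direction of the coordinate inequality—namely that each block norm is dominated by the full norm rather than the reverse—which holds precisely because the paper works with the $L_2$ norm. For a different choice of norm one would instead pick up a norm-dependent constant, but in the $L_2$ setting the bound is exactly $L_{f,x}+L_{f,y}$.
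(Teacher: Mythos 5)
Your proof is correct and follows exactly the same route as the paper's: insert the intermediate point $(x',y)$, apply the triangle inequality, bound each term by the corresponding one-variable Lipschitz constant, and dominate each block norm by the full $L_2$ norm of the joint displacement. Your added remark about why the coordinate inequality holds for the $L_2$ norm is a point the paper leaves implicit but is correct.
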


\begin{proof}
Note that
\begin{align*}
\|f(x,y)-f(x',y')\|
&\le\|f(x,y)-f(x',y)\|+\|f(x',y)-f(x',y')\| \\
&\le L_{f,x}\|x-x'\|+L_{f,y}\|y-y'\| \\
&\le L_{f,x}\|(x,y)-(x',y')\|+L_{f,y}\|(x,y)-(x',y')\| \\
&\le(L_{f,x}+L_{f,y})\|(x,y)-(x',y')\| \\
&=L_f\|(x,y)-(x',y')\|,
\end{align*}
as claimed.
\end{proof}

\begin{lemma}
Let $f:\mathcal{X}\to\mathcal{Y}$ be $L_f$-Lipschitz, and $g:\mathcal{X}\to\mathcal{Z}$ be $L_g$-Lipchitz. Then, $h(x)=(f(x),g(x))$ is $L_h$-Lipschitz, where $L_h=L_f+L_g$.
\end{lemma}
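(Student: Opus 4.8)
The plan is to compute $\|h(x)-h(x')\|$ directly from the definition of the Euclidean norm on the product space $\mathcal{Y}\times\mathcal{Z}$. Since $h(x)-h(x')=(f(x)-f(x'),\,g(x)-g(x'))$ is just the concatenation of the two difference vectors, its squared norm splits as a sum:
\begin{align*}
\|h(x)-h(x')\|^2=\|f(x)-f(x')\|^2+\|g(x)-g(x')\|^2.
\end{align*}
First I would apply the Lipschitz hypotheses on $f$ and $g$ to bound each summand by $L_f^2\|x-x'\|^2$ and $L_g^2\|x-x'\|^2$ respectively, giving $\|h(x)-h(x')\|^2\le(L_f^2+L_g^2)\|x-x'\|^2$.

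Next I would take square roots and use the elementary inequality $\sqrt{a^2+b^2}\le a+b$ valid for all $a,b\ge0$ (which follows from $(a+b)^2=a^2+b^2+2ab\ge a^2+b^2$) to conclude $\|h(x)-h(x')\|\le(L_f+L_g)\|x-x'\|=L_h\|x-x'\|$, as desired. As with the other lemmas in this appendix, the bound $L_h=L_f+L_g$ is deliberately loose — the sharp constant would be $\sqrt{L_f^2+L_g^2}$ — but the additive form composes more conveniently with the preceding sum and composition lemmas.

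There is essentially no obstacle here: the only mild subtlety is making the norm-splitting step rigorous when $\mathcal{X}$, $\mathcal{Y}$, or $\mathcal{Z}$ are themselves matrix or tensor spaces, but this is handled by the convention already stated in the excerpt that such objects are unrolled into vectors before the $L_2$ norm is applied, so the product-space norm still decomposes coordinatewise. An alternative one-line route would be to invoke the two-variable Lipschitz lemma above applied to the identity-like map $(y,z)\mapsto(y,z)$, but the direct computation is cleaner and self-contained.
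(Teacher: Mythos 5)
Your proof is correct and follows essentially the same route as the paper's: both decompose the product-space norm coordinatewise and use the subadditivity of the square root (your $\sqrt{a^2+b^2}\le a+b$ is the same inequality as the paper's $\sqrt{a+b}\le\sqrt{a}+\sqrt{b}$ step), differing only in whether the Lipschitz bounds are applied before or after the splitting. Your side remark that the sharp constant is $\sqrt{L_f^2+L_g^2}$ is accurate but does not change the argument.
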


\begin{proof}
Note that
\begin{align*}
\|h(x)-h(x')\|
&\le\|(f(x)-f(x'),g(x)-g(x'))\| \\
&=\sqrt{\sum_{i=1}^{d_{\mathcal{Y}}}(f_i(x)-f_i(x'))^2+\sum_{j=1}^{d_{\mathcal{Z}}}(g_i(x)-g_i(x'))^2} \\
&\le\sqrt{\sum_{i=1}^{d_{\mathcal{Y}}}(f_i(x)-f_i(x'))^2}+\sqrt{\sum_{j=1}^{d_{\mathcal{Z}}}(g_i(x)-g_i(x'))^2} \\
&=\|f(x)-f(x')\|+\|g(x)-g(x')\| \\
&\le L_f\|x-x'\|+L_g\|x-x'\| \\
&\le (L_f+L_g)\|x-x'\| \\
&=L_h\|x-x'\|,
\end{align*}
as claimed.
\end{proof}

\begin{lemma}
Let $f:\mathcal{X}\times\mathcal{Z}\to\mathcal{Y}$ be $L_f$-Lipschitz. Then, $g(x)=\mathbb{E}_{p(z)}[f(x,z)]$ (where $p(z)$ is a distribution over $\mathcal{Z}$) is $L_g$-Lipschitz, where $L_g=L_f$.
\end{lemma}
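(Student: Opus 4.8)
The plan is to bound $\|g(x)-g(x')\|$ directly using linearity of expectation, convexity of the norm, and the joint Lipschitz bound on $f$. First I would write $g(x)-g(x')=\mathbb{E}_{p(z)}[f(x,z)-f(x',z)]$, which is valid since expectation is linear (and we assume $f(x,\cdot)$ is integrable under $p(z)$ for each fixed $x$, as is implicit throughout the paper).

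Next I would push the norm inside the expectation via Jensen's inequality, using that $\|\cdot\|$ is convex: $\|g(x)-g(x')\|=\|\mathbb{E}_{p(z)}[f(x,z)-f(x',z)]\|\le\mathbb{E}_{p(z)}\big[\|f(x,z)-f(x',z)\|\big]$. Then I would apply the hypothesis that $f$ is $L_f$-Lipschitz on $\mathcal{X}\times\mathcal{Z}$ to the pair of points $(x,z)$ and $(x',z)$, giving $\|f(x,z)-f(x',z)\|\le L_f\,\|(x,z)-(x',z)\|=L_f\,\|x-x'\|$, where the last equality holds because the $\mathcal{Z}$-components coincide and so the Euclidean distance between $(x,z)$ and $(x',z)$ is exactly $\|x-x'\|$.

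Combining these two steps, $\mathbb{E}_{p(z)}[L_f\|x-x'\|]=L_f\|x-x'\|$ since the integrand is constant in $z$, which yields $\|g(x)-g(x')\|\le L_f\|x-x'\|$, i.e.\ $g$ is $L_g$-Lipschitz with $L_g=L_f$, as claimed.

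There is essentially no serious obstacle here; the only point requiring a little care is the application of Jensen's inequality to the vector-valued norm (equivalently, moving $\|\cdot\|$ inside $\mathbb{E}$), together with the observation that slicing $f$ at a fixed $z$ preserves the Lipschitz constant because the distance in the product space reduces to the distance in $\mathcal{X}$ alone. All of this is routine, so the proof is a handful of lines.
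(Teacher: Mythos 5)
Your proposal is correct and follows essentially the same route as the paper's proof: push the norm inside the expectation (Jensen/triangle inequality for integrals) and then apply the Lipschitz bound to the pair $(x,z)$, $(x',z)$, whose product-space distance is $\|x-x'\|$. The only difference is that you spell out the intermediate justifications (linearity of expectation, the reduction of the product-space distance) that the paper leaves implicit.
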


\begin{proof}
Note that
\begin{align*}
\|g(x)-g(x')\|\le\mathbb{E}_{p(z)}\left[\|f(x,z)-f(x',z)\|\right]\le L_f\|x-x'\|=L_g\|x-x'\|,
\end{align*}
as claimed.
\end{proof}

\section{Technical Lemmas (Sub-Gaussian Random Variables)}
\label{sec:subgaussianappendix}

We define sub-Gaussian random variables, and prove a number of standard results. We also prove Lemma~\ref{lem:subgaussianbound}, a key lemma that enables us to infer a sub-Gaussian constant for a random variable bounded $Y$ in norm by a sub-Gaussian random variable $X$, i.e., $\|Y\|\le A\|X\|_1+B$ (where $\|\cdot\|$ is the $L_2$ norm). This lemma is a key step in the proofs of our upper bounds for the model-based and finite-difference policy gradient estimators. Finally, we also prove Lemma~\ref{lem:gaussianlower}, which is a key step in the proof of our lower bounds.

\begin{definition}
\rm
A random variable $X$ over $\mathbb{R}$ is \emph{$\sigma_X$-sub-Gaussian} if $\mathbb{E}[X]=0$, and for all $t\in\mathbb{R}$, we have $\mathbb{E}[e^{tX}]\le e^{\sigma_X^2t^2/2}$.
\end{definition}

\begin{lemma}
If a random variable $X$ over $\mathbb{R}$ is $\sigma_X$-sub-Gaussian, then $\mathbb{E}[|X|^2]\le\sigma_X^2$.
\end{lemma}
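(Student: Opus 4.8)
The plan is to extract $\mathbb{E}[X^2]\le\sigma_X^2$ from the moment generating function bound by comparing both sides at second order near $t=0$. Both $\mathbb{E}[e^{tX}]$ and $e^{\sigma_X^2t^2/2}$ equal $1$ at $t=0$, and both have vanishing first derivative there (the right side trivially, the left side because $\mathbb{E}[X]=0$), so the inequality $\mathbb{E}[e^{tX}]\le e^{\sigma_X^2t^2/2}$ is controlled by the quadratic terms: the coefficient of $t^2$ on the left is $\tfrac{1}{2}\mathbb{E}[X^2]$ and on the right is $\tfrac{1}{2}\sigma_X^2$. The task is to make this comparison rigorous.

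To avoid differentiating under the integral sign, I would symmetrize. Using the pointwise bound $\cosh(y)\ge 1+\tfrac{y^2}{2}$, valid for every $y\in\mathbb{R}$ (it is the truncation of the nonnegative-term Taylor series of $\cosh$), we get for every $t\in\mathbb{R}$ and every realization of $X$ that $\tfrac{1}{2}(e^{tX}+e^{-tX})\ge 1+\tfrac{t^2X^2}{2}$. Taking expectations — only monotonicity of expectation is needed — and applying the sub-Gaussian bound to each of $\mathbb{E}[e^{tX}]$ and $\mathbb{E}[e^{-tX}]$ yields
\begin{align*}
1+\frac{t^2}{2}\mathbb{E}[X^2]\le\frac{1}{2}\left(\mathbb{E}[e^{tX}]+\mathbb{E}[e^{-tX}]\right)\le e^{\sigma_X^2t^2/2}.
\end{align*}
In particular the left-hand side is finite, so $\mathbb{E}[X^2]<\infty$, and rearranging gives $\mathbb{E}[X^2]\le 2\left(e^{\sigma_X^2t^2/2}-1\right)/t^2$ for every $t\ne 0$. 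Substituting $u=t^2$ and letting $u\to 0^+$, the right-hand side tends to $2\cdot\tfrac{1}{2}\sigma_X^2=\sigma_X^2$ (the difference quotient of $u\mapsto e^{\sigma_X^2u/2}$ at $u=0$), and since the bound holds for all $t\ne0$ we conclude $\mathbb{E}[X^2]\le\sigma_X^2$.

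There is essentially no hard step here: everything reduces to one pointwise elementary inequality, monotonicity of expectation, and a one-variable limit, and no separate integrability hypothesis on $X$ is required because the displayed chain already forces $\mathbb{E}[X^2]<\infty$. The only point worth a remark is the interchange implicit in passing from the pointwise inequality to the expectation, which is trivial here; an alternative proof would instead note that sub-Gaussianity makes $M(t)=\mathbb{E}[e^{tX}]$ finite (hence real-analytic) on all of $\mathbb{R}$, so that $h(t)=e^{\sigma_X^2t^2/2}-M(t)\ge 0$ has a global minimum at $t=0$ and therefore $h''(0)=\sigma_X^2-\mathbb{E}[X^2]\ge 0$, but that route merely relocates the (minor) burden to justifying $M''(0)=\mathbb{E}[X^2]$ by dominated convergence.
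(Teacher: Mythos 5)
Your proof is correct and complete. The paper itself does not prove this lemma --- it simply cites a standard reference --- so your argument is a genuine, self-contained alternative rather than a restatement. The symmetrization step is the right move: the pointwise bound $\cosh(y)\ge 1+\tfrac{y^2}{2}$ (all Taylor terms of $\cosh$ are nonnegative), monotonicity of expectation for nonnegative integrands, the sub-Gaussian bound applied at both $t$ and $-t$, and the limit $2\bigl(e^{\sigma_X^2t^2/2}-1\bigr)/t^2\to\sigma_X^2$ as $t\to0$ together give the claim with no hidden interchange of limit and expectation. You are also right that integrability of $X^2$ comes for free, since the expectation of a nonnegative random variable is always defined and your chain bounds it by a finite quantity. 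Your remark on the alternative route via $h(t)=e^{\sigma_X^2t^2/2}-M(t)$ having a minimum at $t=0$ is accurate, including the observation that it merely shifts the technical burden to justifying $M''(0)=\mathbb{E}[X^2]$. What your approach buys over the paper's citation is transparency and minimal hypotheses; what it costs is nothing, since the argument is only a few lines. One cosmetic note: the constant $\sigma_X^2$ in this bound is what the paper later uses in the form $\mathbb{E}[\|\zeta\|]\le\sigma_\zeta\sqrt{d_S}$, so it is worth being aware that your proof delivers exactly the sharp constant the downstream arguments rely on.
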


\begin{proof}
See~\cite{stromberg1994probability}.
\end{proof}

\begin{lemma}
\label{lem:hoeffding}
(Hoeffding's inequality)
Let $x_1,...,x_n\sim p_X(x)$ be i.i.d. $\sigma_X$-sub-Gaussian random variables over $\mathbb{R}$. Then,
\begin{align*}
\text{Pr}\left[\left|\frac{1}{n}\sum_{i=1}^nx_n\right|\ge\epsilon\right]\le2e^{-\frac{n\epsilon^2}{2\sigma_X^2}}.
\end{align*}
\end{lemma}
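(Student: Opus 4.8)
The plan is to use the standard Chernoff bounding technique together with the moment generating function bound defining sub-Gaussianity. First I would handle the upper tail. Fix any $t > 0$; since $e^{t\sum_{i=1}^n x_i}$ is nonnegative, Markov's inequality gives
\[
\text{Pr}\left[\frac{1}{n}\sum_{i=1}^n x_i \ge \epsilon\right] = \text{Pr}\left[e^{t\sum_{i=1}^n x_i} \ge e^{tn\epsilon}\right] \le e^{-tn\epsilon}\,\mathbb{E}\!\left[e^{t\sum_{i=1}^n x_i}\right].
\]
Because the $x_i$ are independent, the expectation factorizes as $\prod_{i=1}^n \mathbb{E}[e^{tx_i}]$, and the $\sigma_X$-sub-Gaussian hypothesis bounds each factor by $e^{\sigma_X^2 t^2/2}$, so $\mathbb{E}[e^{t\sum_i x_i}] \le e^{n\sigma_X^2 t^2/2}$ and the upper-tail probability is at most $\exp(-tn\epsilon + n\sigma_X^2 t^2/2)$.

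Next I would optimize the free parameter $t$. The exponent is a convex quadratic in $t$ minimized at $t^\star = \epsilon/\sigma_X^2$, which is strictly positive since $\epsilon, \sigma_X > 0$, so it is a legitimate choice in the Markov step above; substituting it yields exponent $-n\epsilon^2/(2\sigma_X^2)$, hence
\[
\text{Pr}\left[\frac{1}{n}\sum_{i=1}^n x_i \ge \epsilon\right] \le e^{-\frac{n\epsilon^2}{2\sigma_X^2}}.
\]

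For the lower tail I would note that $-X$ is also $\sigma_X$-sub-Gaussian, since $\mathbb{E}[-X] = 0$ and $\mathbb{E}[e^{t(-X)}] = \mathbb{E}[e^{(-t)X}] \le e^{\sigma_X^2 t^2/2}$ for all $t$. Applying the bound just derived to the i.i.d.\ variables $-x_1, \dots, -x_n$ gives the matching one-sided bound $\text{Pr}[\frac{1}{n}\sum_i x_i \le -\epsilon] \le e^{-n\epsilon^2/(2\sigma_X^2)}$, and a union bound over the two one-sided events yields
\[
\text{Pr}\left[\left|\frac{1}{n}\sum_{i=1}^n x_i\right| \ge \epsilon\right] \le 2e^{-\frac{n\epsilon^2}{2\sigma_X^2}},
\]
as claimed. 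There is no genuine obstacle in this argument; the only point requiring care is verifying that the optimizing $t^\star$ is positive, so that the initial exponential Markov step is valid, which is immediate here from $\epsilon,\sigma_X>0$.
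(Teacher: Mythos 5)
Your proof is correct: the Chernoff--Cram\'er argument (Markov's inequality applied to $e^{t\sum_i x_i}$, factorization by independence, the sub-Gaussian MGF bound, optimization at $t^\star=\epsilon/\sigma_X^2$, and a union bound over the two tails) is exactly the standard derivation. The paper does not write out a proof at all---it simply cites Proposition 2.1 of Wainwright---so your argument supplies precisely the standard proof that the cited reference contains, with no gaps.
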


\begin{proof}
See Proposition 2.1 of~\cite{wainwright2019high}.
\end{proof}

\begin{definition}
\rm
A random vector $X$ over $\mathbb{R}^d$ is $\sigma_X$-sub-Gaussian if each $X_i$ is $\sigma_X$-sub-Gaussian.
\end{definition}

\begin{lemma}
If a random vector $X$ over $\mathbb{R}^d$ is $\sigma_X$-sub-Gaussian, then $\mathbb{E}[\|X\|]\le\sigma_X\sqrt{d}$.
\end{lemma}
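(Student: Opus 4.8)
The plan is to reduce the claim to the one-dimensional moment bound already established in the excerpt (the lemma stating that a $\sigma_X$-sub-Gaussian scalar random variable satisfies $\mathbb{E}[|X|^2]\le\sigma_X^2$), combined with Jensen's inequality. First I would apply Jensen's inequality to the concave function $\sqrt{\cdot}$ to obtain
\begin{align*}
\mathbb{E}[\|X\|]=\mathbb{E}\!\left[\sqrt{\|X\|^2}\right]\le\sqrt{\mathbb{E}[\|X\|^2]}.
\end{align*}
Next I would expand the squared Euclidean norm coordinatewise and use linearity of expectation,
\begin{align*}
\mathbb{E}[\|X\|^2]=\mathbb{E}\!\left[\sum_{i=1}^d X_i^2\right]=\sum_{i=1}^d\mathbb{E}[X_i^2].
\end{align*}
By the definition of a sub-Gaussian random vector, each coordinate $X_i$ is $\sigma_X$-sub-Gaussian, so the scalar moment lemma gives $\mathbb{E}[X_i^2]\le\sigma_X^2$ for every $i\in[d]$. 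Summing yields $\mathbb{E}[\|X\|^2]\le d\sigma_X^2$, and substituting into the Jensen bound gives $\mathbb{E}[\|X\|]\le\sigma_X\sqrt{d}$, as claimed.

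There is essentially no obstacle here: the only mild point to note is that $\mathbb{E}[X_i^2]<\infty$ for each $i$ (guaranteed by the scalar lemma), which ensures $\mathbb{E}[\|X\|^2]<\infty$ and hence that the application of Jensen's inequality is valid. No independence of the coordinates is needed, since linearity of expectation suffices for the expansion of $\mathbb{E}[\|X\|^2]$. The whole argument is two short displayed lines plus the invocation of the previously stated scalar lemma.
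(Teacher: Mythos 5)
Your proposal is correct and follows exactly the paper's own argument: Jensen's inequality applied to the concave square root, linearity of expectation to reduce to the coordinatewise second moments, and the scalar bound $\mathbb{E}[X_i^2]\le\sigma_X^2$. The paper merely compresses these steps into a single displayed chain of inequalities.
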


\begin{proof}
Note that
\begin{align*}
\mathbb{E}[\|X\|]=\mathbb{E}\left[\sqrt{\sum_{i=1}^d\|X_i\|^2}\right]\le\sqrt{\sum_{i=1}^d\mathbb{E}[\|X_i\|^2]}\le\sigma_X\sqrt{d},
\end{align*}
where the first inequality follows from Jensen's inequality.
\end{proof}

\begin{lemma}
\label{lem:hoeffdingvector}
Let $X$ be random vector over $\mathbb{R}^d$ with mean $\mu_X=\mathbb{E}[X]$, such that $X-\mu_X$ is $\sigma_X$-sub-Gaussian. Then, given $\epsilon,\delta\in\mathbb{R}_+$, the sample complexity of $X$ satisfies
\begin{align*}
n_X(\epsilon,\delta)\le\frac{2\sigma_X^2\log(2d/\delta)}{\epsilon^2},
\end{align*}
i.e., given $x_1,...,x_n\sim p_X(x)$ i.i.d. samples of $X$ with empirical mean $x=n^{-1}\sum_{i=1}^nx_n$, then $\text{Pr}[\|x-\mu_X\|\ge\epsilon]\le\delta$.
\end{lemma}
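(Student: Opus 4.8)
The plan is to decompose the vector deviation into its $d$ coordinates and apply Hoeffding's inequality (Lemma~\ref{lem:hoeffding}) coordinatewise together with a union bound.

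Concretely, I would first fix a coordinate $i \in \{1,\dots,d\}$ and note that the hypothesis ``$X-\mu_X$ is $\sigma_X$-sub-Gaussian'' means, by the definition of a sub-Gaussian random vector, that $(X-\mu_X)_i$ is a mean-zero $\sigma_X$-sub-Gaussian scalar random variable. Hence the $i$-th coordinate of the centered empirical mean, $(\bar x - \mu_X)_i = n^{-1}\sum_{k=1}^n\big((x_k)_i - (\mu_X)_i\big)$, is an average of $n$ i.i.d.\ mean-zero $\sigma_X$-sub-Gaussian variables, so Lemma~\ref{lem:hoeffding} yields $\text{Pr}\big[|(\bar x - \mu_X)_i| \ge t\big] \le 2 e^{-nt^2/(2\sigma_X^2)}$ for all $t>0$.

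Next I would union-bound over the $d$ coordinates with threshold $t=\epsilon$: the event $\{\|\bar x - \mu_X\| \ge \epsilon\}$ is covered by $\bigcup_{i=1}^d \{|(\bar x - \mu_X)_i| \ge \epsilon\}$, so $\text{Pr}[\|\bar x - \mu_X\| \ge \epsilon] \le 2d\, e^{-n\epsilon^2/(2\sigma_X^2)}$. Finally I would require the right-hand side to be at most $\delta$; rearranging $2d\, e^{-n\epsilon^2/(2\sigma_X^2)} \le \delta$ gives $n \ge 2\sigma_X^2\log(2d/\delta)/\epsilon^2$, which is exactly the asserted upper bound on $n_X(\epsilon,\delta)$, and the final sentence of the statement is then just a restatement of what has been verified.

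The only step needing care is the union bound — one has to be consistent about which norm enters the definition of sample complexity and about the threshold fed into Hoeffding's inequality; everything else is a one-line consequence of Lemma~\ref{lem:hoeffding} and the definition of a sub-Gaussian vector, so I do not anticipate a genuine obstacle. (If one insists on the Euclidean norm throughout, the same argument goes through with per-coordinate threshold $\epsilon/\sqrt d$, at the cost of an extra factor of $d$ in $n$; the gap between the two versions is the only place where the constant could be tightened.)
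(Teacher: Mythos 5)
Your approach is the same as the paper's: coordinatewise Hoeffding (Lemma~\ref{lem:hoeffding}) plus a union bound over the $d$ coordinates. However, the gap you flag in your own parenthetical is a real one, and it sits exactly where the paper's proof is also sloppy. Your main argument covers the event $\{\|\bar x-\mu_X\|\ge\epsilon\}$ by $\bigcup_{i=1}^d\{|(\bar x-\mu_X)_i|\ge\epsilon\}$, but this covering is only valid for the $L_\infty$ norm: in the Euclidean norm used in the statement (and in the paper's definition of sample complexity), one can have $\|v\|\ge\epsilon$ with every coordinate strictly below $\epsilon$ (e.g.\ all coordinates equal to $\epsilon/\sqrt{d}$). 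So as written, your derivation proves the $L_\infty$ version of the claim; for the $L_2$ version you need the per-coordinate threshold $\epsilon/\sqrt{d}$, which inflates the bound to $n\ge 2\sigma_X^2 d\log(2d/\delta)/\epsilon^2$.

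For comparison, the paper's proof reduces via $\|v\|\le\|v\|_1$ and uses the per-coordinate threshold $\epsilon/d$, which honestly tracked would give an extra factor of $d^2$; it then writes the final exponential as $e^{-nt^2/(2\sigma_X^2)}$ with $t$ silently identified with $\epsilon$, which recovers the stated constant only under the $L_\infty$ reading. In other words, the constant $2\sigma_X^2\log(2d/\delta)/\epsilon^2$ in the lemma is attainable exactly by your main argument with the sup norm, and neither your main derivation nor the paper's correctly yields it for the Euclidean norm. Your closing remark identifying $\epsilon/\sqrt{d}$ as the right per-coordinate threshold (with the resulting factor of $d$) is the correct fix, and is in fact tighter than what the paper's own $L_1$ route would give. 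Since the lemma is only used inside $\tilde{O}(\cdot)$ bounds downstream, the extra polynomial factor in $d$ does not affect the paper's main theorems, but it should be stated.
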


\begin{proof}
Note that
\begin{align*}
\text{Pr}[\|x-\mu_X\|\ge\epsilon]\le\text{Pr}[\|x-\mu_X\|_1\ge\epsilon]\le\sum_{i=1}^d\text{Pr}\left[|x_i-\mu_{X,i}|\ge\frac{\epsilon}{d}\right]\le2de^{-\frac{nt^2}{2\sigma_X^2}}\le\delta,
\end{align*}
as claimed.
\end{proof}

\begin{lemma}
\label{lem:subgaussianbound}
Let $X$ be a $\sigma_X$-sub-Gaussian random vector over $\mathbb{R}^d$, and let $Y$ be a random vector over $\mathbb{R}^{d'}$ satisfying
\begin{align*}
\|Y\|\le A\|X\|_1+B,
\end{align*}
where $A,B\in\mathbb{R}_+$. Then $Y$ is $\sigma_Y$-sub-Gaussian, where
\begin{align*}
\sigma_Y=\max\{10A\sigma_Xd\log d,5B\}.
\end{align*}
\end{lemma}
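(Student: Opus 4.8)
The plan is to reduce to a one‑dimensional statement and then combine a union‑bound tail estimate for $\|X\|_1$ with the standard passage from tail bounds to a sub‑Gaussian moment generating function bound. By the definition of a sub‑Gaussian random vector it suffices to bound the sub‑Gaussian parameter of each coordinate $Y_i$, and since $|Y_i|\le\|Y\|\le A\|X\|_1+B$ the hypothesis descends verbatim to $Y_i$. (Strictly, the stated conclusion presumes $\mathbb{E}[Y]=0$, which holds in every application of this lemma; in general one first passes to $Y-\mathbb{E}[Y]$, and since $\|Y-\mathbb{E}[Y]\|\le\|Y\|+\mathbb{E}\|Y\|\le A\|X\|_1+A\,\mathbb{E}\|X\|_1+2B\le A\|X\|_1+(2B+Ad\sigma_X)$ — using $\mathbb{E}\|X\|_1\le d\sigma_X$ — this only enlarges the additive constant $B$ by an amount the final constants are chosen with enough slack to absorb.) So I may assume $\mathbb{E}[Y_i]=0$ and work with a scalar $Z=Y_i$ satisfying $|Z|\le A\|X\|_1+B$.

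First I would establish the tail of $\|X\|_1$. Applying Hoeffding's inequality (Lemma~\ref{lem:hoeffding}) with $n=1$ to each $\sigma_X$‑sub‑Gaussian coordinate gives $\Pr[|X_j|\ge s]\le 2e^{-s^2/2\sigma_X^2}$, and since $\|X\|_1\ge s$ forces $|X_j|\ge s/d$ for some $j$, a union bound gives $\Pr[\|X\|_1\ge s]\le 2d\,e^{-s^2/2d^2\sigma_X^2}$. Transferring through $|Z|\le A\|X\|_1+B$, for $u\ge B$ we obtain $\Pr[|Z|\ge u]\le 2d\,e^{-(u-B)^2/2\nu^2}$ with $\nu:=Ad\sigma_X$, and trivially $\Pr[|Z|\ge u]\le 1$ for every $u$. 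I would then simplify: once $u-B\ge\nu\sqrt{2\log 2d}$ the prefactor $2d$ is swallowed into the exponential, and splitting at $u=2B$ disposes of the additive shift, so the right‑hand side is dominated by $e^{1/2}\exp(-u^2/2K^2)$ for a $K$ of order $\nu\sqrt{\log d}+B$; bounding $\sqrt{\log d}$ by a constant multiple of $\log d$ and collecting constants yields a clean sub‑Gaussian tail with parameter at most $\max\{10A\sigma_X d\log d,\,5B\}=:\sigma_Y$. (The spurious $\log d$ is exactly the cost of the $2d$ prefactor in the union bound; a tighter, Hölder‑based argument removes it, but the crude bound suffices for the stated claim.)

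Finally, to reach the form in the definition I would integrate the tail: $\mathbb{E}[|Z|^k]=\int_0^\infty k u^{k-1}\Pr[|Z|\ge u]\,du\lesssim(\sigma_Y\sqrt k)^k$ for every $k\ge 2$, and then use the centeredness of $Z$ together with the Taylor expansion of $e^{tZ}$ and $k!\ge 2^{k/2}(k/2)!$ to conclude $\mathbb{E}[e^{tZ}]\le e^{\sigma_Y^2t^2/2}$ — the routine moments‑to‑MGF argument for sub‑Gaussian variables. The hard part is the middle step: carefully converting the union‑bounded tail $2d\,e^{-(u-B)^2/2\nu^2}$ — which carries both the extraneous prefactor $2d$ and the additive shift $B$ — into a genuine sub‑Gaussian constant while keeping the explicit constants $10$ and $5$, and checking the small‑$u$ regime $u<B$ where only the trivial bound is available. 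The tail estimate for $\|X\|_1$ and the concluding moment computation are standard.
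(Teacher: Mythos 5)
Your proposal is correct and follows essentially the same route as the paper's proof: a coordinatewise reduction, a union bound giving $\Pr[\|X\|_1\ge s]\le 2d\,e^{-s^2/2d^2\sigma_X^2}$, a three-case analysis to absorb the $2d$ prefactor and the additive shift $B$ into the exponent (the paper splits at $t\ge\max\{4A\sigma_Xd\sqrt{\log d},2B\}$ exactly as you do), and then the standard tail-to-MGF conversion, which the paper outsources to a cited result rather than re-deriving via moments as you sketch. Your parenthetical handling of the centering of $Y$ is a point the paper's proof silently skips, so that remark is a small but genuine improvement rather than a divergence.
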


\begin{proof}
We first prove that $|Y_i|$ is bounded for each $i\in[d]$, and then use this fact to prove that $Y_i$ is sub-Gaussian. In particular, we claim that for any $i\in[d]$ and any $t\in\mathbb{R}_+$, we have
\begin{align*}
\text{Pr}[|Y_i|\ge t]\le2e^{-\frac{t^2}{2\tilde{\sigma}_Y^2}},
\end{align*}
where
\begin{align*}
\tilde{\sigma}_Y=\max\left\{4A\sigma_Xd\sqrt{\log d},2B\right\}.
\end{align*}
To this end, note that by Theorem 5.1 in~\cite{lattimore2018bandit}, for any $i\in[d]$ and any $t\in\mathbb{R}_+$, we have
\begin{align*}
\text{Pr}[|X_i|\ge t]\le2e^{-\frac{t^2}{2\sigma_X^2}}.
\end{align*}
Now, note that
\begin{align*}
\text{Pr}[|Y_i|\ge t]&\le\text{Pr}[\|Y\|\ge t]\le\text{Pr}\left[\|X\|_1\ge\frac{t-B}{A}\right]\le\sum_{i=1}^d\text{Pr}\left[|X_i|\ge\frac{t-B}{Ad}\right]\le2de^{-\frac{(t-B)^2}{(Ad\sigma_X\sqrt{2})^2}}.
\end{align*}
We consider three cases. First, suppose that $t\ge\max\{4A\sigma_Xd\sqrt{\log d},2B\}$. Then, $(t-B)^2\ge(t/2)^2$, so
\begin{align*}
\text{Pr}[|Y_i|\ge t]\le2de^{-\frac{t^2}{(Ad\sigma_X\sqrt{8})^2}}=2e^{-\frac{t^2-(Ad\sigma_X\sqrt{8})^2\log d}{(Ad\sigma_X\sqrt{8})^2}}.
\end{align*}
Furthermore, $t^2-(Ad\sigma_X\sqrt{8})^2\log d\ge(t^2/2)$, so
\begin{align*}
\text{Pr}[|Y_i|\ge t]\le2e^{-\frac{t^2-(Ad\sigma_X\sqrt{8})^2\log d}{(Ad\sigma_X\sqrt{8})^2}}\le2e^{-\frac{t^2}{2(Ad\sigma_X\sqrt{8})^2}}\le2e^{-\frac{t^2}{2\tilde{\sigma}_Y^2}}.
\end{align*}
Second, if $t\le2B$, then
\begin{align*}
2e^{-\frac{t^2}{2\tilde{\sigma}_Y^2}}\ge2e^{-\frac{(2B)^2}{2\tilde{\sigma}_Y^2}}=2e^{-1/2}>1,
\end{align*}
so
\begin{align*}
\text{Pr}[|Y_i|\ge t]\le1\le2e^{-\frac{t^2}{2\tilde{\sigma}_Y^2}}.
\end{align*}
Third, if $t\le4A\sigma_Xd\sqrt{\log d}$, then
\begin{align*}
2e^{-\frac{t^2}{2\tilde{\sigma}_Y^2}}\ge2e^{-\frac{(4A\sigma_Xd\sqrt{\log d})^2}{2\tilde{\sigma}_Y^2}}\ge2e^{-1/2}>1,
\end{align*}
so
\begin{align*}
\text{Pr}[|Y_i|\ge t]\le1\le2e^{-\frac{t^2}{2\tilde{\sigma}_Y^2}}.
\end{align*}
As a consequence, by Note 5.4.2 in~\cite{lattimore2018bandit}, $Y_i$ is $\tilde{\sigma}_Y\sqrt{5}$-sub-Gaussian. Note that $\sigma_Y\ge\tilde{\sigma}_Y\sqrt{5}$, so the theorem follows.
\end{proof}

\begin{lemma}
\label{lem:gaussianlower}
Given $\sigma\in\mathbb{R}_+$,
\begin{align*}
\text{Pr}_{x\sim\mathcal{N}(0,\sigma^2)}[|x|\ge t]\ge\sqrt{\frac{e}{2\pi}}\cdot e^{-t^2/\sigma^2}.
\end{align*}
\end{lemma}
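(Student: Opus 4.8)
The plan is to reduce to the standard normal and then run a short one‑variable monotonicity argument. First, substituting $y=x/\sigma$ shows the claim for general $\sigma$ is equivalent to the case $\sigma=1$, since $\text{Pr}_{x\sim\mathcal{N}(0,\sigma^2)}[|x|\ge t]=\text{Pr}_{y\sim\mathcal{N}(0,1)}[|y|\ge t/\sigma]$ and $e^{-t^2/\sigma^2}=e^{-(t/\sigma)^2}$. So it suffices to prove that $g(s):=\text{Pr}_{y\sim\mathcal{N}(0,1)}[|y|\ge s]\ge\sqrt{e/(2\pi)}\,e^{-s^2}$ for all $s\in\mathbb{R}$. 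For $s\le0$ this is immediate, as the left side is $1$ while the right side is at most $\sqrt{e/(2\pi)}<1$ (because $e/(2\pi)<1$); so assume $s\ge0$.

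Next I would set $h(s)=g(s)-\sqrt{e/(2\pi)}\,e^{-s^2}$ on $[0,\infty)$ and show $h\ge0$ by showing $h$ is non‑increasing with limit $0$ at infinity. Since the density of $|y|$ at $s\ge0$ equals $\sqrt{2/\pi}\,e^{-s^2/2}$, we have
\[
h'(s)=-\sqrt{2/\pi}\,e^{-s^2/2}+2s\sqrt{e/(2\pi)}\,e^{-s^2}.
\]
Dividing by $e^{-s^2}\sqrt{2/\pi}>0$ and noting $\sqrt{(e/(2\pi))/(2/\pi)}=\sqrt{e}/2$, the inequality $h'(s)\le0$ becomes exactly $s\le e^{(s^2-1)/2}$. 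This holds for all real $s$: writing $\psi(s)=e^{(s^2-1)/2}-s$, we get $\psi''(s)=(1+s^2)e^{(s^2-1)/2}>0$, so $\psi$ is strictly convex, and because $\psi(1)=0$ and $\psi'(1)=0$ the point $s=1$ is its global minimum, giving $\psi\ge0$ everywhere. Hence $h'\le0$ on $[0,\infty)$, so $h$ is non‑increasing there; since $g(u)\to0$ and $\sqrt{e/(2\pi)}\,e^{-u^2}\to0$ as $u\to\infty$, we conclude $h(s)\ge\lim_{u\to\infty}h(u)=0$, which is the claim.

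I expect no serious obstacle. The only point needing a little care is the algebraic simplification showing $h'(s)\le0$ is equivalent to $s\le e^{(s^2-1)/2}$, and then the observation that $s=1$ is simultaneously a zero and a critical point of the convex function $\psi$, which pins down its sign; everything else is routine bookkeeping.
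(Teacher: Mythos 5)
Your proof is correct, but it takes a genuinely different route from the paper. The paper's proof is essentially a one-line citation: it invokes Theorem~2 of Chang et al.\ (2011) for the standard-normal tail bound $1-\Phi(t)\ge\tfrac{1}{2}\sqrt{e/(2\pi)}\,e^{-t^2}$ and then rescales by $\sigma$ exactly as you do in your first step. You instead reprove that underlying tail bound from scratch: you compare $g(s)=\mathrm{Pr}[|y|\ge s]$ with $\sqrt{e/(2\pi)}\,e^{-s^2}$ by showing their difference $h$ is non-increasing on $[0,\infty)$ with limit $0$, which after the (correct) algebra reduces to the elementary inequality $s\le e^{(s^2-1)/2}$, pinned down by convexity of $\psi(s)=e^{(s^2-1)/2}-s$ with a double zero at $s=1$. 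I checked the details --- the density of $|y|$ is indeed $\sqrt{2/\pi}\,e^{-s^2/2}$, the ratio $\sqrt{(e/(2\pi))/(2/\pi)}=\sqrt{e}/2$ is right, and the monotone-comparison logic ($h$ non-increasing with $h(\infty)=0$ implies $h\ge0$) is sound. What your approach buys is a self-contained argument that also explains why $\sqrt{e/(2\pi)}$ is the optimal constant for a bound of the form $\kappa e^{-t^2}$ (the two curves are tangent at $s=1$, where $h'(1)=0$); what the paper's approach buys is brevity at the cost of an external dependency.
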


\begin{proof}
By Theorem 2 in~\cite{chang2011chernoff}, we have
\begin{align*}
1-\Phi(t)\ge\frac{1}{2}\sqrt{\frac{e}{2\pi}}\cdot e^{-t^2},
\end{align*}
where $\Phi(t)$ is the cumulative distribution function of $\mathcal{N}(0,1)$. Thus, for $\epsilon\in\mathbb{R}_+$, we have
\begin{align*}
\text{Pr}_{x\sim\mathcal{N}(0,\sigma^2)}[|x|\ge t]=\text{Pr}_{z\sim\mathcal{N}(0,1)}\left[|z|\ge\frac{t}{\sigma}\right]=2\left(1-\Phi\left(\frac{t}{\sigma}\right)\right)\ge\sqrt{\frac{e}{2\pi}}\cdot e^{-t^2/\sigma^2}\ge\delta.
\end{align*}
The claim follows.
\end{proof}

\section{Technical Lemmas (Sub-Exponential Random Variables)}
\label{sec:subexponentialappendix}

We define sub-exponential random variables, and prove a number of standard results. Additionally, we prove Lemma~\ref{lem:subexponentialbound} (an analog of Lemma~\ref{lem:subgaussianbound}), a key lemma that enables us to infer a sub-exponential constant for a random variable bounded $Y$ in norm by a sub-exponential random variable $X$, i.e., $\|Y\|\le A\|X\|_1+B$ (where $\|\cdot\|$ is the $L_2$ norm). This lemma is a key step in the proof of our upper bound in Theorem~\ref{thm:pgupper}. Finally, we also prove Lemma~\ref{lem:chisquaredlower}, which is a key step in the proof of our lower bound in Theorem~\ref{thm:pgupper}.

\begin{definition}
\rm
A random variable $X$ over $\mathbb{R}$ is \emph{$(\tau_X,b_X)$-sub-exponential} if $\mathbb{E}[X]=0$, and for all $t\in\mathbb{R}$ satisfying $|t|\le b_X^{-1}$, we have $\mathbb{E}[e^{tX}]\le e^{\tau_X^2t^2/2}$.
\end{definition}

\begin{lemma}
Let $x_1,...,x_n\sim p_X(x)$ be i.i.d. $(\tau_X,b_X)$-sub-exponential random variables over $\mathbb{R}$. Then, we have
\begin{align*}
\text{Pr}\left[\left|\frac{1}{n}\sum_{i=1}^nx_n\right|\ge\epsilon\right]\le\begin{cases}
2e^{-\frac{n\epsilon^2}{2\tau_X^2}}&\text{if}~|\epsilon|\le\tau_X^2/b_X \\
2e^{-\frac{n\epsilon}{2b_X}}&\text{otherwise}.
\end{cases}
\end{align*}
\end{lemma}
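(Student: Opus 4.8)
The plan is to apply the Chernoff (exponential Markov) method together with the moment generating function bound built into the definition of $(\tau_X,b_X)$-sub-exponentiality, and then to optimize over the free exponential parameter, handling the constraint $|t|\le b_X^{-1}$ via a two-case analysis that reproduces exactly the two branches in the statement.

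First I would reduce to a one-sided tail. Write $S_n=\frac1n\sum_{i=1}^n x_i$; a union bound gives $\text{Pr}[|S_n|\ge\epsilon]\le\text{Pr}[S_n\ge\epsilon]+\text{Pr}[-S_n\ge\epsilon]$. Since $-x_i$ is also $(\tau_X,b_X)$-sub-exponential (the defining inequality $\mathbb{E}[e^{t(-x_i)}]=\mathbb{E}[e^{(-t)x_i}]\le e^{\tau_X^2t^2/2}$ holds on the same range $|t|\le b_X^{-1}$), it suffices to bound $\text{Pr}[S_n\ge\epsilon]$ and double. For any $t\in(0,b_X^{-1}]$, Markov applied to $e^{t\sum_i x_i}$ together with independence and the sub-exponential MGF bound yields
\[
\text{Pr}[S_n\ge\epsilon]=\text{Pr}\Big[\sum_i x_i\ge n\epsilon\Big]\le e^{-tn\epsilon}\prod_{i=1}^n\mathbb{E}[e^{tx_i}]\le e^{-n(t\epsilon-\tau_X^2 t^2/2)}.
\]
The exponent $g(t)=t\epsilon-\tau_X^2t^2/2$ is maximized at the unconstrained optimum $t^\star=\epsilon/\tau_X^2$, with $g(t^\star)=\epsilon^2/(2\tau_X^2)$.

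Then I would split into two cases depending on whether $t^\star$ is feasible. If $\epsilon\le\tau_X^2/b_X$, then $t^\star\le b_X^{-1}$, so taking $t=t^\star$ gives $\text{Pr}[S_n\ge\epsilon]\le e^{-n\epsilon^2/(2\tau_X^2)}$. If instead $\epsilon>\tau_X^2/b_X$, then $g$ is increasing on $(0,t^\star)$ and hence on $(0,b_X^{-1}]$, so the best feasible choice is $t=b_X^{-1}$, giving exponent $g(b_X^{-1})=\epsilon/b_X-\tau_X^2/(2b_X^2)$; using $\tau_X^2/b_X<\epsilon$ we get $\tau_X^2/(2b_X^2)<\epsilon/(2b_X)$, hence $g(b_X^{-1})>\epsilon/(2b_X)$ and $\text{Pr}[S_n\ge\epsilon]\le e^{-n\epsilon/(2b_X)}$. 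Doubling each bound for the two-sided probability yields the claimed inequality.

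There is essentially no serious obstacle here—this is the standard Bernstein inequality for sub-exponential variables. The only points requiring a little care are the second case, where clamping $t$ to the boundary $b_X^{-1}$ must still be shown to give a clean linear-in-$\epsilon$ exponent (this is where the factor $\tfrac12$ and the threshold $\tau_X^2/b_X$ enter), and the preliminary observation that $(\tau_X,b_X)$-sub-exponentiality is symmetric under negation so that the one-sided estimate transfers to both tails.
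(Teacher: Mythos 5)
Your proof is correct and follows the standard Chernoff--Bernstein argument for sub-exponential random variables: bound the moment generating function, optimize the exponent $t\epsilon-\tau_X^2t^2/2$ over the feasible range $t\in(0,b_X^{-1}]$, and split into cases according to whether the unconstrained optimizer $t^\star=\epsilon/\tau_X^2$ is feasible. The paper does not prove this lemma itself---it simply cites equation (2.20) of Wainwright (2019)---and the proof given there is exactly the argument you wrote out, so there is nothing to flag.
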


\begin{proof}
See (2.20) in~\cite{wainwright2019high}.
\end{proof}

\begin{definition}
\rm
A random vector $X$ over $\mathbb{R}^d$ is $(\tau_X,b_X)$-sub-exponential if each $X_i$ is $(\tau_X,b_X)$-sub-exponential.
\end{definition}

\begin{lemma}
\label{lem:subexponentialvector}
Let $X$ be a random vector over $\mathbb{R}^d$ with mean $\mu_X=\mathbb{E}[X]$, such that $X-\mu_X$ is $(\tau_X,b_X)$-sub-exponential. Then, given $\epsilon,\delta\in\mathbb{R}_+$ such that $\epsilon\le d\tau_X^2/b_X$, the sample complexity of $X$ satisfies
\begin{align*}
n_X(\epsilon,\delta)=\frac{2\tau_X^2\log(2d/\delta)}{\epsilon^2},
\end{align*}
i.e., given $x_1,...,x_n\sim p_X(x)$ i.i.d. samples of $X$ with empirical mean $x=n^{-1}\sum_{i=1}^nx_n$, then $\text{Pr}[\|x-\mu_X\|\ge\epsilon]\le\delta$.
\end{lemma}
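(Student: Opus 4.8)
The plan is to transcribe the proof of Lemma~\ref{lem:hoeffdingvector}, substituting the sub-exponential (Bernstein-type) tail bound stated immediately above for Hoeffding's inequality. The one new wrinkle is that this Bernstein bound is piecewise, and we must check that the hypothesis $\epsilon\le d\tau_X^2/b_X$ places us in its Gaussian-type (quadratic) regime rather than the slower exponential one.

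First I would pass from the $L_2$ norm to coordinates exactly as in Lemma~\ref{lem:hoeffdingvector}: using $\|x-\mu_X\|\le\|x-\mu_X\|_1$ together with a union bound, $\text{Pr}[\|x-\mu_X\|\ge\epsilon]\le\text{Pr}[\|x-\mu_X\|_1\ge\epsilon]\le\sum_{i=1}^d\text{Pr}[|x_i-\mu_{X,i}|\ge\epsilon/d]$, since $\|x-\mu_X\|_1\ge\epsilon$ forces at least one coordinate to exceed $\epsilon/d$. Here $x$ denotes the empirical mean of the $n$ i.i.d.\ copies $x^{(1)},\dots,x^{(n)}$ of $X$, so each $x_i-\mu_{X,i}$ is an average of $n$ i.i.d.\ $(\tau_X,b_X)$-sub-exponential scalars, namely the $i$-th coordinates of $X-\mu_X$.

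Next I would apply the preceding scalar Bernstein-type lemma to each coordinate average with deviation $\epsilon/d$. The hypothesis $\epsilon\le d\tau_X^2/b_X$ is exactly $\epsilon/d\le\tau_X^2/b_X$, so we land in the first case of that lemma and obtain a clean Gaussian-type tail $\text{Pr}[|x_i-\mu_{X,i}|\ge\epsilon/d]\le 2e^{-n\epsilon^2/(2\tau_X^2)}$, following the same coordinatewise accounting used in Lemma~\ref{lem:hoeffdingvector}. Summing over the $d$ coordinates and requiring $2d\,e^{-n\epsilon^2/(2\tau_X^2)}\le\delta$ gives $n\ge 2\tau_X^2\log(2d/\delta)/\epsilon^2$, i.e., the claimed value of $n_X(\epsilon,\delta)$.

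I expect the only real point of care — as opposed to routine transcription — to be this regime check: one must confirm that the assumed range of $\epsilon$ keeps each per-coordinate deviation $\epsilon/d$ below $\tau_X^2/b_X$, so that the sub-Gaussian branch of the Bernstein bound (and not the heavier exponential branch) applies; this is precisely why the lemma carries the hypothesis $\epsilon\le d\tau_X^2/b_X$. Everything downstream is identical in form to the sub-Gaussian case already handled in Lemma~\ref{lem:hoeffdingvector}.
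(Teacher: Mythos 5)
Your proposal matches the paper's proof essentially verbatim: bound the $L_2$ norm by the $L_1$ norm, union bound over the $d$ coordinates, and apply the scalar sub-exponential tail bound to each coordinate average, with the hypothesis $\epsilon\le d\tau_X^2/b_X$ serving exactly to place the per-coordinate deviation $\epsilon/d$ in the quadratic (sub-Gaussian) branch rather than the linear one. Your explicit regime check is the only substantive point, and you handle it correctly; the rest is the same transcription of Lemma~\ref{lem:hoeffdingvector} that the paper performs.
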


\begin{proof}
Note that
\begin{align*}
\text{Pr}[\|x-\mu_X\|\ge\epsilon]\le\text{Pr}[\|x-\mu_X\|_1\ge\epsilon]\le\sum_{i=1}^d\text{Pr}\left[|x_i-\mu_{X,i}|\ge\frac{\epsilon}{d}\right]\le2de^{-\frac{nt^2}{2\tau_X^2}}\le\delta,
\end{align*}
as claimed.
\end{proof}

\begin{lemma}
\label{lem:subgaussiansquarebound}
Let $X$ be $\sigma_X$-sub-Gaussian. Then, $X^2$ is $(\tau_X,b_X)$-sub-exponential, where $\tau_X,b_X=O(\sigma_X^2)$.
\end{lemma}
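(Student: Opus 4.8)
The plan is the following. Strictly speaking, the definition of a $(\tau_X,b_X)$-sub-exponential random variable in this section requires mean zero, so the statement should be read as the assertion that the centered variable $X^2-\mathbb{E}[X^2]$ is $(\tau_X,b_X)$-sub-exponential with $\tau_X,b_X=O(\sigma_X^2)$. The strategy is to push the sub-Gaussian tail bound on $X$ down to an exponential tail bound on $X^2$, convert that into a factorial (Bernstein-type) bound on the moments of $X^2$, and then appeal to the standard equivalence between such a moment condition and the sub-exponential moment generating function bound.

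Concretely I would proceed in four steps. First, from $\sigma_X$-sub-Gaussianity of $X$, the bound $\Pr[|X|\ge u]\le 2e^{-u^2/(2\sigma_X^2)}$ holds for all $u\ge 0$ — this is Theorem~5.1 of~\cite{lattimore2018bandit}, already invoked in the proof of Lemma~\ref{lem:subgaussianbound}. Substituting $u=\sqrt{v}$ gives $\Pr[X^2\ge v]\le 2e^{-v/(2\sigma_X^2)}$ for all $v\ge 0$, so $X^2$ has exponential tails at scale $2\sigma_X^2$. Second, integrating this tail bound gives, for every integer $k\ge 1$,
\begin{align*}
\mathbb{E}[(X^2)^k]=\mathbb{E}[|X|^{2k}]=\int_0^\infty 2k\,u^{2k-1}\Pr[|X|\ge u]\,\diff u\le (c\,\sigma_X^2)^k\,k!
\end{align*}
for an absolute constant $c$ (a routine $\Gamma$-function computation), and in particular $\mathbb{E}[X^2]=O(\sigma_X^2)$. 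Third, since $X^2\ge 0$, Jensen gives $(\mathbb{E}[X^2])^k\le\mathbb{E}[(X^2)^k]$, hence by convexity $\mathbb{E}\big[|X^2-\mathbb{E}[X^2]|^k\big]\le 2^k\,\mathbb{E}[(X^2)^k]\le (C\sigma_X^2)^k\,k!$ for an absolute constant $C$; that is, the centered variable satisfies a Bernstein-type moment condition with parameter $O(\sigma_X^2)$. Fourth, a standard lemma (the equivalence of sub-exponential conditions; see e.g.~\cite{wainwright2019high}) converts this moment condition into $\mathbb{E}[e^{t(X^2-\mathbb{E}[X^2])}]\le e^{\tau_X^2 t^2/2}$ for all $|t|\le b_X^{-1}$ with $\tau_X,b_X=O(\sigma_X^2)$, which is exactly sub-exponentiality in the sense used here.

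The only subtlety is the centering: one cannot treat $\mathbb{E}[e^{tX^2}]$ directly as a mean-zero MGF, and the crude estimate $e^{t(X^2-\mu)}\le e^{|t|\mu}$ (valid for $t<0$) fails to reproduce the quadratic behaviour near $t=0$ that the definition demands, so passing through the factorial moment bound is the natural way to avoid it. The main quantitative point to check is that the parameters genuinely come out as $O(\sigma_X^2)$ with no hidden dimension or logarithmic factor — this is pinned down by the second step, where the Gaussian-type tail of $X$ forces the moments of $X^2$ to grow like $(c\sigma_X^2)^k k!$. An alternative, equally workable route estimates $\mathbb{E}[e^{tX^2}]=1+t\int_0^\infty e^{tv}\Pr[X^2\ge v]\,\diff v$ directly, obtaining $\mathbb{E}[e^{tX^2}]\le 1+4t\sigma_X^2/(1-2t\sigma_X^2)$ for $0\le t<1/(2\sigma_X^2)$, restricting to $|t|\le 1/(4\sigma_X^2)$, and then combining with $\mathbb{E}[X^2]=O(\sigma_X^2)$ and a short convexity argument to handle the centering; the moment route is cleaner to write up.
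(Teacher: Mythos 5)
Your proof is correct, but it takes a more self-contained route than the paper. The paper disposes of this lemma by citing the Orlicz-norm machinery in Vershynin's notes: Lemma~5.5 there gives $\|X\|_{\psi_2}=O(\sigma_X)$, Lemma~5.14 gives $\|X^2\|_{\psi_1}=2\|X\|_{\psi_2}^2=O(\sigma_X^2)$, and the discussion preceding Definition~5.13 converts the $\psi_1$ bound into the $(\tau_X,b_X)$ parameters. What you have done is unpack exactly that chain of equivalences by hand: the sub-Gaussian tail of $X$ becomes an exponential tail for $X^2$, integration yields the factorial moment growth $\mathbb{E}[(X^2)^k]\le(c\sigma_X^2)^k k!$ (which is the content of Vershynin's Lemma~5.14), and the Bernstein-type moment condition is then equivalent to the MGF bound defining sub-exponentiality. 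The computations check out — the substitution in the moment integral produces $2(2\sigma_X^2)^k k!$, and the convexity step $\mathbb{E}[|X^2-\mathbb{E}[X^2]|^k]\le 2^k\mathbb{E}[(X^2)^k]$ is valid — so the parameters genuinely come out as $O(\sigma_X^2)$ with no hidden factors. Your version has two advantages over the paper's: it is self-contained rather than an appeal to external lemmas stated in a different formalism, and it explicitly handles the centering issue ($X^2$ has nonzero mean, so the definition of sub-exponential used in this appendix technically applies to $X^2-\mathbb{E}[X^2]$), which the paper's one-line citation glosses over. The cost is length; the paper's citation-based argument is three sentences.
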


\begin{proof}
The result follows from Lemma 5.5, Lemma 5.14, and the discussion preceding Definition 5.13 in~\cite{vershynin2010introduction}. In particular, using the notation in~\cite{vershynin2010introduction}, by Lemma 5.5, we have that $X$ satisfies $\|X\|_{\psi_2}=O(\sigma_X)$. Then, by Lemma 5.14, we have that $\|X^2\|_{\psi_1}=2\|X\|_{\psi_2}^2=O(\sigma_X^2)$. Finally, by the discussion preceding Definition 5.13, we have that $X^2$ is $(\tau_X,b_X)$-sub-exponential with parameters $\tau_X,b_X=O(\|X^2\|_{\psi_1})=O(\sigma_X^2)$. The claim follows.
\end{proof}

\begin{lemma}
\label{lem:subgaussianprodbound}
Let $X$ and $Y$ be $\sigma_X$-sub-Gaussian, respectively. Then, $Z=XY$ is $(\tau_Z,b_Z)$-sub-exponential, where $\tau_Z,b_Z=O(\sigma_X^2)$.
\end{lemma}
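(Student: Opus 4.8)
The plan is to reduce the product $XY$ — whose factors may well be dependent — to the squares $X^2$ and $Y^2$, for which Lemma~\ref{lem:subgaussiansquarebound} already supplies sub-exponential control, and then to feed the resulting bound into Lemma~\ref{lem:subexponentialbound}, exactly as is done in the proof of Theorem~\ref{thm:pgupper}. The only elementary ingredient is Young's inequality in the pointwise form $|XY|\le\tfrac12(X^2+Y^2)$, which holds without any independence assumption.

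Concretely, I would proceed as follows. First, center the squares: write $U=X^2-\mathbb{E}[X^2]$ and $V=Y^2-\mathbb{E}[Y^2]$. Applying Lemma~\ref{lem:subgaussiansquarebound} to $X$ and to $Y$ (both $\sigma_X$-sub-Gaussian) gives that each of $U,V$ is $(\tau,b)$-sub-exponential with $\tau,b=O(\sigma_X^2)$, so the $\mathbb{R}^2$-valued vector $W=(U,V)$ is $(\tau,b)$-sub-exponential with the same parameters. Second, bound the mean of the product: a $\sigma_X$-sub-Gaussian variable has $\mathbb{E}[X^2]\le\sigma_X^2$, so $\mathbb{E}[X^2]+\mathbb{E}[Y^2]\le 2\sigma_X^2$ and, by Cauchy--Schwarz, $|\mathbb{E}[XY]|\le\sqrt{\mathbb{E}[X^2]\,\mathbb{E}[Y^2]}\le\sigma_X^2$ (in particular $\mathbb{E}[XY]$ is finite). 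Combining these with Young's inequality,
\[
\bigl|XY-\mathbb{E}[XY]\bigr|\le|XY|+\sigma_X^2\le\tfrac12(X^2+Y^2)+\sigma_X^2\le\tfrac12\|W\|_1+2\sigma_X^2 .
\]
Third, apply Lemma~\ref{lem:subexponentialbound} with the substitutions $Y\leftarrow XY-\mathbb{E}[XY]$, $X\leftarrow W\in\mathbb{R}^2$, $A=\tfrac12$, $B=2\sigma_X^2$ (so $d=2$, a constant). This yields that $Z-\mathbb{E}[Z]=XY-\mathbb{E}[XY]$ is $(\tau_Z,b_Z)$-sub-exponential with $\tau_Z,b_Z=O\bigl(A(\tau+b)\cdot 2\log 2+B\bigr)=O(\sigma_X^2)$, which is the claim (under the same convention used throughout this appendix, that ``$Z$ is $(\tau_Z,b_Z)$-sub-exponential'' refers to the centered variable).

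An equivalent route, which more closely mirrors the proof of Lemma~\ref{lem:subgaussiansquarebound}, works directly with Orlicz norms: the pointwise bound $e^{xy}\le e^{(x^2+y^2)/2}\le\tfrac12(e^{x^2}+e^{y^2})$ gives the product inequality $\|XY\|_{\psi_1}\lesssim\|X\|_{\psi_2}\|Y\|_{\psi_2}$; Lemma~5.5 of~\cite{vershynin2010introduction} gives $\|X\|_{\psi_2},\|Y\|_{\psi_2}=O(\sigma_X)$, hence $\|XY\|_{\psi_1}=O(\sigma_X^2)$; a centering lemma keeps $\|XY-\mathbb{E}[XY]\|_{\psi_1}=O(\sigma_X^2)$; and the discussion preceding Definition~5.13 of~\cite{vershynin2010introduction} converts this into the desired $(\tau_Z,b_Z)$-sub-exponential bound with $\tau_Z,b_Z=O(\sigma_X^2)$. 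In either route, the only genuinely delicate point — and the one worth stating carefully — is that $XY$ need not have zero mean and $X,Y$ need not be independent, so one cannot argue via a product of moment generating functions: the reduction must pass through Young's inequality, and one must track the centering correction $|\mathbb{E}[XY]|\le\sigma_X^2$; everything else is a direct invocation of lemmas already established.
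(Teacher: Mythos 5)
Your proof is correct, but it takes a genuinely different route from the paper's. The paper uses the polarization identity $XY=\frac{(X+Y)^2-(X-Y)^2}{4}$: since $X+Y$ and $X-Y$ are $O(\sigma_X)$-sub-Gaussian, Lemma~\ref{lem:subgaussiansquarebound} makes their squares $O(\sigma_X^2)$-sub-exponential, and $Z$ is then an exact fixed linear combination of two sub-exponential variables, so its parameters simply add to $O(\sigma_X^2)$---no tail-bound machinery and no separate bookkeeping of $\mathbb{E}[XY]$ is needed, because the identity is exact and the centering is absorbed into the two squared terms. You instead use the one-sided bound $|XY|\le\frac{1}{2}(X^2+Y^2)$ (Young's inequality), which forces the argument through the norm-domination Lemma~\ref{lem:subexponentialbound} with $d=2$ and requires you to track the centering correction $|\mathbb{E}[XY]|\le\sigma_X^2$ explicitly. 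Both arguments rest on the same core reduction to Lemma~\ref{lem:subgaussiansquarebound}, and neither requires independence of $X$ and $Y$. The paper's version is shorter and loses nothing quantitatively; yours is slightly more robust in that it needs only an upper bound on $|XY|$ rather than an exact algebraic identity, and your explicit handling of the possibly nonzero mean of $XY$ addresses a point the paper's terse proof glosses over. Your alternative Orlicz-norm route via $\|XY\|_{\psi_1}\lesssim\|X\|_{\psi_2}\|Y\|_{\psi_2}$ is also standard and matches the style in which Lemma~\ref{lem:subgaussiansquarebound} itself is proved.
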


\begin{proof}
Note that
\begin{align*}
Z=XY=\frac{(X+Y)^2-(X-Y)^2}{4}.
\end{align*}
By Lemma~\ref{lem:subgaussiansquarebound}, we have $X+Y$ and $X-Y$ are $(\tau,b)$-sub-exponential for $\tau,b=O(\sigma_X^2)$, so $Z$ is $\tau_Z,b_Z$-sub-exponential, for $\tau_Z,b_Z=O(\tau+b)=O(\sigma_X^2)$, as claimed.
\end{proof}

\begin{lemma}
\label{lem:subexponentialbound}
Let $X$ be a $(\tau_X,b_X)$-sub-exponential random vector over $\mathbb{R}^d$, and let $Y$ be a random vector over $\mathbb{R}^{d'}$ satisfying
\begin{align*}
\|Y\|\le A\|X\|_1+B,
\end{align*}
where $A,B\in\mathbb{R}_+$. Then $Y$ is $(\tau_Y,b_Y)$-sub-exponential, where $\tau_Y,b_Y=O(A(\tau_X+b_X)d\log d+B)$.
\end{lemma}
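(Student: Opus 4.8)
The plan is to mirror the proof of Lemma~\ref{lem:subgaussianbound}, replacing Gaussian tails with sub-exponential tails throughout, so that the argument splits into (i) a one-variable tail bound transferred to each coordinate of $Y$ via a union bound, (ii) a case analysis turning the union-bound estimate into a clean sub-exponential tail, and (iii) a conversion from tail decay back to the MGF definition. For step (i), taking $n=1$ in the sub-exponential concentration inequality stated above shows that each coordinate $X_i$ of the $(\tau_X,b_X)$-sub-exponential vector $X$ satisfies $\Pr[|X_i|\ge t]\le 2\exp(-\tfrac12\min\{t^2/\tau_X^2,\ t/b_X\})$. Then, exactly as in Lemma~\ref{lem:subgaussianbound}, for any coordinate $Y_i$ of $Y$ and any $t>B$,
\[
\Pr[|Y_i|\ge t]\le\Pr[\|Y\|\ge t]\le\Pr\!\left[\|X\|_1\ge\tfrac{t-B}{A}\right]\le\sum_{i=1}^d\Pr\!\left[|X_i|\ge\tfrac{t-B}{Ad}\right]\le 2d\exp\!\left(-\tfrac12\min\Big\{\tfrac{(t-B)^2}{A^2d^2\tau_X^2},\ \tfrac{t-B}{Adb_X}\Big\}\right),
\]
and $\Pr[|Y_i|\ge t]\le1$ trivially for $t\le B$.

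For step (ii), I would set $\rho=A(\tau_X+b_X)d\log d+B$, fix a large absolute constant $C$, and show that $\Pr[|Y_i|\ge t]\le 2\exp(-\tfrac12\min\{t^2/(C\rho)^2,\ t/(C\rho)\})$ by the same three-regime argument as in Lemma~\ref{lem:subgaussianbound}. When $t$ is below $\max\{2B,\ \mathrm{const}\cdot Ad\tau_X\sqrt{\log d},\ \mathrm{const}\cdot Adb_X\log d\}$, the right-hand side exceeds $1$ and the trivial bound suffices. When $t$ is large and $(t-B)/(Ad)\le\tau_X^2/b_X$ (the quadratic part of the $\min$), we use $(t-B)^2\ge(t/2)^2$ and absorb the $\log(2d)$ factor into the exponent, exactly as in the sub-Gaussian case, producing a tail $\lesssim\exp(-t^2/(C\rho)^2)$; note $\sqrt{\log d}\le\log d$ keeps the resulting constant within $O(\rho)$. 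When instead $(t-B)/(Ad)>\tau_X^2/b_X$, the estimate has the linear form $2d\exp(-(t-B)/(2Adb_X))$, and the same manipulation ($t-B\ge t/2$, then absorb $\log(2d)$ since $t\gtrsim Adb_X\log d$) gives $\lesssim\exp(-t/(C\rho))$. Combining the three regimes yields the claimed sub-exponential-type tail for each coordinate $Y_i$, $i\in[d']$, with a single pair of parameters $O(\rho)$.

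For step (iii), I would invoke the standard equivalence between sub-exponential tail decay and the MGF-based definition used in the paper --- the analogue for sub-exponential variables of the conversion (Note 5.4.2 of~\cite{lattimore2018bandit}) used in Lemma~\ref{lem:subgaussianbound}, e.g.\ via the $\psi_1$ Orlicz-norm characterization in~\cite{vershynin2010introduction}: a centered random variable whose tail is bounded by $2\exp(-\tfrac12\min\{t^2/\tau^2,\ t/b\})$ is $(O(\tau),O(b))$-sub-exponential. Applying this to each coordinate of $Y$ (recentering $Y_i$ if necessary, which at worst doubles the constants) shows $Y$ is $(\tau_Y,b_Y)$-sub-exponential with $\tau_Y,b_Y=O(A(\tau_X+b_X)d\log d+B)$, as claimed.

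The main obstacle is the case analysis in step (ii): unlike Lemma~\ref{lem:subgaussianbound}, the sub-exponential tail has both a quadratic core and a linear tail, so one must carry the two branches of $\min\{\cdot,\cdot\}$ through the union bound simultaneously, verify that the factor $2d$ can be absorbed in each branch (which is what forces the extra $\log d$ rather than $\sqrt{\log d}$ in the linear term), and check that the thresholds defining ``$t$ small'' are mutually consistent so that the trivial bound $\Pr[\cdot]\le1$ covers everything not handled by the other two cases. Everything else is a direct transcription of the sub-Gaussian argument.
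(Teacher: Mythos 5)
Your proposal is correct, and it shares the paper's overall skeleton (coordinatewise tail bound on $X$, union bound over the $d$ coordinates to control $\|X\|_1$, a three-regime case analysis with threshold of order $A(\tau_X+b_X)d\log d + B$, and a final conversion from tail decay back to the moment-generating-function definition). The substantive difference is in how the sub-exponential hypothesis on $X_i$ enters. You carry the two-branch Bernstein tail $2\exp(-\tfrac12\min\{t^2/\tau_X^2,\,t/b_X\})$ through the entire argument, which forces you to track the quadratic and linear branches separately in the case analysis --- this is exactly the complication you flag as the main obstacle. The paper instead collapses the hypothesis at the outset: setting $\tilde{\tau}_X=\max\{\tau_X,b_X\}$ and evaluating the MGF bound at $t=\tilde{\tau}_X^{-1}$ gives $\mathbb{E}[e^{X_i/\tilde{\tau}_X}]\le e$, hence $\|X_i\|_{\psi_1}=O(\tilde{\tau}_X)$ and a \emph{purely exponential} tail $\Pr[|X_i|\ge t]\le 3e^{-t/K}$ with $K=O(\tilde{\tau}_X)$. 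This reduces the case analysis to a single-branch argument that is a line-for-line transcription of Lemma~\ref{lem:subgaussianbound}, and the return trip to the $(\tau_Y,b_Y)$ parameters goes through the $\psi_1$-norm equivalence in~\cite{vershynin2010introduction}. Since the lemma only asserts $\tau_Y,b_Y=O(A(\tau_X+b_X)d\log d+B)$ with both parameters of the same order, nothing is lost by the paper's coarser reduction, and its proof is shorter; your version is more faithful to the two-parameter structure and would in principle let you separate the Gaussian core from the exponential tail of $Y$, but that extra information is not needed here. One small point in your favor: you explicitly address recentering $Y_i$ (the paper's definition requires mean zero), which the paper's own proof glosses over; your observation that this at worst doubles the constants is correct since $|\mathbb{E}[Y_i]|\le A\,\mathbb{E}[\|X\|_1]+B=O(A(\tau_X+b_X)d+B)$.
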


\begin{proof}
We use Lemma 5.14 and the discussion preceding Definition 5.13 in~\cite{vershynin2010introduction}. In particular, let $\tilde{\tau}_X=\max\{\tau_X,b_X\}$; then, from the definition of sub-exponential random variables with $t=\tilde{\tau}_X^{-1}$, we have
\begin{align*}
\mathbb{E}\left[e^{\frac{X_i}{\tilde{\tau}}}\right]\le\mathbb{E}\left[e^{\frac{t^2}{2\tilde{\tau}_X^2}}\right]\le e
\end{align*}
for each $i\in[d]$. Thus, using the notation in~\cite{vershynin2010introduction}, so by the discussion preceding the Definition 5.13 in~\cite{vershynin2010introduction}, we have $X_i$ satisfies $\|X_i\|_{\psi_1}=O(\tilde{\tau}_X)$, and furthermore satisfies
\begin{align*}
\text{Pr}[|X_i|\ge t]\le3e^{-t/K}
\end{align*}
for all $t\in\mathbb{R}_+$, where $K=O(\|X_i\|_{\psi_1})=O(\tilde{\tau}_X)$. Thus, for each $i\in[d]$, we have
\begin{align*}
\text{Pr}[|Y_i|\ge t]
\le\text{Pr}\left[\|X\|_1\ge\frac{t-B}{A}\right]
\le\sum_{i=1}^d\text{Pr}\left[|X_i|\ge\frac{t-B}{Ad}\right]
\le de^{1-\frac{t-B}{AKd}}.
\end{align*}
Now, let
\begin{align*}
\tilde{\tau}_Y=\max\{4AKd\log d,2B\}.
\end{align*}
We consider three cases. First, suppose that $t\ge\max\{4AKd\log d,2B\}$. Then, $t-B\ge t/2$, so
\begin{align*}
\text{Pr}[|Y_i|\ge t]\le de^{1-\frac{t}{2AKd}}=e^{1-\frac{t-2AKd\log d}{2AKd}}.
\end{align*}
Furthermore, $t-2AKd\log d\ge t/2$, so
\begin{align*}
\text{Pr}[|Y_i|\ge t]\le e^{1-\frac{t-2AKd\log d}{2AKd}}\le e^{1-\frac{t}{4AKd}}\le e^{1-\frac{t}{\tilde{\tau}_Y}}.
\end{align*}
Second, if $t\le2B$, then
\begin{align*}
e^{1-\frac{t}{\tilde{\tau}_Y}}\ge e^{1-\frac{2B}{\tilde{\tau}_Y}}\ge1,
\end{align*}
so
\begin{align*}
\text{Pr}[|Y_i|\ge t]\le1\le e^{1-\frac{t}{\tilde{\tau}_Y}}.
\end{align*}
Third, if $t\le4AKd\log d$, then
\begin{align*}
e^{1-\frac{t}{\tilde{\tau}_Y}}\ge e^{1-\frac{4AKd\log d}{\tilde{\tau}_Y}}\ge1,
\end{align*}
so
\begin{align*}
\text{Pr}[|Y_i|\ge t]\le1\le e^{1-\frac{t}{\tilde{\tau}_Y}}.
\end{align*}
As a consequence, by the discussion preceding Definition 5.13 in~\cite{vershynin2010introduction}, we have $Y_i$ satisfies $\|Y_i\|_{\psi_1}=O(\tilde{\tau}_Y)$. Thus, by Lemma 5.15 in~\cite{vershynin2010introduction}, we have that $Y_i$ is $(\tau_Y,b_Y)$-sub-exponential, where
\begin{align*}
\tau_Y,b_Y=O(\|Y_i\|_{\psi_1})=O(\tilde{\tau}_Y)=O(AKd\log d+B)=O(A\tilde{\tau}_Xd\log d+B)=O(A(\tau_X+b_X)d\log d+B).
\end{align*}
The claim follows.
\end{proof}

\begin{lemma}
\label{lem:chisquaredlower}
Given $\sigma\in\mathbb{R}_+$, let
\begin{align*}
x=\frac{(x^{(1)})^2+...+(x^{(n)})^2}{n},
\end{align*}
where $x^{(1)},...,x^{(n)}\sim\mathcal{N}(0,\sigma^2)$ i.i.d., and let $\mu_x=\mathbb{E}_{p(x)}[x]=\sigma^2$. Then, we have
\begin{align*}
\text{Pr}_{p(x)}[x\ge\mu_x+\epsilon]\ge\frac{1}{e^2\sqrt{2n}}e^{-\frac{n\epsilon}{2\sigma^2}}.
\end{align*}
\end{lemma}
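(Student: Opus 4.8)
The plan is to reduce the claim to a lower bound on the upper tail of a chi-squared random variable and then estimate that tail directly from its density. Since the $x^{(i)}/\sigma$ are independent standard Gaussians, $Z := nx/\sigma^2 = \sum_{i=1}^n (x^{(i)}/\sigma)^2$ has a $\chi_n^2$ distribution, and (using $\mu_x=\sigma^2$) the event $\{x \ge \mu_x + \epsilon\}$ coincides with $\{Z \ge n + s\}$ for $s := n\epsilon/\sigma^2 \ge 0$. So it suffices to show
\begin{align*}
\Pr[Z \ge n+s] \;\ge\; \frac{1}{e^2\sqrt{2n}}\, e^{-s/2}.
\end{align*}

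First I would recall the $\chi_n^2$ density $f_n(z) = \frac{1}{2^{n/2}\Gamma(n/2)}\, z^{n/2-1} e^{-z/2}$ and note that $f_n'(z)$ has the same sign as $(n-2)-z$, so $f_n$ is non-increasing on $[n-2,\infty)$, and in particular on $[n+s,\infty)$ since $s \ge 0$. Integrating over the unit window $[n+s,\, n+s+1]$ and lower-bounding the integral by its integrand at the far endpoint gives $\Pr[Z \ge n+s] \ge f_n(n+s+1)$. Cancelling the common factor $e^{-s/2}$, it remains to verify
\begin{align*}
\frac{(n+s+1)^{n/2-1}\, e^{-(n+1)/2}}{2^{n/2}\,\Gamma(n/2)} \;\ge\; \frac{1}{e^2\sqrt{2n}}.
\end{align*}
For $n \ge 2$ the exponent $n/2-1$ is nonnegative, so $(n+s+1)^{n/2-1} \ge n^{n/2-1}$ and the inequality reduces to $2^{n/2}\Gamma(n/2) \le e^2\sqrt{2}\, n^{(n-1)/2} e^{-(n+1)/2}$. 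I would close this with the Stirling estimate $\Gamma(x) \le \sqrt{2\pi}\, x^{x-1/2} e^{-x} e^{1/(12x)}$ at $x = n/2$; after cancellation the whole inequality collapses to the numerical fact $\sqrt{2\pi}\, e^{1/(6n)} \le e^{3/2}$, which holds for every $n \ge 1$. The remaining small-$n$ cases --- notably $n=1$, where $z^{n/2-1}$ is decreasing and the window argument must be adjusted --- I would dispatch directly using the Gaussian tail lower bound already available from Lemma~\ref{lem:gaussianlower}, over the range of $\epsilon$ relevant to the application in Theorem~\ref{thm:pgupper}.

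The step I expect to be the main obstacle is the explicit constant bookkeeping: choosing the integration window and invoking Stirling carefully enough that the leftover constants assemble into exactly the stated prefactor $\frac{1}{e^2\sqrt{2n}}$ (rather than something larger), together with the separate treatment of small $n$. The structural ingredients --- the reduction to $\chi_n^2$, unimodality of its density, and bounding an integral below by the integrand at an endpoint --- are routine.
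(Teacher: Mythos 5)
Your proposal is correct and follows essentially the same route as the paper's proof: both reduce the claim to a lower bound on the upper tail of a $\chi_n^2$ random variable and estimate that tail directly from the density via Stirling's approximation. The only differences are cosmetic---the paper lower-bounds the density pointwise by $\frac{1}{2e^2\sqrt{k}}e^{k-z/2}$ for $z\ge 2k$ and integrates the exponential tail to infinity, restricting to even $n=2k$ (so your treatment of general $n\ge2$ via $\Gamma(n/2)$, with the $n=1$ case deferred, is if anything slightly more complete), whereas you integrate over a unit window and evaluate the density at its endpoint.
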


\begin{proof}
Let $z=(z^{(1)})^2+...+(z^{(n)})^2$ be the sum of the squares of $n$ i.i.d. standard Gaussian random variables $z^{(1)},...,z^{(n)}\sim\mathcal{N}(0,1)$. We assume that $n=2k$ is even. Then, $z$ is distributed according to the $\chi_{2k}^2$ distribution, which has density function
\begin{align*}
p_{2k}(z)=\frac{1}{2^k(k-1)!}z^{k-1}e^{-z},
\end{align*}
and mean $\mu_{2k}=2k$. For $z\ge\mu_{2k}=2k$, we have
\begin{align*}
p_{2k}(z)
\ge\frac{1}{2^k(k-1)!}(2k)^{k-1}e^{-z/2}
=\frac{1}{2}\cdot\frac{k^{k-1}}{(k-1)!}e^{-z/2}
\ge\frac{1}{2}\cdot\frac{k^{k-1}}{(k-1)^{k-1/2}e^{-k+2}}e^{-z/2}
\ge\frac{1}{2e^2\sqrt{k}}e^{k-z/2},
\end{align*}
where the second inequality follows from a result
\begin{align*}
n!\le n^{n+1/2}e^{1-n}
\end{align*}
based on Stirling's approximation~\cite{robbins1955remark}. Thus, for any $\epsilon\in\mathbb{R}_+$, we have
\begin{align*}
\text{Pr}_{z\sim\chi_{2k}^2}[z\ge\mu_{2k}+\epsilon]\ge\int_{\mu_{2k}+\epsilon}^{\infty}\frac{1}{2e^2\sqrt{k}}e^{k-z/2}=\frac{1}{2e^2\sqrt{k}}e^{k-(\mu_{2k}+\epsilon)/2}=\frac{1}{2e^2\sqrt{k}}e^{-\epsilon/2}.
\end{align*}
Finally, for $x=((x^{(1)})^2+...+(x^{(n)})^2)/n$, where $x^{(1)},...,x^{(n)}\sim\mathcal{N}(0,\sigma^2)$ i.i.d., note that $x=\frac{\sigma^2z}{n}$ and
\begin{align*}
\mu_x=\mathbb{E}_{p(x)}[x]=\frac{\sigma^2\mu_n}{n}=\sigma^2,
\end{align*}
so we have
\begin{align*}
\text{Pr}_{p(x)}[x\ge\mu_x+\epsilon]
=\text{Pr}_{z\sim\chi_n^2}\left[z\ge\mu_n+\frac{n\epsilon}{\sigma^2}\right]
\ge\frac{1}{e^2\sqrt{2n}}e^{-\frac{n\epsilon}{2\sigma^2}}.
\end{align*}
The claim follows.
\end{proof}

\section{Experimental Results}
\label{sec:expappendix}

We show enlarged versions of the plots from Figure~\ref{fig:exp}:
\begin{center}
\includegraphics[width=0.7\columnwidth]{mb.pdf} \\
Model-Based Algorithm \\ \vspace{0.05in}
\includegraphics[width=0.7\columnwidth]{fd.pdf} \\
Finite-Differences Algorithm \\\vspace{0.05in}
\includegraphics[width=0.7\columnwidth]{pg.pdf} \\
Policy Gradient Theorem Algorithm
\end{center}

\end{document}